    \definecolor{darkcerulean}{rgb}{0.03, 0.27, 0.49}
    \definecolor{smokyblack}{rgb}{0.06, 0.05, 0.03}
    \definecolor{warmblack}{rgb}{0.0, 0.26, 0.26}
    \definecolor{cobalt}{rgb}{0.0, 0.28, 0.67}
    \definecolor{darkerred}{RGB}{139, 0, 0}
    \definecolor{darkergreen}{RGB}{0, 100, 0}
    \definecolor{warmred}{RGB}{205, 92, 92} 
    \definecolor{warmgreen}{RGB}{34, 139, 34} 
    \definecolor{warmblue}{RGB}{70, 130, 180} 
    \definecolor{arrowredfig1}{HTML}{B85450} 
    \definecolor{expertfig1}{HTML}{6A9153} 
    \definecolor{mixturefig1}{HTML}{6C8EBF} 
    \newcommand{\colorwblk}[1]{\textcolor{warmblack}{#1}}
    \newcommand{\colorwr}[1]{\textcolor{warmred}{#1}}
    \newcommand{\colorwg}[1]{\textcolor{warmgreen}{#1}}
    \newcommand{\colorwb}[1]{\textcolor{warmblue}{#1}}
    \newcommand{\colordr}[1]{\textcolor{darkerred}{#1}}
\DeclareRobustCommand\onedot{\futurelet\@let@token\@onedot}
\def\@onedot{\ifx\@let@token.\else.\null\fi\xspace}
\newcommand{\thickmidrule}{\midrule[1.1pt]} 
    \tikzstyle{new style 0}=[fill={rgb,255: red,255; green,94; blue,247}, draw=black, shape=circle]
    \tikzstyle{pointy}=[fill=white, draw=black, shape=circle]
    \tikzstyle{pointy}=[->]
\newcommand{\pushright}[1]{\ifmeasuring@#1\else\omit\hfill$\displaystyle#1$\fi\ignorespaces}
\newcommand{\pushleft}[1]{\ifmeasuring@#1\else\omit$\displaystyle#1$\hfill\fi\ignorespaces}
\renewcommand{\phi}{\varphi}
\newcommand{\R}{\mathbb{R}}
\newcommand{\eqdef}{\ensuremath{\,\raisebox{-1pt}{$\stackrel{\mbox{\upshape\tiny def.}}{=}$}}\,}
\newtheorem{proposition}{Proposition}
\newtheorem{theorem}{Theorem}
\newtheorem{lemma}{Lemma}
\newtheorem{assumptions}{Assumptions}[section]
\theoremstyle{remark}
\NewDocumentCommand{\luca}{mo}{
    \IfValueF{#2}{
                        {{\scriptsize
                            \textcolor{green}{ 
                            \textbf{L:}
                            \textit{{#1}}
                            }
                        }}
        }
    \IfValueT{#2}{
                        \marginnote{{\scriptsize
                            \textcolor{green}{ 
                            \textbf{L:}
                            \textit{{#1}}
                            }
                        }}
        }
                    }
\NewDocumentCommand{\giulia}{mo}{
    \IfValueF{#2}{
                        {{\scriptsize
                            \textcolor{red}{ 
                            \textbf{GL:}
                            \textit{{#1}}
                            }
                        }}
        }
    \IfValueT{#2}{
                        \marginnote{{\scriptsize
                            \textcolor{red}{ 
                            \textbf{GL:}
                            \textit{{#1}}
                            }
                        }}
        }
}
\NewDocumentCommand{\anastasis}{mo}{
    \IfValueF{#2}{
                        {{\scriptsize
                            \textcolor{violet}{ 
                            \textbf{A:}
                            \textit{{#1}}
                            }
                        }}
        }
    \IfValueT{#2}{
                        \marginnote{{\scriptsize
                            \textcolor{violet}{ 
                            \textbf{A:}
                            \textit{{#1}}
                            }
                        }}
        }
                    }
\NewDocumentCommand{\cody}{mo}{
    \IfValueF{#2}{
                        {{\scriptsize
                            \textcolor{orange}{ 
                            \textbf{A:}
                            \textit{{#1}}
                            }
                        }}
        }
    \IfValueT{#2}{
                        \marginnote{{\scriptsize
                            \textcolor{orange}{ 
                            \textbf{A:}
                            \textit{{#1}}
                            }
                        }}
        }
                    }
\NewDocumentCommand{\yannick}{mo}{
    \IfValueF{#2}{
                        {{\scriptsize
                            \textcolor{cyan}{ 
                            \textbf{Y:}
                            \textit{{#1}}
                            }
                        }}
        }
    \IfValueT{#2}{
                        \marginnote{{\scriptsize
                            \textcolor{cyan}{ 
                            \textbf{Y:}
                            \textit{{#1}}
                            }
                        }}
        }
                    } 
\definecolor{darkgreen}{rgb}{0.0, 0.2, 0.13}
\NewDocumentCommand{\xuwei}{mo}{
    \IfValueF{#2}{
                        {{\scriptsize
                            \textcolor{darkgreen}{ 
                            \textbf{X:}
                            \textit{{#1}}
                            }
                        }}
        }
    \IfValueT{#2}{
                        \marginnote{{\scriptsize
                            \textcolor{darkgreen}{ 
                            \textbf{X:}
                            \textit{{#1}}
                            }
                        }}
        }
                    }
\newcounter{termcounter}
\renewcommand{\thetermcounter}{\Roman{termcounter}}
\crefname{term}{term}{terms}
\def\term{\@ifnextchar[\term@optarg\term@noarg}
\def\term@optarg[#1]#2{%
  \textup{#1}%
  \def\@currentlabel{#1}%
  \def\cref@currentlabel{[][2147483647][]#1}%
  \cref@label[term]{#2}}
\def\term@noarg#1{%
  \refstepcounter{termcounter}%
  \textup{(\thetermcounter)}%
  \cref@label[term]{#1}}
\crefname{lemma}{lemma}{lemmata}
\Crefname{lemma}{Lemma}{Lemmata}
\crefname{assumption}{assumption}{assumptions}
\Crefname{assumption}{Assumption}{Assumptions}
\crefname{example}{Example}{Examples}
\crefname{proposition}{Proposition}{Proposition}
\newcommand\DoToC{%
  \startcontents
  \printcontents{}{1}{\textbf{Appendix Contents}\vskip3pt\hrule\vskip5pt}
  \vskip3pt\hrule\vskip5pt
}
\definecolor{MidnightBlue}{RGB}{25,25,112}
\definecolor{MidnightBlueComplementingGreen}{RGB}{25,112,25}
\definecolor{MidnightBlueComplementingPurple}{RGB}{112,25,112}
\definecolor{amaranth}{rgb}{0.9, 0.17, 0.31}
\definecolor{MidnightBlueComplementingRed}{RGB}{112,25,69}
\definecolor{coolblack}{rgb}{0.0, 0.18, 0.39}
\definecolor{deepjunglegreen}{rgb}{0.0, 0.29, 0.29}
\definecolor{applegreen}{rgb}{0.55, 0.71, 0.0}
\definecolor{WowColor}{rgb}{.75,0,.75}
\definecolor{MildlyAlarming}{rgb}{0.85,0.25,0.1}
\definecolor{SubtleColor}{rgb}{0,0,.50}
\definecolor{SubtleColor2}{rgb}{0.6,0.21,.50}
\definecolor{lasallegreen}{rgb}{0.03, 0.47, 0.19}
\newcounter{margincounter}
\NewDocumentCommand{\AK}{mo}{
    \IfValueF{#2}{
        {{\scriptsize
            \textcolor{violet}{ 
            \textbf{A:}
            \textit{{#1}}
            }
        }}
    }
    \IfValueT{#2}{
        \marginnote{{\scriptsize
            \textcolor{violet}{ 
            \textbf{A:}
            \textit{{#1}}
            }
        }}
    }
}
\definecolor{darkgreen}{rgb}{0.0, 0.2, 0.13}
\newcounter{defn}[section] \setcounter{defn}{0}
\renewcommand{\thedefn}{\arabic{section}.\arabic{defn}}
\newcounter{theo}[section] \setcounter{theo}{0}
\renewcommand{\thetheo}{\arabic{section}.\arabic{theo}}
\newcounter{lem}[section] \setcounter{lem}{0}
\renewcommand{\thelem}{\arabic{lem}}
\newcounter{prf}[section]\setcounter{prf}{0}
\renewcommand{\theprf}{\arabic{section}.\arabic{prf}}
\theoremstyle{remark}
\newtheorem*{remark}{Remark}
\newcommand{\algoacronym}[1][MoE-F]{#1 }
\title{Filtered not Mixed: Stochastic Filtering-Based Online Gating for Mixture of Large Language Models}
\newcommand{\emailsuffix}[2]{\thanks{Emails: \texttt{\{#1\}@#2}}}
\author[2,4,6]{Raeid Saqur\thanks{Email: \texttt{raeidsaqur@cs.toronto.edu}}}
\author[3]{Anastasis Kratsios\thanks{Email: \texttt{kratsioa@mcmaster.ca}}}
\author[7]{Florian Krach\thanks{Email: \texttt{florian.krach@math.ethz.ch}}}
\author[1]{Yannick Limmer\emailsuffix{limmery, horvath}{maths.ox.ac.uk}}
\author[4]{Jacob-Junqi Tian\emailsuffix{jacob.tian, john.willes}{vectorinstitute.ai}}
\author[4]{John Willes}
\author[1]{Blanka Horvath}
\author[4,5]{Frank Rudzicz\thanks{Email: \texttt{frank@dal.ca}}}
\affil[1]{Oxford-Man Institute for Quantitative Finance, Department of Mathematics, University of Oxford}
\affil[2]{Department of Computer Science, University of Toronto}
\affil[3]{Department of Mathematics, McMaster University}
\affil[4]{Vector Institute}
\affil[5]{Faculty of Computer Science, Dalhousie University}
\affil[6]{Department of Computer Science, Princeton University}
\affil[7]{Department of Mathematics, ETH Z\"{u}rich}
\date{} 
\begin{document}
\maketitle

\begin{abstract}
We propose \textbf{\colorwblk{MoE-F}} -- a formalized mechanism for combining \bm{$N$} pre-trained expert Large Language Models (LLMs) in online time-series prediction tasks. MoE-F adaptively forecasts the optimal weighting of LLM predictions at each time step by leveraging the conditional information in each expert's running performance, enabling the best combination of experts for the next step prediction.
Diverging from static (learned) Mixture of Experts (MoE) methods, our approach employs time-adaptive stochastic filtering techniques to combine experts. By framing the expert selection problem as a finite state-space, continuous-time Hidden Markov model (HMM), we can leverage the \colorwblk{\emph{Wonham-Shiryaev}} filter. Our approach first constructs $N$ parallel filters corresponding to each $N$ individual LLMs. Each filter proposes its best combination of LLMs, given the information that they have access to. Subsequently, the $N$ filter outputs are optimally aggregated to maximize their robust predictive power, and this update is computed efficiently via a closed-form expression, thus generating our ensemble predictor.  Our contributions are: \textbf{(I)} the MoE-F algorithm -- deployable as a plug-and-play filtering harness over any heterogenous mixture of LLMs or specialized models, \textbf{(II)} theoretical optimality guarantees of the proposed filtering-based gating algorithm (via optimality guarantees for its parallel Bayesian filtering and its robust aggregation steps), and \textbf{(III)} empirical evaluation and ablative results using state of the art foundational and MoE LLMs on a real-world \textit{Financial Market Movement} task based on streaming news where MoE-F attains a \colorwblk{\emph{17\%} absolute and {48.5\%} relative} F1-score improvement over the best performing individual LLM expert. Further, we provide empirical evidence of substantial performance gains with \algoacronym over specialized models in the long-horizon time-series forecasting domain using electricity-grid datasets.   
\end{abstract}

\section{Introduction}\label{sec:intro}
\begin{figure}[!htp]
    \centering
    \includegraphics[width=.95\textwidth]{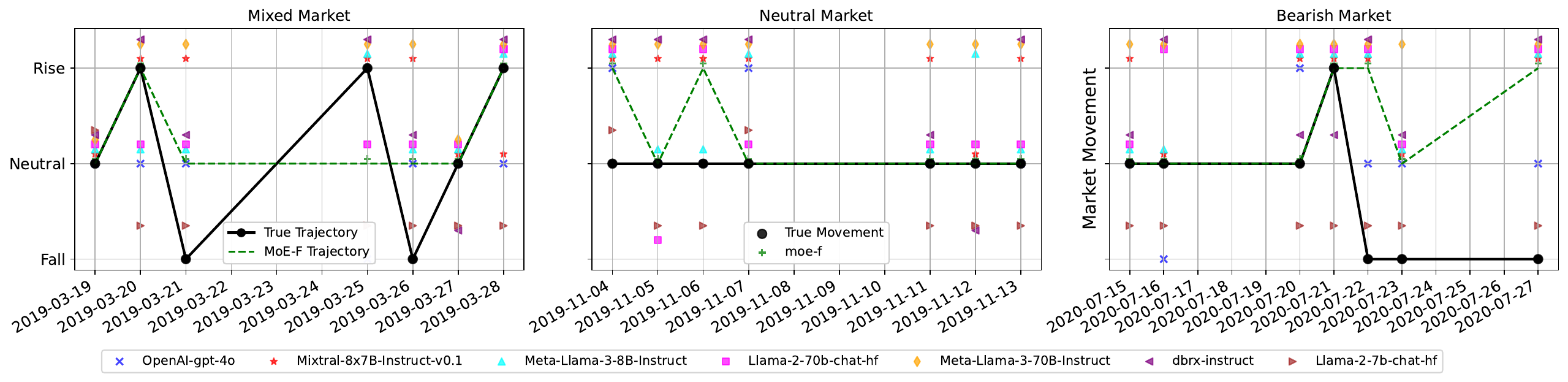} 
    \caption{A visualization of \algoacronym's application as a filtering harness. Depicts seven SOTA LLMs predicting market movement direction over three randomly sampled windows of seven (trading) days across varying market regimes --- \textbf{left}: \textit{mixed} market with high fluctuations, \textbf{middle}: \textit{neutral}, and \textbf{right}: \textit{bearish} market. 
    In all sub-plots, the ground-truth (market) trajectory is in \colorwblk{black}, and the filtered trajectory is depicted in \colorwg{dotted green}. All other experts' (Table~\ref{tab:moe-llms-nifty-results}) predictions are overlayed as scatter-plot points. No values for non-trading days.}
    \label{fig:trajectory_moe-f}
\end{figure}

Mixture of expert models (MoEs), such as \cite{liu2024deepseekv2,liu2024deepseek,guo2024deepseek} the seminal Switch Transformers~\citep{moe_switch_fedus2022switch}, and more recently Mixtral~\citep{moe_jiang2024mixtral}, Gemini~\citep{Gemini}, DBRX~\citep{dbrx_MosaicAI2024} and many others (e.g.~\citep{saad2023active,chowdhury2023patch,li2024merge,puigcerver2024from}), have taken a center stage in the generative AI zeitgeist since they allow the number of parameters in large language models (LLMs) to be scaled-up while maintaining a roughly constant computational cost. This is due to the sparse activation strategy employed by MoEs, wherein several offline experts are banked, and a gating network routes each input to a small subset of expert models when generating a prediction~\citep{moe_zhou2022mixture}.  Thus, only a few experts must be loaded into active memory at any given time, to generate a prediction from any novel input.  
Currently, most MoE pipelines are designed for static tasks, i.e.\ 
{{tasks without temporal structure; consequentially these pipelines are limited to}}
gating mechanisms which are \textit{constant-in-time}.
{{However, many prediction problems have a temporal structure which is not being leveraged by these classical \textit{constant-in-time} pipelines; examples include time-series data appearing in physics~\citep{mao2007stochastic}, finance~\citep{sharp1990stochastic}, decision science~\citep{mdp_feinberg2012handbook}, or most reservoir computing applications~\citep{esn_jaeger2002adaptive}.
}}

{{Dynamic prediction problems are fundamentally different from static ones, since each time a new instance arrives a prediction is generated by each expert, hence, the user progressively gains more information as to which expert models are the strongest predictors for a given task.  This information (observed measurements) can then be fed back to the MoE's pipeline and used to update its gating mechanisms to select the optimal combination of experts. 
The key challenge is that we cannot directly observe which is the true best expert(s) to use in predicting the target.  

We consider the finite state-space process, which selects the optimal expert, as the unobservable signal process.
The problem of best estimating this signal process, given the information from observable measurements, is precisely the continuous-time and finite state-space \textit{stochastic filtering problem} initially studied by~\cite{wonham1964some}. 
The solution to a generalized version of that component of our problem, considered in~\cite{liptser1977statistics}, is a so-called \textit{stochastic (optimal) filter}. It is a closed-form recursion updating the best estimate of the unobservable signal process (the best expert to use in our case) given the observable measurements (the performance of each expert), formalized by a precision for the conditional distribution of the signal process.  

Our \textit{Mixture-of-Experts Filter (MoE-F) mechanism} incorporates $N$ stochastic filters, implemented in parallel, to update the gating mechanism, which routes any new input of the mixture model to an optimal combination of our $N$ expert models \textit{given their measured historical performance}.  These $N$ predictions generated by the parallel stochastic filters are then 
robustly aggregated into a single robust mixture prediction by a mechanism similar to those used in PAC-Bayes theory, e.g.~\cite{alquier2008pac,rothfuss2021pacoh}.
In this way, our proposed MoE-F algorithm optimally predicts the best mixture of expert models while dynamically \textit{adapting} itself to the observed performance of each expert model. We emphasize that this type of dynamic-updating procedure is only possible due to the temporal structure of the time-series prediction tasks we are interested in.
}}

\paragraph{Contributions}
We construct an online gating mechanism (Algorithm~\ref{algo:MoEFAbridged}) for an online mixture of expert (LLM) models for time-series prediction tasks.  
Using tools from stochastic calculus and stochastic filtering theory, we prove the optimality of our online gating mechanism's first parallel Bayesian estimation stage (Theorem~\ref{thrm:filtering_equations__UnifiedFormulation}).  Using tools from the theory of Markov chains and from Gibbs-measures we demonstrate the optimality of the robust aggregation phase (Theorem~\ref{thrm:Optimal_Q_Matrix}).  

\paragraph{Outline}
Section~\S\ref{sec:prelim} covers the necessary background, including our probabilistic framework and continuous-time Markov chains theory. 
Section~\S\ref{sec:MoE-F} explicates the \colorwblk{MoE-F} algorithm. 
Section~\S\ref{s:Theoretical_Gauarantees} provides theoretical guarantees for the two phases of MoE-F algorithm, with proofs in the appendix.
Section~\S\ref{sec:experiments} demonstrates the large-scale application of our algorithm on real-world online classification and regression tasks. For classification, we evaluate the online prediction of financial market movements using the NIFTY dataset~\citep{fin-dataset_saqur2024nifty} and concurrent SOTA LLM experts. For regression, we showcase the filter's utility in long-term time-series forecasting (LTSF) using the ETTh1, ETTh2~\citep{tsf_zhou2021informer} datasets with concurrent specialized LTSF models.

\vspace{-1em}
\section{Preliminaries}
\label{sec:prelim}
\vspace{-0.5em}
Fix one continuously differentiable path $x_{\cdot}:[0,\infty)\to \R^d$ and a complete filtered probability space $(\Omega,\mathcal{F},\mathbb{F}\eqdef (\mathcal{F}_t)_{t\ge 0},\mathbb{P})$ with right-continuous filtration $\mathbb{F}$ taking values in $\mathbb{R}^D$ supporting a $1$-dimensional Brownian motion $W_{\cdot}\eqdef (W_t)_{t\ge 0}$.  We consider a $1$-dimensional \textit{target (stochastic) process} $Y_{\cdot}\eqdef (Y_t)_{t\ge 0}$, which we would like to predict with our MoE models $F$, where $F$ is as in~\S~\ref{sec:intro}.  
Assume that $Y_{\cdot}$ evolves according to the dynamics in Eq.~\eqref{eq:Y_agenti__mainform_specialcase}.  
We assume that $w_{\cdot}$ is a hidden Markov process with values in the standard basis $\{e_n\}_{n=1}^N$ of $\mathbb{R}^N$.  The evolution of $w_{\cdot}$ is governed by its intensity, or \textbf{Q}, matrix; by which we mean a map $Q_{\cdot}^{(n)}:[0,\infty)\to \R^{N\times N}$ describes the \textit{rate at which the transition probabilities} of the Markov chain $w_{\cdot}$ change.  Formally, for $i,j=1,\dots,N$ and $t,\Delta > 0$ its $(i,j)^{th}$ entry  $Q_t^{i,j}\eqdef  (Q_t)_{i,j}$ is given by
$
    \big|
        \mathbb{P}(w_{t+\Delta}=e_i|w_t = e_j)
        -
        I_{i=j}
        -
        \Delta \cdot 
        Q_t^{n:i,j}
    \big|
        \in 
    o\big(\Delta \big)
.
$
%

The following regularity conditions of the path $x_{\cdot}$ and on the transition matrix $Q_{\cdot}$ will be required throughout our paper.
\begin{assumptions}[Regularity Conditions]
\label{ass:regularity_SW_filter}
The path $x_{\cdot}:[0,\infty)\to \mathbb{R}^d$ is once continuously differentiable and there is a constant $C>0$ such that, for each $t\ge 0$ and $i,j=1,\dots,d$, $(Q_t)_{i,j}\le C$.
\end{assumptions}

\paragraph{Helper Functions}
Our MoE-F algorithm will rely on the following helper functions: \bm{$A$}, \bm{$\bar{A}$}, \bm{$B$}, \bm{$F$}, which we describe before providing the recursions for $\pi_{\cdot}^{(n)}$. Intuitively, the first two helper functions, \bm{$A_t$}  and \bm{$\bar{A}_t$} compute the gradient and average the gradient of the $n^{th}$ expert's prediction of the target time series's path $y_{\cdot}$ up to time $t$, accounting for the sensitivity of the $n^{th}$ expert to changes in the input path at time $t$. Here, the averaging is taken uniformly over which ``latent expert'' is truly active.
\vspace{-0.5em}
\begin{equation}
\label{eq:helperfunctions__Grad}
\resizebox{0.9\linewidth}{!}{$
\begin{aligned}
    A_t^{(n)}(w,y_t)
    & \eqdef 
    \ifthenelse{\boolean{is_loss_L2}}{
        2\left(y_t - f^{(n)}(x_{[0:t]})\right)
        \left(
            w_t^{\top} F(x_{[0:t]}) 
            - 
            \Delta f^{(n)}(x_{[0:t]})
            +
            1 
        \right)
    }{
        -\frac{
            \big(y_t - f^{(n)}(x_{[0:t]})\big)
            \,
            \Delta f^{(n)}(x_{[0:t]})
        }{
            \big(1 - f^{(n)}(x_{[0:t]})\big)
            \,
            f^{(n)}(x_{[0:t]})
        } 
        -
        \log\biggl(
            \frac{
                f^{(n)}(x_{[0:t]})
            }{
                1 - f^{(n)}(x_{[0:t]})
            }
        \biggr)
        \,
        [w_s^{\top} F(x_{[0:s]})]
    }, 
\end{aligned}
$}
\end{equation}
where the helper functions $A_t$ and $\bar{A}_t$ are given by:
\vspace{-.5em}
\begin{equation}
\begin{aligned}
    A_t(w,Y_t) 
    &
    = 
    \left(A_t^{(n)}(w,Y_t)\right)_{1 \leq n \leq N}
\\
    \bar{A}_t^{(n)}(\pi^{(n)},y_t) 
    & 
    \eqdef \sum_{i=1}^N A_t^{(n)}(e_i,y_t) \pi^{(n:i)}
\end{aligned}
\end{equation}
where the sensitivity of the $n^{th}$ expert at time $t$ to changes in the input path $x_{\cdot}$ is: 
\begin{equation*}
    \Delta f^{(n)} \eqdef f^{(n)}(x_{[0:t]}) - f^{(n)}(x_{[0:t-1]})
    .
\end{equation*}
Helper \textbf{function} \bm{$B$} quantifies the gradient of the $n^{th}$ expert prediction's loss function (e.g. $L^2$, cross-entropy) w.r.t. $y_{\cdot}$ ignoring changes in the input path $x_{\cdot}$
\vspace{-0.5em}
\begin{equation}
\label{eq:helperfunctions_B}
    B_t^{(n)}(y_t) \eqdef 
    \ifthenelse{\boolean{is_loss_L2}}{
            2^{3/2} \left( y_t - f^{(n)}(x_{[0:t]}) \right) e^{t\ln(\delta^4)}
        }{
            -\log\biggl(
                \frac{
                    f^{(n)}(x_{[0:t]})
                }{
                    1 - f^{(n)}(x_{[0:t]})
                }
            \biggr)
        }
\end{equation}
\vspace{-0.5em}
The helper \textbf{function}~\bm{$F$} concatenates the $N$ experts $ F(x_{[0:t]}) \eqdef \big(f^{(1)}(x_{[0:t]}), \dots, f^{(N)}(x_{[0:t]})\big)$.

\section{The MoE Filtering (\colorwblk{MoE-F}) Algorithm}\label{sec:MoE-F}

Our setting can be formalized as follows.  We consider an input signal $x_{\cdot}$, a $d$-dimensional smooth path, and the target process $Y_{\cdot}$, a continuous-time $D$-dimensional stochastic process. We are given $N$ auto-regressive, causal pre-trained expert models, 
$f^{(1)},\dots,f^{(N)}: \bigcup_{t\ge 0} C([0:t],\mathbb{R}^d)\to \mathbb{R}$ 
which map input paths such as $x_{\cdot}$ to one-dimensional predictions in time such that their predictions do not depend on future states of $x_{\cdot}$ or $Y_{\cdot}$ when predicting at any time $t\ge 0$.

Instead of treating the target as being static we allow it to evolve dynamically in time.  We assume that there is a \textit{Hidden Markov process} $w_{\cdot}$ dictating which expert best approximates $Y_{\cdot}$, up to some Brownian measurement noise $W_{\cdot}$; thus, we postulate that $Y_{\cdot}$ evolves according to the stochastic differential equation (SDE) with stochastic drift given by the~\eqref{eq:Y_agenti__mainform_specialcase}:
\begin{equation}
\label{eq:Y_agenti__mainform_specialcase}
            Y_t
        = 
            Y_0
        +
            \underbrace{
                \int_0^t\, 
                    w_{s}^{\top}
                    \,
                    F(x_{[0,s]})
                \,ds
            }_{\text{Best Expert Estimate}}
        + 
            \underbrace{
                    \int_0^t\,
                        \,dW_s
,
            }_{\text{Idiosyncratic Residual Noise}}
\end{equation}
where the Markov process $w_{\cdot}$ randomly masks \textit{all but one} expert at any given time and concatenates the experts $F(x_{[0:t]})\eqdef (f^{(n)}(x_{[0:t]}))_{n=1}^N$.  
The first term in~\eqref{eq:Y_agenti__mainform_specialcase}, 
$\int_0^t\, w_{s}^{\top}\,F(x_{[0,s]})\,ds$, 
represents the true best sparse approximation of $Y_{\cdot}$ --- i.e. the maximally sparse ``single expert'' --- by the ensemble of experts $F$.  The $ \int_0^t\,\,dW_s$ term, is an additive Gaussian noise with mean $0$ and variance $t$.

Thus, the mixture coefficients $w_{\cdot}$ act as an \textit{unobservable signal} which is indirectly observed through the \textit{measurements} recorded by each expert's running performance $\ell^{(n)}_{\cdot}\eqdef (\ell^{(n)}_t)_{t\ge 0}$, where $\ell^{(n)}_t$ is the loss incurred by the $n^{th}$ expert at time $t\ge 0$ as quantified by the chosen loss-function $\ell:\mathbb{R}\times \mathbb{R}\to \mathbb{R}$. 

This \textit{online mixture of experts problem} is a two-fold problem which can be solved by our proposed \colorwblk{MoE-F} procedure.  
In the first phase, we solve $N$ stochastic filtering problems in parallel, which optimally estimate $w_{\cdot}$, at any given time $t\ge 0$, given the measurement performance of any one expert $(\ell^{(n)}_{s})_{0\le s\le t}$.  The second phase of the \colorwblk{MoE-F} algorithm aggregates these optimal predictions.

\subsection{The MoE Filtering}
Our MoE-F procedure (Algorithm~\ref{algo:MoEFAbridged})  summarized in Figure~\ref{fig:MoEFSummary} operates in two steps, both of which are efficiently computable due to closed-form expressions.

\begin{figure}[H]
    \centering
    \includegraphics[width=\textwidth]{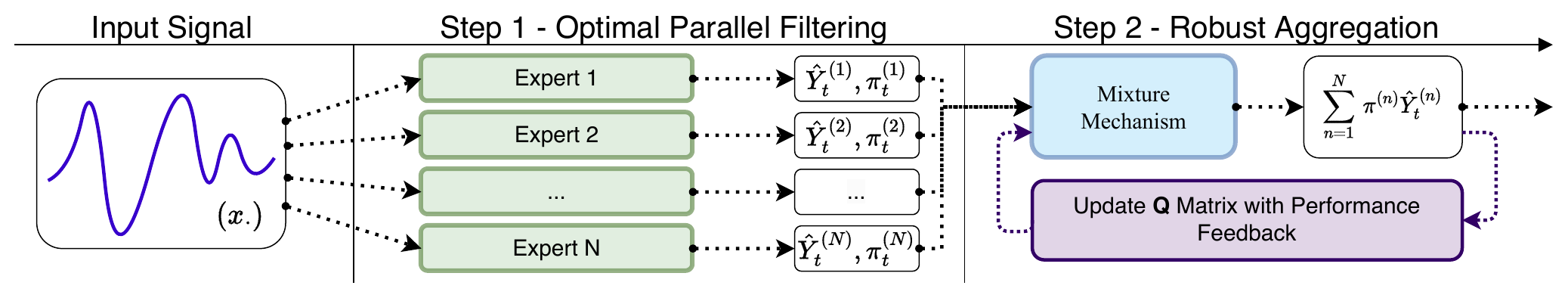}
    \caption{\textbf{\colorwblk{MoE-F} Mechanism}: conceptual depiction of an input signal $x_{\cdot}$ evolving in $\mathbb{R}^d$ with $N$ experts ($\pi$).\hfill}
\label{fig:MoEFSummary}
\vspace {-1em} 
\end{figure}

In the first phase of our algorithm, we solve $N$ distinct stochastic filtering problems in parallel.  
Each $n^{th}$ expert provides an optimal Bayesian prediction $\hat{Y}_t^n$ of the $Y_t$ in~\eqref{eq:Y_agenti__mainform_specialcase} based on their running performance $\ell^{(n)}_t$ up to the current time $t \ge 0$. Additionally, each expert generates a \textit{ranking} $\pi_t^{(n)}$ reflecting the reliability of all experts' performance,including its own, up to that point. The optimality of this update is guaranteed by Theorem~\ref{thrm:filtering_equations__UnifiedFormulation}.

Next, these optimal Bayesian predictions from each expert is robustly aggregated by a central module. 
This module first computes a robust version $\pi$ by aggregating the reliability scores of each expert $\{\pi_t^{(n)}\}_{n=1}^N$. Using this aggregate, robust $\pi$, it then computes a robust ensemble prediction: $\sum_{n=1}^N\, \pi^{(n)}_t\,\hat{Y}^{(n)}_t$ for the target $Y_t$.
Finally, the dynamics of the hidden Markov chain $w_{\cdot}$ in~\eqref{eq:Y_agenti__mainform_specialcase}, encoded by its so-called $Q$/intensity matrix, are re-estimated in a robust fashion using the running performance of each of the $N$ experts up until the current time $t\ge 0$. The optimality of this robust aggregation step is provided by Theorem~\ref{thrm:Optimal_Q_Matrix}. 

\paragraph{Loss functions}
In this paper, we consider either the binary cross entropy (BCE) or mean-squared error (MSE) as loss functions.  These are respectively defined by
\[
        \ell(\hat{y},y)
    \eqdef 
        \begin{cases}
        y \log(\hat{y}) + (1-y) \log(1-\hat{y}), 
         & \mbox{BCE},
            \\
        |y-\hat{y}|^2, & \mbox{MSE},
        \end{cases}
\]
where $y,\hat{y}$ belong to $\mathbb{R}$ in the regression case and to $[0,1]$ in the classification case.
Though the setting in \eqref{eq:Y_agenti__mainform_specialcase} directly applies to unbounded targets, like in regression, one can also use it in classification by simply taking $Y_{\cdot}$ to be the logits (inverse logistic transform) of the classification probabilities.  Since the logistic function is a bi-measurable bijection, the filtration generated by the logits or the classification probability processes are equal; thus, the filter does not change.

\paragraph{Helper Functions} For a concise presentation of our MoE-F algorithm (Algorithm~\ref{algo:MoEFAbridged}), we now introduce a few helper functions: {$A$}, {$\bar{A}$}, {$B$}, {$F$}.  
Intuitively, the first two helper functions, {$A_t$}  and {$\bar{A}_t$} compute the gradient and average the gradient of the $n^{th}$ expert's ($\pi_{\cdot}^{(n)}$) prediction of the target path $y_{\cdot}$ up to time $t$, accounting for the sensitivity of the $n^{th}$ expert to changes in the input path at time $t$. 
Here, the averaging is taken uniformly over which ``latent expert'' is active. 
We rely on the time-derivative of the loss function evaluated at the $n^{th}$ expert's model, which is given in closed-form

\vspace{-1em}
\begin{equation}
\label{eq:helperfunctions__Grad}
\resizebox{0.9\linewidth}{!}{$
\begin{aligned}
    A_t^{(n)}(w,y_t)
    \eqdef 
\begin{cases}
        -\frac{
            \big(y_t - f^{(n)}(x_{[0:t]})\big)
            \,
            \Delta f^{(n)}(x_{[0:t]})
        }{
            \big(1 - f^{(n)}(x_{[0:t]})\big)
            \,
            f^{(n)}(x_{[0:t]})
        } 
        -
        \log\biggl(
            \frac{
                f^{(n)}(x_{[0:t]})
            }{
                1 - f^{(n)}(x_{[0:t]})
            }
        \biggr)
        \,
        [w_{t}^{\top} F(x_{[0:t]})],
    & \mbox{ if $\ell$ is BCE} ,
    \\
       2\left(y_t - f^{(n)}(x_{[0:t]})\right)
        \left(
            w_t^{\top} F(x_{[0:t]}) 
            - 
            \Delta f^{(n)}(x_{[0:t]})
            +
            1 
        \right),
    & \mbox{ if $\ell$ is MSE}
.
\end{cases}
\end{aligned}
$}
\end{equation}

We will also use the concatenated and averaged gradients of the loss of each expert, denoted respectively by $A_t$ and $\bar{A}_t$, and are defined as:

\vspace{-.5em}
\begin{equation}
\begin{aligned}
    A_t(w,Y_t) 
    \eqdef
    \Big(
        A_t^{(n)}(w,Y_t)
    \Big)_{1 \leq n \leq N}
\, \mbox{  and  } \,
    \bar{A}_t^{(n)}(\pi^{(n)},y_t) 
    \eqdef \sum_{i=1}^N A_t^{(n)}(e_i,y_t) \pi^{(n:i)} ,
\end{aligned}
\end{equation}
where the sensitivity of the $n^{th}$ expert at time $t$ to changes in the input path $x_{\cdot}$ is 
\begin{equation*}
    \Delta f^{(n)}(x_{[0:t]}) \eqdef \lim_{\varepsilon \downarrow 0} \tfrac{1}{\epsilon} (f^{(n)}(x_{[0:t]}) - f^{(n)}(x_{[0:t-\epsilon]})) \approx f^{(n)}(x_{[0:t]}) - f^{(n)}(x_{[0:t-1]})
    .
\end{equation*}
Helper {function} {$B$} quantifies the gradient of the loss function (BCE or MSE) of the $n^{th}$ expert's prediction w.r.t.\ the observed target $y_{\cdot}$, ignoring changes in the input path $x_{\cdot}$
, and is given by:
\begin{equation}
\label{eq:helperfunctions_B}
        B_t^{(n)}(y_t) 
    \eqdef 
        \begin{cases}
                -\log\biggl(
                    \frac{
                        f^{(n)}(x_{[0:t]})
                    }{
                        1 - f^{(n)}(x_{[0:t]})
                    }
                \biggr),
        & \mbox{ if $\ell$ is BCE} ,
        \\
                2^{3/2} \left( y_t - f^{(n)}(x_{[0:t]}) \right) ,
        & \mbox{ if $\ell$ is MSE}.
        \end{cases}
\end{equation}
\vspace{-0.5em}

The helper {function} {$F$} concatenates the $N$ experts $ F(x_{[0:t]}) \eqdef \big(f^{(1)}(x_{[0:t]}), \dots, f^{(N)}(x_{[0:t]})\big)$.

As illustrated by Figure~\ref{fig:MoEFSummary}, our MoE-F%
~(Algorithm~\ref{algo:MoEFAbridged}) operates in two steps.  
 The first step (lines $3$-$11$) implements $N$ parallel stochastic filters yielding optimal Bayesian predictions of $Y_{\cdot}$ given the information available in the running loss function of each of the $N$ expert models.  
 The second step (lines $12-16$) aggregates the $N$ predictions of each expert model in a robust manner, and it performs a bi-level optimization to compute our most robust Markov $Q$/intensity matrix supporting the running performance of each expert thus far.


\paragraph{Step 1 -- Optimal Parallel Filtering}
Lines $3$-$8$ of our \algoacronym algorithm is a time discretization of the stochastic optimal filtering equations in~\eqref{eq_thrm:filtering_equations__MAINTEXT} according to the standard Euler-Maruyama scheme, see e.g.~\cite{hutzenthaler2015numerical}.  This is performable in parallel, with each parallel computing branch corresponding to a single expert.  
As in discrete-time filtering implementations, e.g.~\cite{grewal2014kalman,kim2018introduction}, the discretized infinitesimal change in the running loss of the $n^{th}$ expert is given by $\smash{
d\ell_{[0:t]}^{(n)} \approx
    \ell_t^{(n)} - \ell_{t-\Delta}^{(n)}
}$; provided that $t> \Delta>0$. 
Thus, the \textit{innovations process} $\smash{\bar{W}}$ is the change in $\smash{\ell^{(n)}}$ over the time increment $[t-\Delta:t]$ minus $\smash{\bar{A}(Y_{[0:t]})}$, re-normalized by $B_i(Y_{[0:t]})$.  Importantly, this means that it is \textit{computable online from the incoming target process} $Y_{\cdot}$.
In line $8$, each expert (still in parallel) proposes their best estimate $\smash{\hat{Y}_t^{(n)}}$ for $Y_t$ using their mixture weights, estimated only using their (local) information in their historical loss.  

Finally, in line $10$, the \textit{reliability} of each expert is measured (still in parallel) by evaluating the loss between the target process $Y_t$ and their best (individual) prediction $\smash{\hat{Y}_t^{(n)}}$ of it.  This reliability metric is summarized by their score $s_n$ of the $n^{th}$ expert.

\begin{wrapfigure}[23]{r}[0pt]{0.58\textwidth}
    \vspace{-.5em}
    \hfill\begin{minipage}[t]{0.56\textwidth}
      \begin{algorithm}[H]
        \caption{MoE-F Algorithm}
        \label{algo:MoEFAbridged}
        \SetAlgoLined
        \footnotesize 
          Initialize $\pi$, $P$, and $Q$\;
          \tcc{Step 1: Optimal Parallel Filter}
          \SetKwBlock{ForParallel}{For $n=1,\dots,N$ in parallel}{end}
        \ForParallel{
          $\operatorname{drift} \gets Q_{t-1}^{\top}\pi_{t-1}^{(n)}  $\;
          $\Delta L 
          \gets
          \ell(f^{(n)}(x_{[0:t-1]}), Y_t)-\ell(f^{(n)}(x_{[0:t-2]}, Y_t))
          $\;
          $\Delta W \gets 
                (
                   \Delta L 
                   -
                   \bar{A}_{t-1}(\pi^{(n)}_{t-1},Y_{[0:t-1]}))
                 ) 
          /B_{t-1}^{(n)}(Y_t)
          $\;
          $\operatorname{diff}_{i}\gets 
            (
                \pi^{(n:i)}(A_{t-1}(e_i,Y_{[0:t-1]}) - \bar{A}_{t-1}(\pi^{(n)}_{t-1},Y_{[0:t-1]})))
            )
          /
            B_{t-1}^{(n)}(Y_t)
          $
          \;
          $\pi_t^{(n)}\gets \pi_{t-1}^{(n)} + \operatorname{drift} +\operatorname{diff} \, \Delta W$\;
            $\hat{Y}^{(n)}_t
            \gets 
            (\pi^{(n)}_t)^{\top} F(x_{[0:t]})
            $\;
            $s_n \gets 
                \ell
                \big(
                    Y_t 
                    ,
                        \hat{Y}_t^{(n)}
                \big)
            $
            \;
          }
        \tcc{Step 2: Robust Aggregation}
        $\bar{\pi} \gets \big( e^{-\lambda \, s_n} / (\sum_{i=1}^N\, e^{-\lambda \, s_i} \big) \big)_{n=1}^N$\;
        $\hat{Y}_t\eqdef \bar{\pi}^{\top} (\hat{Y}^{(n)}_t)_{n=1}^N$;
    
                \For{$n = 1,\dots, N$}{
                    $P_{n,\cdot}\gets \bar{\pi}$
                }
        $
        P\gets (1-\alpha) \, P + \alpha I_N
        $\;
        $Q
        \gets 
        \operatorname{ReLU}(\log(P))-\operatorname{diag}(\bar{1}_N^{\top}\operatorname{ReLU}(\log(P)))$
        \;
    \Return{MoE-F Prediction $\hat{Y}_t$
    }\;
      \end{algorithm}
    \end{minipage}
\end{wrapfigure}

\paragraph{Step 2 -- Robust Aggregation}
\hfill\\
The prediction quality of each expert $\smash{\hat{Y}_t^{(n)}}$ has been compressed into a score $s_n$.  These scores are used to aggregate the expert predictions into a single prediction $\smash{\hat{Y}_t}$ of $Y_t$.  This is done using the well-studied Gibbs-aggregation approach with the soft\textit{min} function (since \textit{lower} loss implies \textit{better} score here).
Line $13$ computes the aggregation weights $\bar{\pi}$; which are then packaged into a uniform row-stochastic matrix whose rows are given by $\bar{\pi}$.  With these aggregation weights, our MoE-F algorithm generates a joint predictor in line $14$.   
We associate $P$ to a Markov Q/intensity matrix $Q$.  The matrix logarithm of $P$ is a viable candidate for $Q$, since $\exp(1\cdot Q)$ would be a valid transition matrix; however, in general, the matrix logarithm of an arbitrary row-stochastic transition matrix is not a valid intensity matrix.  Upon regularizing $P$ in line $18$ to avoid pathologies, in line $19$, we compute the best intensity matrix approximation of its matrix logarithm.

\section{Theoretical Guarantees}
\label{s:Theoretical_Gauarantees}
\vspace{-0.5em}
Our MoE-F algorithm revolves around the following set of \textit{stochastic filtering equations}.  Each filtering equation corresponds to the best estimate of a single expert on masking process $w_{\cdot}$ given only the information available in their \textit{running performance}. 
Interpreting ``best estimate'' in the $L^2$ sense,
each seeks to predict the conditional distribution of $w_{\cdot}$ given the $\sigma$-algebra $\smash{\mathcal{F}_t^{(n)}\eqdef \sigma\{\ell_s^{(n)}\}_{0\leq s \leq t}}$ generated by the running loss process $\smash{\ell_{\cdot}^{(n)}\eqdef (\ell(Y_t, \hat{Y}^{(n)}_t))_{t\ge 0}}$; i.e.\ 
\begin{equation*}
    \pi_t^{(n)} \eqdef \big( \mathbb{P}\big( w_t = e_i \mid \mathcal{F}_t^{(n)} \big) \big)_{i=1}^N 
.
\end{equation*}
An essential property of our filtering equations are that they provide a \textit{closed-form} recursion for $\smash{\pi_{\cdot}^{(n)}\eqdef (\pi_t^{(n)})_{t\ge 0}}$ depending only on each of the expert models and on the accumulating observations from the target process $Y_{\cdot}$.  
This key property is rare in stochastic filtering paradigm, voiding the need for any Monte-Carlo simulation. 

\subsection{Guarantees for Step 1 - Online Parallel Filtering}
\label{s:Theoretical_Gauarantees__ss:Filtering__Classification}
Our main result comes in two variants, a binary classification and a regression variant.  Both versions of our guarantees operate under the following regularity conditions of the path $x_{\cdot}$.
\vskip 1em
\begin{assumptions}[Regularity Conditions]
\label{ass:regularity_SW_filter}
The path $x_{\cdot}:[0,\infty)\to \mathbb{R}^d$ is once continuously differentiable and there is a constant $C>0$ such that, for each $t\ge 0$ and $i,j=1,\dots,d$, $(Q_t)_{i,j}\le C$.
\hfill\\
The loss function $\ell$ is either the binary cross entropy loss or the squared-norm loss.
\end{assumptions}

\vskip 1em 
\begin{theorem}[Optimal Optimistic Prior for $n^{th}$ Expert]
\label{thrm:filtering_equations__UnifiedFormulation}
Under Assumption~\ref{ass:regularity_SW_filter}, the best a posteriori estimate of the $n^{th}$ expert, $\pi_t^{(n)}$, satisfies the SDE
\begin{align}
\label{eq_thrm:filtering_equations__MAINTEXT}
        \pi_t^{(n:i)}
    = 
        w_0^i
        + 
        \int_0^t \,
            (Q_s)_{i}^{\top}\pi_s^{(n)}
        \,
        ds
    + 
        \int_0^t \,
            \frac{
                \pi_s^{(n:i)}
                \,
                \big(
                    A_s(e_i,Y_{[0:s]})
                    -
                    \bar{A}_s(
                    \pi^{(n)}_{s}
                    ,
                    Y_{[0:s]})
                \big)
            }{
                B_s(Y_{[0:s]})
            }
        d\overline{W}_u^{(n)}
,
\end{align}
where $(Q_t)_{i}$ denotes the $i^{th}$ row of the transitions matrix $Q_t$ at time $t\ge 0$, $w_0^i\eqdef \mathbb{P}(w_0=e_i)$. The ``innovations process'' $\overline{W}_{\cdot}^{(n)}\eqdef (\overline{W}_t^{(n)})_{t\ge 0}$ is the following $(\mathbb{P},\mathcal{F}_{\cdot}^{n})$-Brownian motion
    \begin{equation}
    \label{eq:innovations_process}
            \overline{W}_s^{(n)}
        \eqdef 
            \int_0^s
                \frac{
                    dL^{(n)}_{[0:u]}
                    -
                    \bar{A}_u(Y_{[0:u]})
                }{
                    B_u(Y_{[0:u]})
                }
            du
    ,
    \end{equation}
and each of the $n$ ``running loss processes'' $L_{\cdot}^{(n)}$ satisfy:
\begin{enumerate}
    \item[(i)] \textbf{Classification:} If $\ell$ is the binary cross entropy loss:
\begin{equation*}
\resizebox{1\linewidth}{!}{$
\begin{aligned}
        dL^{(n)}_{[0:u]}
    =
            \frac{
                \big(Y_s-f^{(n)}(x_{[0:t]})\big)
                \,
                \Delta f^{(n)}(x_{[0:t]})
            }{
                \big(1-f^{(n)}(x_{[0:t]})\big)
                \,
                f^{(n)}(x_{[0:t]})
            } 
        +
            \log\biggl(
                \frac{
                    f^{(n)}(x_{[0:t]})
                }{
                    1- f^{(n)}(x_{[0:t]})
                }
            \biggr)
            \,
            [w_t^{\top}F(x_{[0:t]})]
        \,dt
    +
        \log\biggl(
            \frac{
                f^{(n)}(x_{[0:t]})
            }{
                1- f^{(n)}(x_{[0:t]})
            }
        \biggr)
            \,
            dW_t
    .
\end{aligned}
$}
\end{equation*}
\item[(ii)] \textbf{Regression:} If $\ell$ is the squared-norm loss
\begin{equation*}
\resizebox{1\linewidth}{!}{$
\begin{aligned}
        dL^{(n)}_{[0:u]}
    =
            2\big(Y_t - f^{(n)}(x_{[0:t]})\big)
            \big(
                    [w_s^{\top}F(x_{[0:t]})]
                -
                    \Delta f^{(n)}(x_{[0:t]})
                +
                    e^{t\ln(\delta^8)}
            \big)
        +
        2\big(Y_t - f^{(n)}(x_{[0:t]})\big)
            e^{t\ln(\delta^4)}
            \,
            dW_t
    .
\end{aligned}
$}
\end{equation*}
\end{enumerate}
\end{theorem}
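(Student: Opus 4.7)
The plan is to combine Itô calculus on the running loss processes $L_t^{(n)} = \ell(Y_t, f^{(n)}(x_{[0:t]}))$ with the classical Wonham--Shiryaev filter for finite-state hidden Markov signals observed in additive Gaussian noise.

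First, I would rewrite the running losses as Itô processes. Since $x_\cdot$ is continuously differentiable by Assumption~\ref{ass:regularity_SW_filter}, the map $t \mapsto f^{(n)}(x_{[0:t]})$ is of bounded variation with time-derivative $\Delta f^{(n)}(x_{[0:t]})$, while $Y_\cdot$ has stochastic drift $w_t^{\top}F(x_{[0:t]})$ and unit diffusion by~\eqref{eq:Y_agenti__mainform_specialcase}. Applying Itô's formula to $\ell(Y_t, f^{(n)}(x_{[0:t]}))$ and simplifying yields the two displayed SDEs for $dL_t^{(n)}$: the BCE case is linear in $Y$, so the Itô correction vanishes; the MSE case picks up $\tfrac{1}{2}\partial_{YY}^2\ell \cdot d\langle Y\rangle_t$, producing the extra exponential term in the drift. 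Crucially, in both cases the drift of $L^{(n)}$ decomposes as $A_t^{(n)}(w_t, Y_t)$ — affine in $w_t$ through $w_t^{\top}F(x_{[0:t]})$ — plus a $w$-independent piece, while the diffusion coefficient $B_t^{(n)}(Y_t)$ depends only on the observation. This is exactly the canonical form required by the filtering theorem.

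Second, I would invoke the Wonham--Shiryaev filter for a finite state-space continuous-time Markov signal $w_\cdot$ with intensity matrix $Q_\cdot$, observed through $dL_t = A_t(w_t, Y_t)\,dt + B_t(Y_t)\,dW_t$ (as in \cite{liptser1977statistics}, Ch.~9). That theorem delivers the Kushner--Stratonovich SDE
\begin{equation*}
    d\pi_t^{(n:i)} = (Q_t^{\top}\pi_t^{(n)})_i\, dt + \frac{\pi_t^{(n:i)}\bigl(A_t(e_i,Y_{[0:t]}) - \bar A_t(\pi_t^{(n)},Y_{[0:t]})\bigr)}{B_t(Y_{[0:t]})}\, d\overline W_t^{(n)},
\end{equation*}
which is exactly \eqref{eq_thrm:filtering_equations__MAINTEXT} after integrating from $0$ with initial condition $\pi_0^{(n:i)} = \mathbb{P}(w_0 = e_i) = w_0^i$. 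That $\overline W^{(n)}$ defined by~\eqref{eq:innovations_process} is an $(\mathbb{P},\mathcal{F}_{\cdot}^{(n)})$-Brownian motion then follows from Lévy's characterization: it is continuous, adapted to $\mathcal{F}_{\cdot}^{(n)}$, a $(\mathbb{P},\mathcal{F}_{\cdot}^{(n)})$-martingale since $\bar A$ is by construction the $\mathcal{F}_{\cdot}^{(n)}$-optional projection of $A(w_\cdot,Y_\cdot)$, and has quadratic variation equal to $t$ because $B^{(n)}$ cancels in the quotient. Boundedness of $Q_t$ from Assumption~\ref{ass:regularity_SW_filter} then guarantees a unique strong solution to the filtering SDE.

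The main obstacle is Step~1: patiently pushing the chain rule through $\ell(Y_t, f^{(n)}(x_{[0:t]}))$ for each loss and collapsing the resulting expression into the canonical filtering form with $A$, $\bar A$, $B$ as in~\eqref{eq:helperfunctions__Grad}--\eqref{eq:helperfunctions_B}. Once this decomposition is in place, Steps~2 and~3 are direct applications of a classical result. A minor technical point is that $B_t^{(n)}(Y_t)$ must be strictly positive for the innovations representation to be well defined: in the BCE case this is guaranteed by $f^{(n)}\in(0,1)$ (enforced via a logistic link on the logits, as noted in the paragraph on loss functions in~\S\ref{sec:MoE-F}), and in the MSE case by the strictly positive exponential factor built into $B_t^{(n)}$.
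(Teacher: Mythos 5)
Your proposal follows essentially the same route as the paper: apply It\^{o}'s formula to the running loss $L_t^{(n)}=\ell(Y_t,f^{(n)}(x_{[0:t]}))$ (with the second-order correction vanishing for BCE and contributing the extra drift term for MSE), recast the result in the canonical observation form with drift $A_t^{(n)}(w_t,\cdot)$ and diffusion $B_t^{(n)}(\cdot)$, and then invoke the Wonham--Shiryaev/Liptser--Shiryaev filtering theorem to obtain the posterior SDE and innovations Brownian motion. Your added remarks on L\'{e}vy's characterization and strict positivity of $B_t^{(n)}$ are consistent with, and implicit in, the paper's citation of that classical result, so there is no substantive divergence.
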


\subsection{Guarantees for Step 2 - Robust Aggregation}
\label{s:Theoretical_Gauarantees__ss:Filtering__Classification 2}
The following bi-level optimality guarantee justifies the second step of our MoE-F algorithm.
\begin{theorem}[Bi-level Robust Updates to the $Q$-Matrix]
\label{thrm:Optimal_Q_Matrix}
Let $\hat{Y}^{(1)},\dots,\hat{Y}^{(N)},Y\in\mathbb{R}$ and $\ell:\mathbb{R}^2\to \mathbb{R}$ be Borel measurable.  Then $P\eqdef [(\bar{\pi})_{n=1}^N]^{\top}$; where
$
\bar{\pi}
\eqdef 
\operatorname{Softmin}\big(\lambda \, (\ell(Y^{(n)},Y) )_{n=1}^N\big)
$ minimizes
\begin{equation}
\tag{Inner}
\label{eq:P_level_Optimization}
\min_{P\in \mathcal{P}_N^U}\, \max_{m=1,\dots,N}\,
    \underbrace{
        \sum_{n=1}^N
        \, 
        P_{m,n} 
        \,
        \ell(\hat Y_t^{(n)},Y_t) 
    }_{\text{Predictive Power of Ensemble}}
+ 
    \frac{1}{\lambda}\,
    \underbrace{
        \sum_{n=1}^N\,
            P_{m,n}\,\log(w_n/N)
    }_{\text{Entropic Regularization}}
.
\end{equation}
Let $\alpha \in (0,1)$ then $P^{\alpha} \eqdef (1-\alpha) (\bar{\pi},\dots,\bar{\pi}) + \alpha I_N$ is a row stochastic matrix and, for $\alpha<1$ large enough, $\log(P^{\alpha})$ is well-defined.  Furthermore, $Q\eqdef \operatorname{ReLU}\big(\log(P^{\alpha})\big)$  is a minimizer of
\begin{equation}
\tag{Outer}
\label{eq:Q_level_Optimization}
    \operatorname{min}_{Q \in \mathcal{Q}}
    \,
    \|Q - \log(P_t^{\alpha})\|_{\infty}
.
\end{equation}
\end{theorem}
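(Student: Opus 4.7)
The plan is to decouple the theorem into its two logically independent pieces --- the inner entropy-regularized min-max that yields the softmin weights, and the outer projection that yields the intensity matrix $Q$ --- and handle each by standard convex analysis. Most of the work consists in unpacking the structured constraint sets $\mathcal{P}_N^U$ (uniform row-stochastic matrices) and $\mathcal{Q}$ (intensity matrices), plus verifying that $\log(P^\alpha)$ exists.

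For the inner problem, I would first exploit the definition of $\mathcal{P}_N^U$: every $P\in\mathcal{P}_N^U$ has identical rows, i.e.\ $P_{m,n}=p_n$ independent of $m$, so the objective no longer depends on $m$ and the outer $\max_{m=1,\dots,N}$ becomes vacuous. The problem collapses to the entropy-regularized linear program
\[
\min_{p\in\Delta_N}\; \sum_{n=1}^N p_n\,\ell(\hat Y_t^{(n)},Y_t) \;+\; \tfrac{1}{\lambda}\sum_{n=1}^N p_n\log(p_n/N),
\]
whose minimizer is the Gibbs/Boltzmann distribution. Forming the Lagrangian with a multiplier $\mu$ for the simplex constraint $\sum_n p_n = 1$, stationarity in $p_n$ gives $\log(p_n/N)+1+\lambda\,\ell(\hat Y_t^{(n)},Y_t)+\mu=0$, and normalization forces $p_n\propto e^{-\lambda\,\ell(\hat Y_t^{(n)},Y_t)}$; strict convexity of $p\mapsto p\log p$ makes this minimizer unique. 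This is precisely $\bar\pi_n=\operatorname{Softmin}_n(\lambda\,\ell(\hat Y^{(\cdot)},Y))$, so $P=\mathbf{1}_N\,\bar\pi^{\top}$ is the claimed minimizer.

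For the outer problem, I would first check that $\log(P^\alpha)$ is well-defined. Writing $P^\alpha = \alpha I_N + (1-\alpha)\,\mathbf{1}_N\bar\pi^\top$, its spectrum consists of $1$ (simple, with eigenvector $\mathbf{1}_N$) and $\alpha$ (with multiplicity $N-1$); all eigenvalues are strictly positive for $\alpha\in(0,1)$, so the principal matrix logarithm exists by analytic functional calculus (equivalently, once $\alpha$ is close enough to $1$, the Mercator series $\log(P^\alpha)=-\sum_{k\ge 1}\tfrac{1}{k}(I-P^\alpha)^k$ converges absolutely). For the projection step, I would exploit entrywise separability of $\|\cdot\|_\infty$: the off-diagonal constraint ``entry $\ge 0$'' is coordinate-wise, so the optimal off-diagonal entry $i\neq j$ of any Metzler matrix is the one-sided scalar projection $\max(\log(P^\alpha)_{ij},0) = \operatorname{ReLU}(\log(P^\alpha))_{ij}$, while the diagonal entries either equal $\log(P^\alpha)_{ii}$ if unconstrained, or absorb the negative row sum as in line~$19$ of Algorithm~\ref{algo:MoEFAbridged}.

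The main obstacle is the precise meaning of $\mathcal{Q}$. If $\mathcal{Q}$ denotes Metzler matrices (off-diagonals non-negative, diagonals free), the entrywise-separability argument gives $Q=\operatorname{ReLU}(\log(P^\alpha))$ directly as stated. If $\mathcal{Q}$ additionally enforces zero row sums --- the classical intensity-matrix requirement implemented by line~$19$ of Algorithm~\ref{algo:MoEFAbridged} --- then diagonals couple to the off-diagonals in the same row, and one must argue that the coordinate-wise $\operatorname{ReLU}$ on off-diagonals together with the compensating diagonal correction still minimizes the entrywise $\infty$-error, since the off-diagonal contribution is independent of the diagonal choice and the diagonal discrepancy $|\log(P^\alpha)_{ii}-Q_{ii}|$ can be absorbed without increasing the worst-case entry. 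This reduction to scalar one-sided projections is the cleanest route through the outer bi-level step.
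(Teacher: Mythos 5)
Your proof is correct and reaches the stated conclusions, but the route through the outer step is genuinely different from the paper's. For the inner problem you and the paper do essentially the same thing: both collapse the $\max_m$ using the identical-rows structure of $\mathcal{P}_N^U$ and reduce to an entropy-regularized linear problem over $\Delta_N$; the paper then cites a known Gibbs-variational result (Proposition~1 of the cited work of Wang et al., after reading the regularizer as a relative entropy to the uniform measure), whereas you derive the softmin minimizer directly via Lagrangian stationarity and strict convexity --- the same argument, made self-contained. The outer step is where you diverge. The paper establishes existence of $\log(P^{\alpha})$ through Lemma~\ref{lem:invertibile_pertrubations_rowstochmatrix} and the trace-based eigenvalue bounds of Lemma~\ref{lem:sufficient_for_logtohaveprinciplebranch}, which only cover $\alpha$ near $1$ (of order $1-1/N^2$) and carry side conditions on the realness of the eigenvalues, and it then invokes a theorem of Davies on embeddable Markov matrices to identify $\operatorname{ReLU}(\log(P^{\alpha}))$ as the minimizer of the $\infty$-norm projection. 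You instead compute the spectrum of $P^{\alpha}=\alpha I_N+(1-\alpha)\mathbf{1}_N\bar{\pi}^{\top}$ exactly, namely $\{1\}$ (simple) and $\{\alpha\}$ with multiplicity $N-1$, which is both cleaner and strictly stronger --- it yields the principal logarithm for \emph{every} $\alpha\in(0,1)$ --- and you replace the citation for the projection step by an elementary entrywise-separability argument, valid under the reading of $\mathcal{Q}$ as matrices with non-negative off-diagonal entries; that is the only reading consistent with the theorem's claimed minimizer, since, as you observe, $\operatorname{ReLU}(\log(P^{\alpha}))$ does not have zero row sums in general.

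One soft spot: your closing claim for the zero-row-sum reading, that the diagonal deficit ``can be absorbed without increasing the worst-case entry,'' is not justified and is in general false for the entrywise sup norm. For a row of $\log(P^{\alpha})$ with entries $(-1,-1,4)$ off-diagonal and $-2$ on the diagonal, the ReLU-plus-diagonal-correction choice has maximal entrywise error $2$, while the feasible alternative $(0,0,3,-3)$ achieves error $1$; shrinking a positive off-diagonal entry can strictly beat absorbing the whole deficit on the diagonal. Since that reading of $\mathcal{Q}$ makes the theorem's stated minimizer infeasible anyway, this does not affect the correctness of your proof of the statement as written, but it should be presented as a heuristic remark rather than a proved claim.
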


We close our guarantees section by investigating the effect of regularizing the row stochastic matrix $P$ obtained in the inner-level optimization~\eqref{eq:P_level_Optimization} to ensure its invertibility and the well-posedness of its matrix logarithm.  
We ask whether the probability distributions defined by the rows of $P_t^{\alpha}$ (whose state-space is $\{e_n\}_{n=1}^N$) depend continuously on $\alpha$.  Indeed, the next result shows that the $\operatorname{KL}$-divergence of the rows of $P_t^{\alpha}$ and those of $P_t$ (i.e.~$\bar{\pi}$) are necessarily very close when $\alpha$ is small.

\begin{proposition}[{Stability of Perturbations}]
\label{prop:pertrubation_bound_KL}
In the setting of Proposition~\ref{prop:invertibile_pertrubations_rowstochmatrix}, we have that
\[
        \max_{i=1,\dots,N}
        \,
        \operatorname{KL}(\bar{\pi}|(P_t^{\alpha})_i)
    \le 
        2\alpha
        \,
        \Big(
            -\frac{
                \log(\pi_{\operatorname{min}})
            }{
                1/\pi_{\operatorname{min}} - 1
            }
            -
            \frac{
                \log((1-\alpha)\,\pi_{\operatorname{min}})
            }{
                1/((1-\alpha)\,\pi_{\operatorname{min}}) - 1
            }
        \Big)
\]
where $\pi_{\operatorname{min}}=\min_{i=1,\dots,N}\, \bar{\pi}(>0)$ and $(P_t^{\alpha})_i$ denotes the $i^{th}$ row of $P_t^{\alpha}$. 
\end{proposition}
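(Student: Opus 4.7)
The plan is to compute $\operatorname{KL}(\bar{\pi}|(P_t^{\alpha})_i)$ in closed form using the explicit structure of $P_t^{\alpha}$, and then reduce the maximization over $i$ to a one-variable inequality in $\pi_{\min}$. By the construction $P_t^{\alpha} = (1-\alpha)(\bar{\pi},\ldots,\bar{\pi})^{\top} + \alpha I_N$ recalled from Theorem~\ref{thrm:Optimal_Q_Matrix}, the $i$-th row is $(1-\alpha)\bar{\pi} + \alpha e_i$. Substituting this into the definition of KL and separating the $j=i$ summand from those with $j \neq i$ yields
\begin{equation*}
\operatorname{KL}(\bar{\pi}|(P_t^{\alpha})_i) = \bar{\pi}_i \log\frac{\bar{\pi}_i}{(1-\alpha)\bar{\pi}_i + \alpha} - (1-\bar{\pi}_i)\log(1-\alpha),
\end{equation*}
where the first summand is non-positive (since the denominator dominates the numerator) and the second is non-negative.

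Next I would view this right-hand side as a univariate function $h(\bar{\pi}_i)$ and show, by a direct computation of $h'(p)$ using the convexity of $-\log$, that $h$ is monotonically decreasing on $(0,1]$. Consequently the maximum over $i$ is attained at the index realizing $\bar{\pi}_i = \pi_{\min}$, reducing the statement to the one-variable inequality
\begin{equation*}
h(\pi_{\min}) \le 2\alpha\bigl[\,g(\pi_{\min}) + g((1-\alpha)\pi_{\min})\,\bigr], \qquad g(x) \eqdef -\frac{x\log(x)}{1-x}.
\end{equation*}

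To establish this bound, I would exploit the elementary identity $-\log(y) = g(y)\cdot(1/y - 1)$ for $y \in (0,1)$, applied separately to $y = \pi_{\min}$ and $y = (1-\alpha)\pi_{\min}$, together with the decomposition $\log(1-\alpha) = \log((1-\alpha)\pi_{\min}) - \log(\pi_{\min})$, which funnels the $\alpha$-dependence into a common prefactor while isolating the two $g$-contributions. The first summand of $h(\pi_{\min})$, rewritten as $-\pi_{\min}\log\bigl(1 + \alpha(1-\pi_{\min})/\pi_{\min}\bigr)$, should produce the $g(\pi_{\min})$ contribution via the concavity bound $\log(1+x) \ge x/(1+x)$ together with an $\alpha$-factorization; the second summand, after the above decomposition, should produce the $g((1-\alpha)\pi_{\min})$ contribution.

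The main obstacle is the algebraic bookkeeping in this final step: matching the prefactor $2\alpha$ and the precise sum-of-two-$g$-terms form requires a careful combination of the two logarithmic inequalities, with the factor of $2$ emerging from bounding $(1-\pi_{\min})$ and $(1-(1-\alpha)\pi_{\min})$ each by $1$ inside their respective $g$-contributions. I expect no deeper conceptual difficulty beyond this computational manipulation, since the monotonicity of $h$ and the underlying $\log$-inequalities are standard; the subtlety lies in choosing the decomposition of $\log(1-\alpha)$ that respects the exact structure claimed in the bound.
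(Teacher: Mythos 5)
Your Steps 1 and 2 are correct, and they constitute a genuinely different and more elementary route than the paper's. The closed form $\operatorname{KL}(\bar{\pi}|(P_t^{\alpha})_i)=\bar{\pi}_i\log\tfrac{\bar{\pi}_i}{(1-\alpha)\bar{\pi}_i+\alpha}-(1-\bar{\pi}_i)\log(1-\alpha)$ is right, and monotonicity of $h$ follows cleanly since $h'(p)=\log u+1-u\le 0$ with $u=\tfrac{(1-\alpha)p}{(1-\alpha)p+\alpha}\in(0,1)$, so the maximum over $i$ is exactly $h(\pi_{\operatorname{min}})$. The paper never computes the divergence: it invokes a sharp reverse-Pinsker inequality (Binette) to bound $\operatorname{KL}(\bar{\pi}|(P_t^{\alpha})_i)$ by the total-variation distance times $g(\pi_{\operatorname{min}})+g(q_{\operatorname{min}}^{\alpha,i})$, where $g(x)\eqdef -x\log(x)/(1-x)$ and $q_{\operatorname{min}}^{\alpha,i}$ is the minimal mass of the perturbed row, and then bounds the total variation by $2\alpha$ and uses $q_{\operatorname{min}}^{\alpha,i}\ge(1-\alpha)\pi_{\operatorname{min}}$.

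The genuine gap is your Step 3, and it is not a bookkeeping matter. Because your reduction is lossless, the inequality you still owe, $h(\pi_{\operatorname{min}})\le 2\alpha\,[\,g(\pi_{\operatorname{min}})+g((1-\alpha)\pi_{\operatorname{min}})\,]$, is exactly the proposition, and it fails for large $\alpha$: take $\bar{\pi}=(0.2,0.8)$ (attainable in the setting of Proposition~\ref{prop:invertibile_pertrubations_rowstochmatrix} by choosing $s_n=-\log(\bar{\pi}_n)/\lambda$) and $\alpha=0.9$; then $h(0.2)=0.2\log\tfrac{0.2}{0.92}+0.8\log 10\approx 1.54$, while $2\alpha\,(g(0.2)+g(0.02))\approx 1.8\,(0.402+0.080)\approx 0.87$. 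The culprit is the term $-(1-\pi_{\operatorname{min}})\log(1-\alpha)$ in $h$, which diverges as $\alpha\to 1$ while the claimed right-hand side stays bounded (note $g\le 1$); the bound is only of the correct size for small $\alpha$, where $h(\pi_{\operatorname{min}})=O(\alpha^2)$. So no decomposition of $\log(1-\alpha)$ and no elementary $\log$-inequalities will complete your plan on the full range $\alpha\in(0,1)$ — and the regime the paper actually uses downstream, $\alpha\ge 1-1/N$, is precisely the problematic one. Your explicit computation in fact exposes the delicate point in the paper's own argument: there, $g(q_{\operatorname{min}}^{\alpha,i})$ is replaced by $g((1-\alpha)\pi_{\operatorname{min}})$, yet $g$ is increasing and $(1-\alpha)\pi_{\operatorname{min}}$ is a lower bound on $q_{\operatorname{min}}^{\alpha,i}$, so that substitution shrinks rather than enlarges the bound. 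To salvage your approach you would need either to restrict to small $\alpha$ (proving an $O(\alpha)$ or $O(\alpha^2)$ statement from your exact formula) or to weaken the claimed bound so that its second factor retains the $\alpha$-dependence of the perturbed row's actual minimal mass.
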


\section{Experiments}\label{sec:experiments}
\subsection{Financial Market Movement (FMM) Task on NIFTY using \colorwblk{MoE-F}}
\label{ssec:experiments-nifty}
\vspace{-0.5em}
We show the efficacy of our proposed \algoacronym algorithm by using various SOTA class large language models on the \textit{financial market movement} prediction task. 

\paragraph{Task} The Financial Market Movement (FMM) prediction task for experts' evaluation can be defined as a ternary or binary market movement direction \textit{classification task} among the labels' set $C$: \{ \colorwr{\textit{`Fall'}}, \colorwb{\textit{`Neutral'}}, \colorwg{\textit{`Rise'}} \} conditioned on history (or, expert memory) of window size $H$ (i.e., on the time window $[t-H+1,t]$) -- similar to the auto-regressive or causal generative language model (causal LM) training objective.

\paragraph{Experts}
\label{ss:experts} 
We consider a diverse list of SOTA general purpose instruction-tuned LLMs as experts for the experiments on our proposed \algoacronym algorithm. 
For single LLM experts, we use Meta's open-weights models: Llama-2 (7B, 70B), Llama-3 (8B, 70B)~\citep{touvron2023llama}. For mixture of experts (MoE) architecture models, we use SOTA open-weights models: Mixtral (8x7B)~\citep{moe_jiang2024mixtral} -- which is a mixture of 8 Mistral (7B)~\citep{mistral_jiang2023mistral} models -- and DBRX-Instruct~\citep{dbrx_MosaicAI2024} with 132B total parameters and a mixture of 16 (fine-grained, smaller, 65x more combinations of) experts. 
For evaluation, we deployed these open-weights models as vLLM~\citep{vllm_kwon2023efficient} OpenAI compatible API endpoints and ran the dataset queries against them. We use API/model configurations like \textit{guided-choice} and \textit{max-tokens} to format class label converged expert responses alongside specific prompt instructions. We use the closed-source latest variants of the GPT-4~\citep{openai2023gpt4} class of models: GPT4o, using the OpenAI API. These experts are leading foundation models on current performance benchmarks on language understanding (MMLU~\citep{benchmark_mmlu_hendrycks2020measuring}), programming (HumanEval~\citep{benchmark_humaneval_chen2021evaluating}), math (GSM8K~\citep{benchmark_gsm8k_cobbe2021training}) tasks and other concurrent LLM benchmarks~\citep{song2023globalbench, liang2022holistic}. 

\paragraph{Datasets}\label{ss:datasets} 
For real-world experiments on the defined FMM task, we use the US equities market movement (NYSE ticker: \$SPY) dataset \colorwb{NIFTY}~($\mathcal{D}_{LM}$)~\citep{fin-dataset_saqur2024nifty}. The test split statistics which we used are recorded in Table~\ref{tab:NIFTY-test-stats}.

\begin{wrapfigure}{r}{0.5\textwidth}
\begin{minipage}[t]{0.5\textwidth}
    \centering
    \captionof{table}{Statistics of NIFTY test split}
    \label{tab:NIFTY-test-stats}
    \setlength{\tabcolsep}{2pt} 
    \renewcommand{\arraystretch}{1.2}
    \begin{adjustbox}{width=\textwidth}
    \begin{tabular}{lc}
    \toprule
    Category & Statistics \\
    \midrule
    Number of days ( \bm{$T$}) / \colorwblk{increment} (\colorwblk{$\Delta t$})                           & \bm{$317$}  / \colorwblk{1} \\
    Label support (\colorwr{Fall} / \colorwb{Neutral} / \colorwg{Rise})     & \colorwr{73} / \colorwb{143} / \colorwg{101} \\
    Date range (\textit{start} to \textit{end})                                 & 2019-02-13 to 2020-09-21 \\
    \bottomrule
    \end{tabular}
    \end{adjustbox}
\end{minipage}
\end{wrapfigure}

Each sample of the $\mathcal{D}_{LM}$ contains high-quality, processed (one-turn) conversational queries for an expert instruction fine-tuned LLM, where a query, $x_q^t$, comprises of a prompt $x_p^t$ and a response $x_r^t$, i.e.,  \( x_q^t = (x_p^t; x_r^t) \) corresponding to a day (or, time-step) $t$.

For evaluation, at each time step $t$, an expert LLM is prompted ($x_{p}^t$) to predict the market movement the following day (i.e., $t+1$), based on the market's current contextual information (relevant financial news headlines and the market's financial numerics -- like the standard OHLCV and common technical indicators -- from past few days capturing trends). Fig.~\ref{fig:nifty-news} depicts a snapshot of an expert prompt $x_p^t$ for elucidation. Please see Fig.~\ref{fig:nifty-query} in Appendix~\S\ref{app:nifty-dataset} for details.

\begin{figure}[t]
    \begin{center}
    \vspace*{-1cm}
    \centerline{\includegraphics[width=\textwidth]{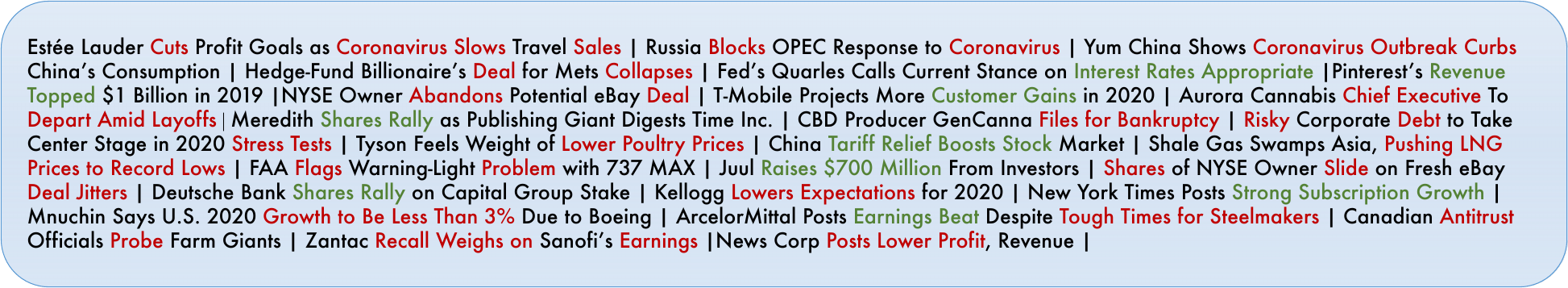}}
    \caption{Example snapshot of the `news` component on 2020-02-06, at the upstart of the global coronavirus epidemic (the text colors here convey \colordr{negative} and \colorwg{positive} sentiments). An expert policy, $\pi_{LM}$'s prompt is composed of a task instruction as prefix,  concatenated with the market context, and this news value concatenated: $s.t.$ 
    $x_p \leftarrow (x_{prefix}; x_{context}; x_{news})$. 
    }
    \label{fig:nifty-news}
    \end{center}
    \vskip -0.2in
\end{figure}

For our ablative experiments, we utilise three additional datasets, namely StockNet aka. \colorwb{\texttt{ACL18}}, \colorwb{\texttt{BigData22}} and \colorwb{\texttt{CIKM}} datasets~\citep{fin-dataset_acl18_stocknet_xu2018stock, fin-dataset_bigdata22_soun2022accurate, fin-dataset_cikm_wu2018hybrid}. These datasets have similar overarching (FMM) task design, but the prompted contextual information (e.g. social media opinions) and targeted asset classes (e.g. individual stock tickers, or price of gold) differ. We delegate the full details of these datasets in~\ref{app:flare_datasets_sm} of Appendix~\S\ref{app:sec:datasets} Datasets.

\begin{figure}[htbp]
    \centering
    \includegraphics[width=.98\textwidth]{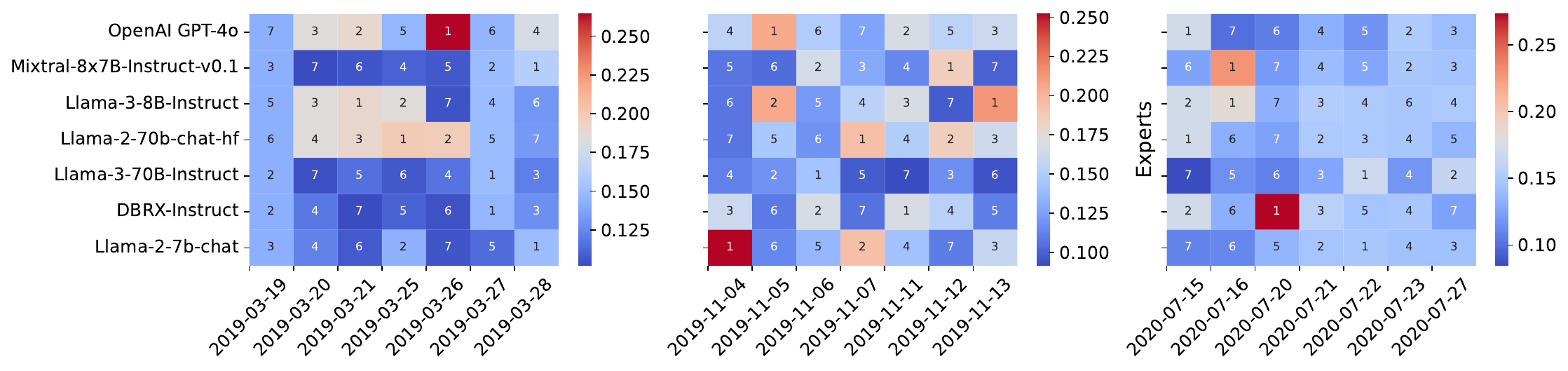} 
    \caption{Heatmap of expert weights and subsequent rankings 
    for the sampled windows in Fig.~\ref{fig:trajectory_moe-f}.}
    \label{fig:expert-weights_heatmap}
\end{figure}

\subsubsection{Results: \colorwblk{MoE-F} on Mixture of (Mixture of) Experts }
\label{ss:results-main}
\begin{table}[tp!]
\centering
\captionsetup{skip=5pt}
\caption{Results of applying \textbf{MoE-F} on seven SOTA experts on the NIFTY (\textit{test split}). All values are mean of 3 (random) runs except GPT-4o. Best value in each metric row is in \textbf{bold}. Confusion matrices in appendix Fig.~\ref{fig:confusion-matrices_table2}}.
\label{tab:moe-llms-nifty-results}
\resizebox{\textwidth}{!}{%
    \renewcommand{\arraystretch}{1.3} 
    \begin{tabular}{@{}lcccccccc@{}}
    \toprule
    & \multicolumn{7}{c}{LLM Experts} &  Experts Filter \\
    \cmidrule(lr){2-8} 
    \cmidrule(lr){9-9}
    Metrics $\uparrow$ & \begin{tabular}[c]{@{}c@{}}Llama-2\\7b-chat\end{tabular}
                       & \begin{tabular}[c]{@{}c@{}}Llama-2\\70b-chat\end{tabular}
                       & \begin{tabular}[c]{@{}c@{}}Llama-3\\8B-Instruct\end{tabular}
                       & \begin{tabular}[c]{@{}c@{}}Llama-3\\70B-Instruct\end{tabular}
                       & \begin{tabular}[c]{@{}c@{}}Mixtral-8x7B\\Instruct-v0.1\end{tabular} 
                       & \begin{tabular}[c]{@{}c@{}}DBRX\\Instruct\end{tabular}
                       & \begin{tabular}[c]{@{}c@{}}OpenAI\\GPT-4o\end{tabular}
                       & \begin{tabular}[c]{@{}c@{}}MoE-F\\(ours)\end{tabular} \\
    \midrule
    F1          & $0.22$ & $0.33$ & $0.35$ & $0.20$ & $0.34$ & $0.34$ & $0.34$ & \textbf{0.52} \\
    Acc         & $0.27$ & $0.37$ & $0.39$ & $0.30$ & $0.33$ & $0.34$ & $0.37$ & \textbf{0.57} \\
    Precision   & $0.35$ & $0.33$ & $0.31$ & $0.32$ & $0.36$ & $0.36$ & $0.33$ & \textbf{0.61} \\
    Recall      & $0.27$ & $0.37$ & $0.39$ & $0.30$ & $0.33$ & $0.34$ & $0.37$ & \textbf{0.57} \\
    \bottomrule
    \end{tabular}%
}
\end{table}

Table~\ref{tab:moe-llms-nifty-results} show the results of running \algoacronym on the collection of SOTA experts. While none of these remarkable experts out-performs others exceedingly or has an overall outstanding performance on the FMM task, filtering the expert decisions using \algoacronym yields remarkable performance both in terms of overall task results and gains --- with \textbf{17\% real and 48.5\% relative F1} measure \textbf{improvement} over the next best performing expert values ($\approx$ 35\%). Fig.~\ref{fig:trajectory_moe-f} depicts our \algoacronym mechanism's decisions trajectory overlayed on the true market movement and expert decision trajectories.

\paragraph{Ablations}
We use the Llama~\citep{touvron2023llama} instruction-tuned, open-weights class of experts --- specifically, Llama-2 7B, Llama-3 8B variants --- to further analyse \algoacronym at work. For ablation experiments, we create 4 additional expert variants (as standard LoRA~\citep{peft_lora_hu2021lora} LLM adapters) by fine-tuning the base expert with the  4 stock movement datasets, namely: \{\texttt{nifty, acl18, cikm18, bigdata22}\}, and run them against the same task and evaluation test split from before. Table~\ref{tab:combined-results} shows the combined results of Llama 2 and 3 along with the 4 expert variants.

\begin{table}[htbp!]
\centering
\captionsetup{skip=5pt}
\caption{Performance of Llama-2-7b-chat and Llama-3-8B-Instruct base models with (SFT LoRA adapter) variants on the NIFTY Stock Price Movement Prediction Task (\textit{test} split).}
\label{tab:combined-results}
\resizebox{\textwidth}{!}{%
\renewcommand{\arraystretch}{1.2} 
\begin{tabular}{@{}lcccccc|cccccc@{}}
\toprule
& \multicolumn{6}{c}{\textbf{Llama-2-7b-chat}} & \multicolumn{6}{c}{\textbf{Llama-3-8B-Instruct}} \\
\cmidrule(lr){2-7} \cmidrule(lr){8-13}
Metrics $\uparrow$ & \colorwblk{Base} & +nifty & +acl18 & +bigdata22 & +cikm18 & MoE-F & \colorwblk{Base} & +nifty & +acl18 & +bigdata22 & +cikm18 & MoE-F \\
\midrule
F1 Score & 0.22 & 0.28 & 0.20 & 0.29 & 0.27 & \textbf{0.43} & 0.34 & 0.36 & 0.19 & 0.23 & 0.24 & \textbf{0.43} \\
Accuracy & 0.27 & \textbf{0.45} & 0.25 & 0.29 & 0.27 & \textbf{0.45} & 0.39 & 0.41 & 0.26 & 0.26 & 0.28 & \textbf{0.47} \\
Precision & 0.35 & 0.20 & 0.36 & 0.32 & 0.31 & \textbf{0.44} & 0.31 & \textbf{0.56} & 0.44 & 0.28 & 0.31 & 0.48 \\
Recall & 0.27 & \textbf{0.45} & 0.25 & 0.29 & 0.27 & \textbf{0.45} & 0.39 & 0.41 & 0.26 & 0.26 & 0.28 & \textbf{0.47} \\
\bottomrule
\end{tabular}%
}
\end{table}

\paragraph{Discussions} As before, using MoE-F delivers superior results for each of the two cases. However, we notice that the overall performance on the FMM task is lower than from earlier: $43\%$ compared to $52\%$ (Table~\ref{tab:moe-llms-nifty-results}). This aligns with our proposed theoretical performance guarantee relative to experts. Increasing the quality and quantity of experts with specialized capabilities improves MoE-F results. Intutitively, in presence of a vastly superior expert, MoE-F will levy higher weight on it for its decisions. If any under-performing (specialized) expert does better over a certain time-window (say during bearish market regime), it percolates up the decision weighting map accordingly. 

\paragraph{Decomposing Expert Performance by Class Labels} 
Table~\ref{tab:Llama-2-3_nifty-detailed-results} tabulates experts' performance by each of the three movement labels detailing another insight and verification of MoE-F's mechanism at work. While some experts have overall equivalent performance (e.g. the F1 score of \textit{`+nifty'} and \textit{`+bigdata22'} adapters), examining their label-specific results show these experts performance emanates from different decisions. To elucidate, the Llama-2 base expert was far better in predicting market `Fall' than Llama-3, however, the latter handily outperforms when predicting `Rise'. Similarly, we also see some experts are degenerate for some label predictions, like the `+nifty' expert that only tends to predict `Neutral' (the imbalanced class label with highest support).

\begin{table}[t]
\centering
\vspace*{-1cm}
\captionsetup{skip=5pt}
\caption{\textbf{Llama-2, 3} experts and MoE-F's performance decomposition by class labels on NIFTY test split.}
\label{tab:Llama-2-3_nifty-detailed-results}
\resizebox{\textwidth}{!}{%
    \renewcommand{\arraystretch}{1.1} 
    \begin{tabular}{@{}lccccccccc@{}}
    \toprule
    & \multicolumn{3}{c}{\colorwr{Fall} (support: 73)} & \multicolumn{3}{c}{\colorwb{Neutral} (support: 143)} & \multicolumn{3}{c}{\colorwg{Rise} (support: 101)} \\
    \cmidrule(lr){2-4} \cmidrule(lr){5-7} \cmidrule(lr){8-10}
    Expert & F1 & Precision & Recall & F1 & Precision & Recall & F1 & Precision & Recall \\
    \midrule
    Llama-2-7b-chat & \textbf{0.35} & 0.23 & \textbf{0.75} & 0.19 & 0.40 & 0.13 & 0.17 & 0.34 & 0.11 \\
    nifty       & \colorwr{\textit{0.00}} & \colorwr{\textit{0.00}} & \colorwr{\textit{0.00}} & \textbf{0.62} & 0.45 & \textbf{1.00} & \colorwr{\textit{0.00}} & \colorwr{\textit{0.00}} & \colorwr{\textit{0.00}} \\
    acl18       & 0.28 & 0.20 & 0.45 & 0.08 & 0.46 & 0.04 & \textbf{0.32} & 0.28 & \textbf{0.39} \\
    cikm18      & 0.21 & 0.18 & 0.25 & 0.34 & 0.43 & 0.29 & 0.29 & 0.26 & 0.32 \\
    bigdata22   & 0.21 & 0.19 & 0.23 & 0.31 & 0.42 & 0.24 & 0.31 & 0.26 & 0.38 \\
    \cmidrule(lr){2-10}
    \colorwblk{MoE-F}       & 0.30 & \textbf{0.27} & 0.33 & 0.60 & \textbf{0.54} & 0.66 & 0.30 & \textbf{0.43} & 0.23 \\
    \thickmidrule 
    Llama-3-8B-Instruct & \colorwr{\textit{0.00}} & \colorwr{\textit{0.00}} & \colorwr{\textit{0.00}} & 0.49 & 0.46 & 0.51 & 0.38 & 0.31 & 0.49 \\
    nifty       & \colorwr{\textit{0.00}} & \colorwr{\textit{0.00}} & \colorwr{\textit{0.00}} & 0.48 & 0.46 & 0.51 & 0.42 & 0.35 & 0.54 \\
    acl18       & \textbf{0.15} & 0.17 & \textbf{0.14} & 0.04 & 0.30 & 0.02 & 0.41 & 0.29 & \textbf{0.70} \\
    cikm18      & 0.13          & 0.16 & 0.11 & 0.27 & 0.42 & 0.20 & 0.39 & 0.29 & 0.58 \\
    bigdata22   & 0.14          & 0.17 & 0.12 & 0.20 & 0.38 & 0.13 & 0.40 & 0.29 & 0.62 \\
    \cmidrule(lr){2-10}
    \colorwblk{MoE-F}       & 0.12          & \textbf{0.45} & 0.07 & \textbf{0.56} & \textbf{0.56} & \textbf{0.57} & \textbf{0.48} & \textbf{0.39} & 0.62 \\
    \bottomrule
    \end{tabular}%
}
\end{table}


\vspace{-1em}
\subsection{Time-series forecasting (TSF) using \colorwblk{MoE-F}}
\label{ssec:experiments-tsf}
\vspace{-.5em}
The mainstream practice of using the classic Mean Squared Error (MSE) in the TSF field makes problems in the domain a suitable test-bed for our regression-based theorem (Theorem~\ref{thrm:filtering_equations_SSEs}). 

\paragraph{Task} Letting $L$ represent the length of some historical observation window, and $H$  the prediction horizon, a generic TSF problem can be formalized as 
\[
\bar{x}_{t+1:t+H} = f(x_{t-L+1:t}),
\]
where $x_{t-L+1:t} \in \mathbb{R}^{L \times C}$ and $\bar{x}_{t+1:t+H} \in \mathbb{R}^{H \times C}$. Here, $C$ is the number of distinct features or channels of an agent's observation; the (MSE) loss measures the discrepancy between the predicted values $\bar{x}^{(i)}_{t+1:t+H}$ and the ground truth $y^{(i)}_{t+1:t+H}$ as
$
    \mathcal{L} = \frac{1}{C} \sum_{i=1}^{C} \big\| y^{(i)}_{t+1:t+H} - \bar{x}^{(i)}_{t+1:t+H} \big\|_2^2
.
$
\subsubsection{Experimental Setup}
\paragraph{Experts and Datasets} Table~\ref{tab:moef-results-tsf} presents a few, among a plethora of concurrent models in the busy TSF research area that come in a variety of flavours, from specialized TSF tasks only models like N-HiTS, N-BEATS~\citep{tsf_challu2023nhits, tsf_nbeats_oreshkin2019n} to customized or fine-tuned LLMs like LLMTime~\citep{tsf_llmtime_gruver2024large}. Immediately recent works in both these categories -- like the SAMFormer~\citep{samformer-ilbert24a} and MoiRai~\citep{moirai-woo24a} -- pushes the envelope on ideas and performance further. The goal of the TSF experiments here is not to achieve superior performance, but to show the application of our online \algoacronym harness over any $N$ arbitrary expert models and effectively combine their strengths in an inexpensive, heuristic manner. Thus, for applying our filter, we chose three recent, well-adopted models with standardized implementation: DLinear, PatchTST, SparseTSF~\citep{tsf_critique_zeng2023transformers, tsf_patchtst_nie2022time, sparsetsf-lin24n} -- using the author provided code from the latter to replicate the models' predictions over varying horizon $H$.  

\begin{wrapfigure}[6]{r}{0.30\textwidth}
\vspace{-15pt} 
\centering
\caption{Electricity Transformer.}\label{tab:tsf-dataset}
\begin{adjustbox}{max width=0.30\textwidth}
        \begin{tabular}{@{}lcc@{}}
        \toprule
        Datasets & ETTh1 \& ETTh2 \\ \midrule
        Channels & 7 \\
        Frequency & hourly \\
        Timesteps & 17,420 \\ \bottomrule
        \end{tabular}
\end{adjustbox}
\end{wrapfigure}

Similarly, we picked the most common and widely adopted datasets among a slew of mainstream TSF datasets (e.g. Electricity \citep{Electricity}, Traffic \citep{Traffic}) from concurrent literature in the LTSF domain: ETTh1\&ETTh2 \citep{tsf_zhou2021informer}. Fig.~\ref{tab:tsf-dataset} presents the summary of these TSF datasets.

\paragraph{Results}
Table~\ref{tab:moef-results-tsf} presents \algoacronym applied to a set of concurrent TSF experts. 
While our replicated performances of the bottom three experts, used for filtering, are lower than published results, the filtered predictions using \algoacronym gains strong performance improvement using only three experts as-is and without any hyperparameters tuning (like optimal \textit{lambda, alpha} values).

\begin{table*}[htb!]
    \centering
    \captionsetup{skip=5pt}
    \caption{MSE results of multivariate  \textit{Long-term Time-series Forecasting} comparing contemporary SOTA expert models and applying our proposed filtering harness~\colorwblk{MoE-F}. Best results are highlighted in \textbf{bold}. Second best results \underline{underlined}. Replicated performance results are presented using \textit{4 decimal places}. Other performances quoted from respective papers. Only the bottom three models were used to generate \algoacronym results. }
    \label{tab:moef-results-tsf}
    \renewcommand{\arraystretch}{1.2} 
    \begin{adjustbox}{max width=\linewidth}
        \begin{tabular}{@{}c|cccc|cccc@{}}
        \toprule
        \textbf{Dataset} & \multicolumn{4}{c|}{\textbf{ETTh1}} & \multicolumn{4}{c}{\textbf{ETTh2}} \\ 
        \midrule
        \textbf{Horizon} & \textbf{96} & \textbf{192} & \textbf{336} & \textbf{720} & 
                            \textbf{96} & \textbf{192} & \textbf{336} & \textbf{720} \\ 
        \midrule
        Informer~\citep{tsf_zhou2021informer}     & 0.865 & 1.008 & 1.107 & 1.181                          & 3.755 & 5.602 & 4.721 & 3.647 \\
        Autoformer~\citep{tsf_wu2021autoformer}  & 0.449 & 0.500 & 0.521 & 0.514                          & 0.645 & 0.788 & 0.957 & 0.792 \\
        Pyraformer~\citep{tsf_pyraformer_liu2021} & 0.664 & 0.790 & 0.891 & 0.963                          & 0.645 & 0.788 & 0.907 & 0.963 \\
        FEDformer~\citep{tsf_fedformer_zhou2022}  & 0.376 & 0.420 & 0.459 & 0.506                          & 0.346 & 0.429 & 0.496 & 0.463 \\
        FiLM~\citep{tsf_film}       & 0.371 & 0.414 & 0.442 & 0.465                          & \underline{0.284} & 0.357 & 0.377 & 0.439 \\
        TimesNet~\citep{tsf_timesnet}    & 0.384 & 0.436 & 0.491 & 0.521                          & 0.340 & 0.402 & 0.452 & 0.462 \\
        FITS (2024)~\citep{tsf_fits}        & 0.375 & \underline{0.408} & \underline{0.429} & 0.427  & \textbf{0.274} & \underline{0.333} & \textbf{0.340} & \textbf{0.374} \\ 
        \midrule        
        PatchTST~\citep{tsf_patchtst_nie2022time}    & 0.6897 & 0.6676 & 0.5997 & 0.6873     & 0.4273  & 0.4964  & 2.6450  & 0.4324 \\
        DLinear~\citep{tsf_critique_zeng2023transformers}     & 0.3788 & 0.4212 & 0.4520 & 0.5230     & 0.2908  & 0.4091  & 0.5320  & 0.7430 \\
        SparseTSF~\citep{sparsetsf-lin24n}     & \underline{0.3631} & \textbf{0.4000} & 0.4346 & \underline{0.4238}     & 0.2945  & 0.3399  & \underline{0.3595}  & \underline{0.3831} \\
        \midrule
        \colorwblk{MoE-F} (ours) & \textbf{0.3630} & 0.4165 & \textbf{0.4178} & \textbf{0.4157} & 0.2911 & \textbf{0.3294} & 0.4840 & 0.3917 \\
        \bottomrule
        \end{tabular}
    \end{adjustbox}
\end{table*}



\section{Related Work}\label{sec:related-work}
\paragraph{Closed-Form Finite Dimensional Filters}
The feasibility of our MoE-F algorithm relied on the discrete state-space structure of the signal process $w_{\cdot}$ and a univariate  observation process $\ell^{(n)}(Y_{[0:\cdot]})_{\cdot}$ to have access to finite-dimensional closed-form filtering equations via the Wonham-Shiryaev filter~\cite{wonham1964some,Sirjaev1965Filtering}.  However, such closed-form filtering equations rarely exist as the general stochastic filtering theory is infinite-dimensional~\cite{stratonovich1959optimum,stratonovich1960application,zakai1969optimal} and thus computationally intractable in practice.  Other than the Wonham-Shiryaev filter, which allows for relatively general dynamics given the finite state-space assumption for the signal process, there is only a handful of such finite-dimensional filters with closed-form recursions.  E.g.~the Kalman-Bucy filter~\cite{Kalman60, bucy2005filtering} or the Bene\v{s} filter~\cite{benevs1981exact}, both of which require highly rigid assumptions.  Otherwise, one does typically have access to interacting systems of particles, so-called particle filters~\cite{djuric2003particle,DelMoralBook_2013}, which can often approximate optimal filters but which are computationally intractable in high dimensions. In our case $N$ can be large and thus these latter techniques may not be suitable for other MoE-F-type approaches to online mixture models.  

\vspace{-.5em}
\paragraph{Bayesian Mixture of Experts}
In the context of mixtures of experts, or large foundation models, one typically relies on a
gating mechanism, or a learned routing among expert models~\cite{moe_jacobs1991adaptive, moe_jordan1994hierarchical}.  These gating mechanisms are often using a \textit{static} Bayesian optimization approach via Gibbs posterior mixtures; e.g.~\cite{alquier2008pac,alquier2016properties,meta-learning_andrychowicz2016learning,rothfuss2021pacoh,rothfuss2023scalable}. 
What the gating mechanisms involved in these approaches have in common is that they all are \textit{static} in the sense that they do not learn in an online manner from dynamically arriving inputs and feedback.  
In contrast to these, our MoE-F model is an online/dynamic Bayesian optimization algorithm which dynamically generates posteriors using a different ($L^2$) notion of optimality rather than the extremal version used to define (static) Gibbs posterior mixtures. 

\vspace{-.5em}
\paragraph{Learned Routing vs. Ours} Most concurrent MoE of LLMs build on the idea of trainable/learned experts mixing, and use a routing layer~\cite{moe_zhou2022mixture} as introduced by ~\cite{moe_jacobs1991adaptive, moe_jordan1994hierarchical} in the 90s. Their efficacy was later shown in~\cite{moe_shazeer2017outrageously} and numerous models and variants of this core idea have been successfully showcased in the current LLM context~\cite{moe_switch_fedus2022switch, moe_jiang2024mixtral, moe_deepseekmoe_dai2024deepseekmoe, moe_gale2023megablocks}. Our work is orthogonal to this approach since this framework has no dependency on learned routing. Our online heuristic approach alongside with formal optimality guarantee allows online adaptation like adding, removing or hot-swapping experts on the fly. Further, the adaptive online ranking of top-experts implicitly subsumes underlying (environmental) regime adaptation as the expert ranking reflects the top-performing experts given the current time step. In case of a sharp shift in the underlying environment, the experts specialized for the shifted environment climbs in ranking ahead of the underperforming (previously better) experts.

\paragraph{Limitations \& Future Work}
In the numerical portion of our work, we only considered LLM experts, which do not incorporate random structures such as neural SDEs, e.g.~\cite{kidger2021neural}, thus our MoE implementation may be missing out on technical market movements and stochastic effects.  In future work, we would like to combine both LLM experts and SDE-based experts, such as neural SDEs, to leverage both news information and technical market movement information.

\section{Conclusion}\label{sec:conclusion}
 We introduced a filtering-based gating mechanism for dynamically adapting mixing procedures to incoming data, unlike static classical MoE methods. Our algorithm enjoys optimality (Theorem~\ref{thrm:filtering_equations__UnifiedFormulation}) and robustness guarantees (Theorem~\ref{thrm:Optimal_Q_Matrix}) that are empirically substantiated by filtering results showing efficacy in two orthogonal real-world applied domains: quantitative finance and long-horizon time series forecasting.

\clearpage
\paragraph{Acknowledgements and Funding}  
AK acknowledges financial support from an NSERC Discovery Grant No.\ RGPIN-2023-04482 and their McMaster Startup Funds. RS is supported by Canada NSERC CGS-D Doctoral Grant. 
RS and AK acknowledge that resources used in preparing this research were provided, in part, by the Province of Ontario, the Government of Canada through CIFAR, and companies sponsoring the Vector Institute \href{https://vectorinstitute.ai/partnerships/current-partners/}{https://vectorinstitute.ai/partnerships/current-partners/}. The authors would like to thank Marshall Wang for helping with reference code for computing DBRX experiments.

\section*{Broader Impact and Societal Implications}
\label{sec:societal_impact}
This research on stochastic filtering-based gating mechanisms for online MoE foundation models presents significant advancements, with broad plausible societal implications. Specifically, this technology can enhance adaptability in scientific predictions using complex dynamic systems, which is important in various disciplines including (but not limited to) financial markets, healthcare, and climate science for example. 
The ability to integrate multiple expert models dynamically can lead to more informed decision-making that is less constrained than conventional methods. However, as this technology is further refined, it will be important to prevent misuse in situations where advantage may be unfairly balanced, as in high-frequency trading. This technology should be used in transparent, accountable, and regularly audited frameworks to ensure its responsible deployment.

\medskip \noindent
We note that access to highly-capable pre-trained foundation models is publicly available through open-weight LLM repositories, such as HuggingFace.
The \colorwblk{MoE-F} filtering harness can be utilized to enhance the performance of many downstream tasks using an ensemble of off-the-shelf expert models, without the need for costly fine-tuning. 
However, there is a growing concern about the lack of transparency concerning the training data used in these models. Consequently, any inherent biases in the experts could be exacerbated by the filtering process. 
For example, a biased mortgage risk evaluating model can persistently rank higher due to predicting a majority/minority label. 



\newpage
\appendixpage
\DoToC
\appendix
\section{The \texorpdfstring{\algoacronym}{} Algorithm}
\label{s:MoEFAlgoDescription}
This section records a detailed version of Algorithm~\ref{algo:MoEFAbridged}, which can also be rolled-forward online as the target process $Y_{\cdot}$ is dynamically observed. 
\begin{algorithm}[htp!]
\caption{The MoE-F Algorithm}
\label{algo:main}
\SetAlgoLined
\LinesNumbered
\DontPrintSemicolon
\KwIn{A time-horizon $T\in \mathbb{N}_+$, $N$ (pre-trained) experts $f^{(1)},\dots,f^{(N)}$, hyperparameters $\lambda
>0$, $\alpha \in (0,1)$ and $k\in \mathbb{N}_+$, target $(Y_t)_{t=0}^{T-1}$, and input signal $x_{[0:T-1]}$.}
\KwOut{A posterior Mixture Weights $w_t$}
\tcc{Initialize}
    Initialize $\pi\eqdef (\pi^{(n:i)})_{n,i=1}^N
    \gets {(1/N)_{n,i=1}^N}$\;
    $Q
    \leftarrow (1/(N-1)I_{i\neq j} 
    -1\,I_{i=j})_{i,j=1}^N$
    \;
    {$(L_{-}^{(n)})_{n=1}^N\gets 0$}\;
\For{$t=0,\dots,T-1$}{ 
    \SetKwBlock{ForParallel}{For $n=1,\dots,N$ in parallel}{end}
    \ForParallel{
            $\bar{A} \gets \bar{A}_t^{{(n)}}(
                \pi^{(n)}
            ,Y_t)$ \;
            $B \gets B_t^{{(n)}}(Y_t)$
            {$\tilde{\pi} \gets \pi$}\;
            {
            $L^{(n)}\gets \ell\big(f^{(n)}(x_{[0:t]})\big)$
            }\;
            $
            {\Delta} L \leftarrow L^{(n)}-L_{-}^{(n)}
            $\;
            $\Delta\overline{W}
            \gets
            \frac{
                    {\Delta} L
                - 
                    \bar{A}
            }{
                B
            }
            $\;
            {$L_-^{(n)}\gets L^{(n)}$}\;
            \tcc{Update components of $n^{th}$ expert's posterior ($\pi^{(n)}$)}
            \For{$i=1,\dots,N$}{
                $
                A \gets A_t^{{(n)}}(e_i,Y_t)
                $
                \;
                $\mathrm{drift} 
                \gets 
                Q_{i}^{\top}{\tilde{\pi}}
                ^{(n)}$\;
                $\mathrm{diffusion} \gets 
                    \tilde\pi
                    ^{(n:i)}
                    \,
                    (
                        A
                        -
                        \bar{A}
                    )
                /
                    B
                $\;
                $\pi
                ^{(n:i)} \gets {\tilde{\pi}}
                ^{(n:i)} + \mathrm{drift} + \mathrm{diffusion}
                \Delta\overline{W}
                $ \;
            }
    {
        $\pi^{(n)} \gets \pi^{(n)}/\sum_i \pi^{(n:i)}$ 
        \;
    }
    {
        $s_n \gets 
            \ell
            \big(
                Y_t 
                ,
                    ({\pi^{(n)}})^{\top}\, F(x_{[0:t]})
            \big)
        $
        \;
        $\hat{Y}^{(n)}_t
        \gets 
        (\pi^{(n)})^{\top} F(x_{[0:t]})
        $
    }
    \tcp*{Calculate expert scores}
    }
    {
    $\bar{\pi} \gets
    \big(
        e^{
            -\lambda \, s_n
        }/\big(\sum_{i=1}^N\, e^{
            -\lambda \, s_i
        } \big)
    \big)_{n=1}^N
    $
    }
    \tcp*{Get Expert Scores}
    {
    $\hat{Y}_t\eqdef \bar{\pi}^{\top}\,
    \big(
        \hat{Y}^{(n)}_t
    \big)_{n=1}^N
    $
    }\tcp*{Get time $t$ prediction}
            \tcc{Update $Q$}
            {$\tilde{Q}\gets Q$}\;
            \For{$n = 1,\dots, N$}{
                $P^{(n)}\gets \bar{\pi}$
            }
    $
    P\gets (1-\alpha) \, P + \alpha I_N
    $\;
    $Q
    \gets 
    \operatorname{ReLU}(\log(P))-\operatorname{diag}(\bar{1}_N^{\top}\operatorname{ReLU}(\log(P)))$
    \;
%
%
%
}
\Return{
{Sequence of Mixture Predictions $(\hat{Y}_t)_{t=0}^{T-1}$}
}\;
\end{algorithm}
\clearpage

\section{Proofs}
\label{app:sec:proofs}

This section contains the proofs of our main result, generalizations thereof, and variants which apply to the quadratic (squared) loss.  In the latter case, the necessary modifications to the algorithm and the overall proof structure are relatively similar but with key technical differences.

\paragraph{Mild Generalizations and Further Discussion}
We will consider the slightly more general case where the target process $Y_{\cdot}$ follows the generalized dynamics
\begin{equation}
\label{eq:Y_agenti}
            Y_t
        = 
            Y_0
        +
            \underbrace{
                \int_0^t\, 
                    w_t^{\top}
                    \,
                    F(x_s)
                \,ds
            }_{\text{Best Expert Estimate}}
        + 
            \underbrace{
                    \int_0^t\,
                        \sigma_s
                        \,dW_s
            }_{\text{(Generalized) Idiosyncratic Residual}}
,
\end{equation}
where, there are constants $\alpha,C\ge 0$ with $C\le 1$ such that: 
for each $t\ge 0$ one has $\sigma_t=C\,e^{-\alpha \, t}$.  
By the It\^{o}-isometry, see \citep[Lemma 12.1.4]{CohenElliotBook_2015}, we have that the variance of $\int_0^t\,
        \sigma_s
    \,dW_s$ 
is given by
\begin{equation}
\label{eq:cases_variance__writtenout}
    \varsigma_t^2
\eqdef 
    \mathbb{E}\Biggr[
        \biggl(
            \int_0^t \, \sigma_s\,dW_s
        \biggr)^2
    \Biggl]
=
    \begin{cases}
        C^2\,(1-e^{-\alpha 2 t})/(2\alpha) & \mbox{ if } \alpha >0\\
        t & \mbox{ if } \alpha = 0
    \end{cases}
\end{equation}
Observe that, if $\alpha>0$ then the variance of $\int_0^t\,\sigma_s\,dW_s$ asymptotically stabilizes at $1$, as $t$ becomes arbitrarily large.  In contrast, the variance of $\int_0^t\,\sigma_s\,dW_s$ diverges in the case where $\alpha=0$ (which is the case considered in the main body of our paper).

\paragraph{Intuition behind the choice of assumed fluctuations/diffusion}
The intuition behind this modelling choice for the diffusion coefficient $\sigma_{\cdot}$ is based on ideas behind concentration of measure. Consider the case where $\alpha>0$ in~\eqref{eq:cases_variance__writtenout}.  Since we will be considering classification applications, then we will not want the idiosyncratic residual $\int_0^t \sigma_s \, W_s$ to push fluctuate outside the unit interval $[0,1]$, or rather the probability that any fluctuation of $\int_0^t \sigma_s \, W_s$ is ``large'' should be small.  Since $\int_0^t \sigma_s \, W_s$ has a Gaussian distribution, then note, by standard Gaussian concentration inequalities, we have that
\begin{equation}
\label{eq:flucuation_above_12_prob}
\mathbb{P}\biggl(
        \Big|
            \int_0^t \sigma_s \, W_s 
        \Big|
    \ge 
        1/2
\biggr)
\le e^{(1/2)^2/(2\varsigma_t^2)}
=
    e^{
        - \alpha/\big( 4 C^2(1-e^{-\alpha 2 t})\big)
    }
\le 
    e^{-\alpha/(C^24)}
\le 
    e^{-\alpha/4}
.
\end{equation}
We can control the probability that any fluctuation is ``large'', meaning larger than $1/2$, by setting the right-hand side of~\eqref{eq:flucuation_above_12_prob} to be a prespecified ``small'' value $\delta\in (0,1]$ and solving for the required $\alpha>0$ parameter in terms of $\delta$ yields the specification $\alpha=\ln(1/\delta^4)$.
If $d\ge 2$, then we may set $C=\frac{\sqrt{2}}{d}$ purely for convenience in simplifying expressions below.

In this case, for any hyperparameter $0<\delta \le 1$, the quantities in Theorem~\ref{thrm:filtering_equations__UnifiedFormulation} become
\begin{align}
    \pi_t^{(n:i)}
= 
    w_0^i
    + 
    \int_0^t \,
        (Q_s)_{i}^{\top}\pi_s^{(n)}
    \,
    ds
+ 
    \int_0^t \,
        \frac{
            \pi_s^{(n:i)}
            \,
            \big(
                A_s(e_i,Y_{[0:s]})
                -
                \bar{A}_s(Y_{[0:s]})^{\top}
                \pi_s^{(n)}
            \big)
        }{
            B_s(Y_{[0:s]})
        }
    d\overline{W}_u^{(n)}
,
\end{align}
where $(Q_t)_{i}$ denotes the $i^{th}$ row of the transitions matrix $Q_t$ at time $t\ge 0$,
$w_0^i\eqdef \mathbb{P}(w_0=e_i)$ and where the ``innovations process'' $\overline{W}_{\cdot}^{(n)}\eqdef (\overline{W}_t^{(n)})_{t\ge 0}$ is the following $(\mathbb{P},\mathcal{F}_{\cdot}^{n})$-Brownian motion
\begin{equation*}
        \overline{W}_s^{(n)}
    \eqdef 
        \int_0^s
            \frac{
                dL^{(n)}_{[0:u]}
                -
                \bar{A}_u(Y_{[0:u]})
            }{
                B_u(Y_{[0:u]})
            }
        du
,
\end{equation*}
where the ``stochastic differential'' $dL^{(n)}_{[0:u]}$ is given by
\begin{align*}
    dL^{(n)}_{[0:u]}
    &
        =
        \bigg(
                2\big(
                    Y_u-f^{(n)}(x_u)
                \big)
                [
                    \nabla_u f^{(n)}(x_u)
                    +
                    w_u^{\top}\,F(x_u)
                ]
        -
             e^{-2t\ln(1/\delta^4)}
        \,
        \bigg)
        du
    \\
    & 
        +
            \frac{2^{3/2}}{d}
            \,
            (Y_u - f^{(n)}(x_u))
                e^{-t\ln(1/\delta^4)}
            \,
            dW_u
.
\end{align*}

\subsection{{Proof of Theorem~\ref{thrm:filtering_equations__UnifiedFormulation}}}
\label{s:Proofs__ss:thrm:filtering_equations}

We are now ready to state and prove two versions, one of which generalizes, our first main result (Theorem~\ref{thrm:filtering_equations__UnifiedFormulation}).
We consider two cases.  We obtain our main result by customizing Theorem~\ref{thrm:filtering_equations__UnifiedFormulation} to our classification problem, where $D=1$ and the range of each expert is in $\{0,1\}\subset \mathbb{R}$, and setting $\sigma$ to be a specific constant in $(0,\infty)$.  For convenience, if we postulate that $\sigma_t=1$; i.e.\ it is a constant function of the path $y_{[0:t]}$ and of time $t\ge 0$.  
Note that, by the It\^{o}-isometry, see \citep[Lemma 12.1.4]{CohenElliotBook_2015} the \textit{idiosyncratic residual} term 
$
\int_0^t\,
        \sigma_s(Y_{[0:s]})
    \,dW_s
$ in~\eqref{eq:Y_agenti__mainform_specialcase} has a centred normal random distribution with variance $\int_0^t \sigma_s^2 \,ds = t$.  

\subsubsection{Case I: Binary Cross-Entropy Case}
We now state and prove a mild generalization of Theorem~\ref{thrm:filtering_equations__UnifiedFormulation}.

\begin{theorem}[Optimal Optimistic Prior for $n^{th}$ Expert - Squared Loss Case]
\label{thrm:filtering_equations__BCE}
Consider the binary cross-entropy loss 
\begin{equation*}
    \ell(\hat{y},y) \eqdef y \log(\hat{y}) + (1-y) \log(1-\hat{y})
\end{equation*}

and 
fix a continuously differentiable path $x_{\cdot}\in C^1(\mathbb{R})$. 
\hfill\\
Under Assumptions~\ref{ass:regularity_SW_filter}, the best a posteriori estimate of the $n^{th}$ expert, $\pi_t^{(n)}$, satisfies the following stochastic differential equation
    \begin{align}
        \pi_t^{(n:i)}
    = 
        w_0^i
        + 
        \int_0^t \,
            (Q_t)_{i}^{\top}\pi_s^{(n)}
        \,
        ds
    + 
        \int_0^t \,
            \frac{
                \pi_s^{(n:i)}
                \,
                \big(
                    A_s(e_i,Y_{[0:s]})
                    -
                    \bar{A}_s(
                    \pi^{(n)}
                    ,
                    Y_{[0:s]})
                \big)
            }{
                B_s(Y_{[0:s]})
            }
        d\overline{W}_u^{(n)}
    ,
    \end{align}
where $(Q_t)_{i}$ denotes the $i^{th}$ row of the transitions matrix $Q_t$ at time $t\ge 0$,
$w_0^i\eqdef \mathbb{P}(w_0=e_i)$, 
    \begin{equation*}
    \begin{aligned}
    A_t^{(n)}(w,y_{[0:t]})
& \eqdef 
       -\frac{
            \big(Y_s-f^{(n)}(x_{[0:t]})\big)
            \,
            \Delta f^{(n)}(x_{[0:t]})
        }{
            \big(1-f^{(n)}(x_{[0:t]})\big)
            \,
            f^{(n)}(x_{[0:t]})
        } 
    -
        \log\biggl(
            \frac{
                f^{(n)}(x_{[0:t]})
            }{
                1- f^{(n)}(x_{[0:t]})
            }
        \biggr)
        \,
        [w_s^{\top}F(x_{[0:s]})]
    \\
    \bar{A}_t^{(n)}(
        {\pi^{(n)}}
    ,
        y_{[0:t]}
    ) & \eqdef 
        \sum_{i=1}^d\,A_t(e_i,Y_{[0:t]})
        \,
        {\pi^{(n:i)}}
    \\
    B_t^{(n)}(y_{[0:t]}) & \eqdef 
            -\log\biggl(
            \frac{
                f^{(n)}(x_{[0:t]})
            }{
                1- f^{(n)}(x_{[0:t]})
            }
        \biggr)
        \,e^{s\ln(\delta^4)}
    \\
    F(x_{[0:t]}) & \eqdef \big(f^{(1)}(x_{[0:t]}),\dots,f^{(N)}(x_{[0:t]})\big)
    \end{aligned}
    \end{equation*}
and the ``innovations process'' $\overline{W}_{\cdot}^{(n)}\eqdef (\overline{W}_t^{(n)})_{t\ge 0}$ is the following $(\mathbb{P},\mathcal{F}_{\cdot}^{n})$-Brownian motion
    \begin{equation*}
            \overline{W}_s^{(n)}
        \eqdef 
            \int_0^s
                \frac{
                    dL^{(n)}_{[0:u]}
                    -
                    \bar{A}_u(Y_{[0:u]})
                }{
                    B_u(Y_{[0:u]})
                }
            du
    ,
    \end{equation*}
where
\begin{equation}
\begin{aligned}
        dL^{(n)}_{[0:u]}
    =
        d\ell(Y_t,\hat{f}^{(n)}(x_{[0:t]})
    =
    &
                \frac{
                    \big(Y_s-f^{(n)}(x_{[0:t]})\big)
                    \,
                    \Delta f^{(n)}(x_{[0:t]})
                }{
                    \big(1-f^{(n)}(x_{[0:t]})\big)
                    \,
                    f^{(n)}(x_{[0:t]})
                } 
            +
                \log\biggl(
                    \frac{
                        f^{(n)}(x_{[0:t]})
                    }{
                        1- f^{(n)}(x_{[0:t]})
                    }
                \biggr)
                \,
                [w_s^{\top}F(x_{[0:s]})]
\\
\nonumber
&
    +
        \log\biggl(
            \frac{
                f^{(n)}(x_{[0:t]})
            }{
                1- f^{(n)}(x_{[0:t]})
            }
        \biggr)
            \,
            e^{s\ln(\delta^4)}
            \,
            dW_s
    .
\end{aligned}
\end{equation}
\end{theorem}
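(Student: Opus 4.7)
My plan is to reduce the theorem to a direct application of the Wonham--Shiryaev filter (or, more generally, the Kushner--Stratonovich equations for a finite-state signal with scalar observations, as in~\cite{liptser1977statistics}) by first identifying the \emph{signal} process as $w_{\cdot}$ and the \emph{observation} process as the running loss $L^{(n)}_{\cdot}$ of the $n^{th}$ expert. Since $\mathcal{F}^{(n)}_t = \sigma\{\ell_s^{(n)}\}_{0\le s\le t}$ is, by construction, the filtration generated by $L^{(n)}$, the posterior $\pi^{(n)}_t$ coincides exactly with the Wonham--Shiryaev filter driven by $L^{(n)}$.

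The bulk of the work is therefore to put $L^{(n)}_{\cdot}$ into the canonical semimartingale form
\[
    dL^{(n)}_t = A_t^{(n)}(w_t,Y_t)\,dt + B_t^{(n)}(Y_t)\,dW_t,
\]
where the drift is affine in the hidden signal $w_t$ and the diffusion coefficient is adapted to the observation filtration. First I would apply It\^o's formula to $L^{(n)}_t = \ell(Y_t, f^{(n)}(x_{[0:t]}))$, using Assumption~\ref{ass:regularity_SW_filter} to justify that $f^{(n)}(x_{[0:t]})$ is absolutely continuous in $t$ with time-derivative $\Delta f^{(n)}(x_{[0:t]})$ (so it has zero quadratic variation). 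Combined with the dynamics of $Y_{\cdot}$ in~\eqref{eq:Y_agenti__mainform_specialcase}, where $dY_t = w_t^{\top} F(x_{[0:t]})\,dt + dW_t$, and the fact that for the BCE loss $\partial_y^2 \ell\equiv 0$ while $\partial_y\ell = \log\bigl(f^{(n)}/(1-f^{(n)})\bigr)$ and $\partial_{\hat y}\ell = (Y_t-f^{(n)})/(f^{(n)}(1-f^{(n)}))$, a routine chain-rule calculation produces precisely the expression claimed for $dL^{(n)}_{[0:u]}$. At this point the pair $(A_t^{(n)},B_t^{(n)})$ can be read off directly, and one checks that $A_t^{(n)}(\,\cdot\,,Y_t)$ is linear on the simplex $\{e_1,\dots,e_N\}$ so that $\bar A_t^{(n)}(\pi^{(n)},Y_t) = \sum_i A_t^{(n)}(e_i,Y_t)\pi^{(n:i)}$ is the conditional expectation of the drift given $\mathcal{F}^{(n)}_t$.

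Once the observation SDE is in canonical form, the second step is essentially bookkeeping: I would invoke the standard result that, for a continuous-time Markov chain $w_{\cdot}$ on $\{e_1,\dots,e_N\}$ with intensity matrix $Q_t$ observed through a one-dimensional It\^o process with drift linear in $w_t$ and non-degenerate diffusion $B_t$, the normalised filter $\pi^{(n:i)}_t = \mathbb{P}(w_t=e_i\mid \mathcal{F}^{(n)}_t)$ solves
\[
    d\pi^{(n:i)}_t = (Q_t)_i^{\top}\pi^{(n)}_t\,dt + \frac{\pi^{(n:i)}_t\bigl(A_t^{(n)}(e_i,Y_t)-\bar A_t^{(n)}(\pi^{(n)}_t,Y_t)\bigr)}{B_t^{(n)}(Y_t)}\,d\overline W^{(n)}_t,
\]
with innovations process $\overline W^{(n)}_t = \int_0^t B_s^{(n)}(Y_s)^{-1}\bigl(dL^{(n)}_{[0:s]}-\bar A_s^{(n)}(\pi^{(n)}_s,Y_s)\,ds\bigr)$; L\'evy's characterisation then confirms $\overline W^{(n)}$ is a $(\mathbb{P},\mathcal{F}^{(n)}_{\cdot})$-Brownian motion. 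The drift part of this filter follows from Kolmogorov's forward equation for $w_{\cdot}$, and the diffusion part from the innovations-based projection of the signal onto the observation filtration.

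The main obstacle I anticipate is verifying the non-degeneracy and integrability conditions required to invoke the Liptser--Shiryaev filter: one needs $B_t^{(n)}(Y_t)$ to be almost surely non-zero and $A_t^{(n)}$ to satisfy an appropriate square-integrability bound so that the Girsanov-type change of measure underlying the innovations method is legitimate. Under Assumption~\ref{ass:regularity_SW_filter} and the assumption that each expert $f^{(n)}$ takes values in the \emph{open} interval $(0,1)$, these bounds hold on compact time intervals but must be justified carefully, especially near the boundary where $\log(f^{(n)}/(1-f^{(n)}))$ can blow up; the regression variant of the statement avoids this logarithmic singularity but instead requires handling the exponentially decaying $e^{t\ln(\delta^4)}$ factor in $B_t^{(n)}$, which is straightforward.
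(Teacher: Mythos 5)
Your proposal follows essentially the same route as the paper's own proof: apply It\^{o}'s formula to the running BCE loss $L^{(n)}_t=\ell_t^{(n)}(Y_t)$ (using the $C^1$ path assumption so $\partial_t f^{(n)}(x_{[0:t]})=\Delta f^{(n)}$ and $\partial_y^2\ell\equiv 0$), read off the drift/diffusion pair $(A^{(n)},B^{(n)})$ to put the observation in the canonical Liptser--Shiryaev form, and then invoke the Wonham--Shiryaev filtering theorem (\citep[Theorem 9.1]{LipShiryaev_II_2001}) to obtain the stated SDE for $\pi^{(n)}_t$ with the innovations Brownian motion. If anything, you are slightly more explicit than the paper about verifying the non-degeneracy of $B^{(n)}$ and the integrability conditions needed to apply that theorem, which the paper subsumes under Assumption~\ref{ass:regularity_SW_filter}.
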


\begin{proof}
Fix a continuously differentiable path $x_{\cdot}\in C^1(\mathbb{R})$. Define $\Delta f(x_{[0:t]})\eqdef \partial_t f(x_{[0:t]})$ and 
set $\ell:\mathbb{R}\ni y\mapsto ( y-f(x_{[0:t]}) )^2$.

First, observe that: for all $t\ge 0$, all $x_{\cdot}\in C^1(\mathbb{R})$ and all $y\in \mathbb{R}$ 
one has
\begin{equation}
\begin{aligned}
        \ell_t(y)
    = & \,
        -\big(
            y\,\log(f^{(n)}(x_{[0:t]})) + (1-y)\,\log(1-f^{(n)}(x_{[0:t]}))
        \big)
\\
        \frac{\partial \ell_t }{\partial t}(y)
    = & \,
        -
        \frac{
            \big(y-f^{(n)}(x_{[0:t]})\big)
            \,
            \Delta f^{(n)}(x_{[0:t]})
        }{
            \big(1-f^{(n)}(x_{[0:t]})\big)
            \,
            f^{(n)}(x_{[0:t]})
        }
,
\\
        \frac{\partial \ell_t }{\partial y}(y)
    = & \,
        -
        \log\biggl(
            \frac{
                f^{(n)}(x_{[0:t]})
            }{
                1- f^{(n)}(x_{[0:t]})
            }
        \biggr)
,
\\
        \frac{\partial^2 \ell_t }{\partial y^2}(y)
    = & \,
        0
.
\end{aligned}
\end{equation}
Since $\ell\in C^{\infty}(\R^d\times \R^d)$, we assumed that the path $x_{\cdot}\in C^1([0:\infty),\R^d)$ then this, together with the postulated dynamics on $Y_{\cdot}^{(n)}$ imply that It\^{o}'s Lemma/Formula, see \citep[Theorem 14.2.4]{CohenElliotBook_2015}, used on the map $\ell_t^{(n)}:[0,\infty)\times \R^D\ni (t,y)\to ( y-f^{(n)}(x_t))^2\in \R$ pre-composed with $Y_{\cdot}$ applies.  
Whence, our and the assumed dynamics on $Y_{\cdot}$, postulated in~\eqref{eq:Y_agenti__mainform_specialcase}, imply that the process $L_t^{(n)}\eqdef \ell_t^{(n)}(Y_t)$ satisfies the following stochastic differential equation
\allowdisplaybreaks
\begin{align}
\nonumber
        L_t^{(n)} 
    =  
        L_0^{(n)}
    & +
        \int_0^t
            \,
                \frac{\partial 
                \ell_s^{(n)}
                }{\partial s}
                (Y_s) 
    \\
\nonumber
    &
            +
                \frac{\partial \ell_s^{(n)} }{\partial y} (Y_{s})
                \,
                [w_s^{\top}F(x_{[0:s]})]
    \\
\nonumber
    &
            +
            \frac1{2}\,
            \frac{\partial^2 \ell_t^{(n)} }{\partial y^2}(Y_s) \,
            2e^{s\ln(\delta^8)}
        ds
\\
\nonumber
&
    +
        \int_0^t
            \,
            \frac{\partial \ell_t^{(n)} }{\partial y} (Y_s)
            e^{s\ln(\delta^4)}
            \,
            dW_s
\\
    =  
        L_0^{(n)}
    & +
        \int_0^t
            \,
                \frac{
                    (-1)\big(Y_s-f^{(n)}(x_{[0:t]})\big)
                    \,
                    \Delta f^{(n)}(x_{[0:t]})
                }{
                    \big(1-f^{(n)}(x_{[0:t]})\big)
                    \,
                    f^{(n)}(x_{[0:t]})
                } 
    \\
\nonumber
    &
            +
                (-1)\log\biggl(
                    \frac{
                        f^{(n)}(x_{[0:t]})
                    }{
                        1- f^{(n)}(x_{[0:t]})
                    }
                \biggr)
                \,
                [w_s^{\top}F(x_{[0:s]})]
    \\
\nonumber
    &
            +
            \frac1{2}\,
            0
        \,\,
        ds
\\
\nonumber
&
    +
        \int_0^t
            \,
                (-1)\log\biggl(
                    \frac{
                        f^{(n)}(x_{[0:t]})
                    }{
                        1- f^{(n)}(x_{[0:t]})
                    }
                \biggr)
            \,
            e^{s\ln(\delta^4)}
            \,
            dW_s
\\
\\
\nonumber
    =  
        L_0^{(n)}
    & +
        \int_0^t
            \,
                -\frac{
                    \big(Y_s-f^{(n)}(x_{[0:t]})\big)
                    \,
                    \Delta f^{(n)}(x_{[0:t]})
                }{
                    \big(1-f^{(n)}(x_{[0:t]})\big)
                    \,
                    f^{(n)}(x_{[0:t]})
                } 
            -
                \log\biggl(
                    \frac{
                        f^{(n)}(x_{[0:t]})
                    }{
                        1- f^{(n)}(x_{[0:t]})
                    }
                \biggr)
                \,
                [w_s^{\top}F(x_{[0:s]})]
        \,\,\,
        ds
\\
\nonumber
&
        -\log\biggl(
            \frac{
                f^{(n)}(x_{[0:t]})
            }{
                1- f^{(n)}(x_{[0:t]})
            }
        \biggr)
        \int_0^t
            \,
            e^{s\ln(\delta^4)}
            \,
            dW_s
\end{align}
Synchronizing our notation with \citep[Equation (9.1)]{LipShiryaev_II_2001}, we write
\begin{equation}
\begin{aligned}
    A_t(w,y_{[0:t]})
& \eqdef 
       -
       \frac{
            \big(Y_s-f^{(n)}(x_{[0:t]})\big)
            \,
            \Delta f^{(n)}(x_{[0:t]})
        }{
            \big(1-f^{(n)}(x_{[0:t]})\big)
            \,
            f^{(n)}(x_{[0:t]})
        } 
    -
        \log\biggl(
            \frac{
                f^{(n)}(x_{[0:t]})
            }{
                1- f^{(n)}(x_{[0:t]})
            }
        \biggr)
        \,
        [w_s^{\top}F(x_{[0:s]})]
\\
    B_t(y_{[0:t]}) 
& \eqdef 
    -\log\biggl(
    \frac{
        f^{(n)}(x_{[0:t]})
    }{
        1- f^{(n)}(x_{[0:t]})
    }
\biggr)
\,e^{s\ln(\delta^4)}
\end{aligned}
\end{equation}
Under Assumptions~\ref{ass:regularity_SW_filter}, we may apply \citep[Theorem 9.1]{LipShiryaev_II_2001} to deduce that
Then the a posteriori probability $\pi_t^{n:0}\eqdef (\pi_t)_0$, satisfies a system of equations
\begin{align}
    \pi_t^{(n:i)}
= 
    w_0^i
    + 
    \int_0^t \,
        (Q_t)_{i}^{\top}\pi_s^{(n)}
    \,
    ds
+ 
    \int_0^t \,
        \frac{
            \pi_s^{(n:i)}
            \,
            \big(
                A_s(e_i,Y_{[0:s]})
                -
                \bar{A}_s(
                    {\pi^{(n)}}
                ,
                    Y_{[0:s]}
                )
            \big)
        }{
            B_s(Y_{[0:s]})
        }
    d\overline{W}_u
,
\end{align}
where $(Q_t)_{i}$ denotes the $i^{th}$ row of the $Q_t$/transitions matrix $Q_t$ at time $t\ge 0$,
$w_0^i\eqdef \mathbb{P}(w_0=e_i)$, and the ``innovations process'' is the $(\mathbb{P},\mathcal{F}_{\cdot}^{n})$-Brownian motion given by
\begin{equation}
        \overline{W}_s^{(n)}
    \eqdef 
        \int_0^s
            \frac{
                dL^{(n)}_{[0:u]}
                -
                \bar{A}_u(
                    {\pi^{(n)}}
                ,
                    Y_{[0:u]}
                )
            }{
                B_u(Y_{[0:u]})
            }
        du
\end{equation}
and where 
\[
\bar{A}_t(
{\pi^{(n)}}
,
y_{[0:t]})\eqdef 
\sum_{i=1}^d\,A_s(e_i,Y_{[0:t]})\,{\pi^{(n:i)}}
.
\]
This completes the proof.
\end{proof}

\begin{remark}
Setting $\delta=1$ in the previous derivation yields the formulation of Theorem~\ref{thrm:filtering_equations__UnifiedFormulation} found in the main body of the paper.

\end{remark}

\subsubsection{Case II: Squared Loss}

\begin{theorem}[Optimal Optimistic Prior for $n^{th}$ Expert - Squared Loss Case]
\label{thrm:filtering_equations_SSEs}
Let $\ell(\hat{y},y)\eqdef (y-\hat{y})^2$ and 
fix a continuously differentiable path $x_{\cdot}\in C^1(\mathbb{R})$. 
\hfill\\
Under Assumptions~\ref{ass:regularity_SW_filter}, the best a posteriori estimate of the $n^{th}$ expert, $\pi_t^{(n)}$, satisfies the following stochastic differential equation
    \begin{align}
        \pi_t^{(n:i)}
    = 
        w_0^i
        + 
        \int_0^t \,
            (Q_t)_{i}^{\top}\pi_s^{(n)}
        \,
        ds
    + 
        \int_0^t \,
            \frac{
                \pi_s^{(n:i)}
                \,
                \big(
                    A_s(e_i,Y_{[0:s]})
                    -
                    \bar{A}_s(
                    \pi^{(n)}
                    ,
                    Y_{[0:s]})
                \big)
            }{
                B_s(Y_{[0:s]})
            }
        d\overline{W}_u^{(n)}
    ,
    \end{align}
where $(Q_t)_{i}$ denotes the $i^{th}$ row of the transitions matrix $Q_t$ at time $t\ge 0$,
$w_0^i\eqdef \mathbb{P}(w_0=e_i)$, 
    \begin{equation*}
    \begin{aligned}
    A_t^{(n)}(w,y_{[0:t]})
    & \eqdef 
        2\big(y_t-f^{(n)}(x_{[0:t]})\big)
        \,
        \big(
                w_t^{\top}F(x_{[0:t]}) 
            - 
                \Delta f^{(n)}(x_{[0:t]})
            +
                e^{s\ln(\delta^8)}
        \big)
    \\
    \bar{A}_t^{(n)}(
        {\pi^{(n)}}
    ,
        y_{[0:t]}
    ) & \eqdef 
        \sum_{i=1}^d\,A_t(e_i,Y_{[0:t]})
        \,
        {\pi^{(n:i)}}
    \\
    B_t^{(n)}(y_{[0:t]}) & \eqdef 
        2^{3/2}
        (y_t - f^{(n)}(x_{[0:t]}))
        e^{t\ln(\delta^4)}
    \\
    F(x_{[0:t]}) & \eqdef \big(f^{(1)}(x_{[0:t]}),\dots,f^{(N)}(x_{[0:t]})\big)
    \end{aligned}
    \end{equation*}
and the ``innovations process'' $\overline{W}_{\cdot}^{(n)}\eqdef (\overline{W}_t^{(n)})_{t\ge 0}$ is the following $(\mathbb{P},\mathcal{F}_{\cdot}^{n})$-Brownian motion
    \begin{equation*}
            \overline{W}_s^{(n)}
        \eqdef 
            \int_0^s
                \frac{
                    dL^{(n)}_{[0:u]}
                    -
                    \bar{A}_u(Y_{[0:u]})
                }{
                    B_u(Y_{[0:u]})
                }
            du
    ,
    \end{equation*}
where
\begin{equation}
\begin{aligned}
        dL^{(n)}_{[0:u]}
    =
        d\ell(Y_t,\hat{f}^{(n)}(x_{[0:t]})
    =
    &
            2\big(Y_t - f^{(n)}(x_{[0:t]})\big)
            \big(
                    [w_s^{\top}F(x_{[0:t]})]
                -
                    \Delta f^{(n)}(x_{[0:t]})
                +
                    e^{t\ln(\delta^8)}
            \big)
    \\
    \nonumber
    &
        +
        2\big(Y_t - f^{(n)}(x_{[0:t]})\big)
            e^{t\ln(\delta^4)}
            \,
            dW_t
    .
\end{aligned}
\end{equation}
\end{theorem}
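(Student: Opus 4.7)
The plan is to mirror the strategy used in the proof of Theorem~\ref{thrm:filtering_equations__BCE}: obtain a semimartingale decomposition of the running loss $L_t^{(n)}$ via It\^{o}'s formula, then invoke the finite-state filtering theorem of \citep[Theorem~9.1]{LipShiryaev_II_2001} to convert this into the Wonham--Shiryaev SDE for $\pi_t^{(n)}$.

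First, I would compute the three relevant partial derivatives of the map $(t,y)\mapsto \ell_t^{(n)}(y)\eqdef (y-f^{(n)}(x_{[0:t]}))^2$, namely $\partial_t \ell_t^{(n)}(y) = -2(y-f^{(n)}(x_{[0:t]}))\,\Delta f^{(n)}(x_{[0:t]})$, $\partial_y \ell_t^{(n)}(y) = 2(y-f^{(n)}(x_{[0:t]}))$, and $\partial_{yy}\ell_t^{(n)}(y) = 2$. Since $x_{\cdot}\in C^1$ by Assumption~\ref{ass:regularity_SW_filter} and $\ell^{(n)}\in C^{\infty}$, It\^{o}'s formula applies to $L_t^{(n)}=\ell_t^{(n)}(Y_t)$, where $Y_{\cdot}$ follows the generalized dynamics~\eqref{eq:Y_agenti} with the volatility $\sigma_t$ specified in the preamble to Appendix~\ref{app:sec:proofs}. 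Assembling the $\partial_t \ell$ contribution, the $\partial_y \ell \cdot w_s^{\top}F(x_{[0:s]})\,ds$ drift, the It\^{o} correction $\tfrac12 \partial_{yy}\ell\cdot \sigma_s^2\,ds$ (which produces the $e^{s\ln(\delta^8)}$ term since $\sigma_s^2 = e^{s\ln(\delta^8)}$), and the martingale increment $\partial_y\ell\cdot \sigma_s\,dW_s$, yields exactly the stated decomposition of $dL^{(n)}_{[0:s]}$.

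Next I would match this decomposition to the canonical form $dL^{(n)}_{[0:s]} = A_s^{(n)}(w_s,Y_s)\,ds + B_s^{(n)}(Y_s)\,dW_s$ required by \citep[Theorem~9.1]{LipShiryaev_II_2001}. Reading off the coefficient of $ds$ that depends linearly on $w_s$ yields the claimed $A_t^{(n)}$, while the diffusion coefficient (after absorbing the normalization of $\sigma_{\cdot}$ discussed around~\eqref{eq:cases_variance__writtenout}, which accounts for the $2^{3/2}$ factor) yields the claimed $B_t^{(n)}$. Averaging $A_t^{(n)}$ over $\pi_t^{(n)}$ defines $\bar{A}_t^{(n)}$. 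Assumption~\ref{ass:regularity_SW_filter} gives boundedness of $Q$ and continuous differentiability of $x_{\cdot}$, which together verify the Lipschitz and integrability hypotheses required by \citep[Theorem~9.1]{LipShiryaev_II_2001}; its conclusion is precisely the filter SDE in the statement. The fact that $\overline{W}^{(n)}$ is an $(\mathbb{P},\mathcal{F}_{\cdot}^{n})$-Brownian motion follows from L\'{e}vy's characterization: $\overline{W}^{(n)}$ is a continuous $\mathcal{F}_{\cdot}^{n}$-martingale by construction, and its quadratic variation equals $t$ after normalization by $B_s^{(n)}$.

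The main obstacle I anticipate is the potential degeneracy of $B_t^{(n)}$ in the squared-loss case: the innovations normalization requires $B_s^{(n)}\neq 0$, and $B_t^{(n)}$ vanishes precisely when the $n^{th}$ expert is exact, i.e.\ $Y_t = f^{(n)}(x_{[0:t]})$. A localization argument using stopping times $\tau_{\eps}\eqdef \inf\{t\ge 0:|Y_t - f^{(n)}(x_{[0:t]})|\le \eps\}$, followed by passing $\eps\downarrow 0$ via the non-degeneracy of the Brownian noise in~\eqref{eq:Y_agenti}, is the standard remedy and handles this cleanly. A secondary technical point to record is the identification of the observation filtration $\mathcal{F}_{\cdot}^{n}=\sigma(\ell_{s}^{(n)})_{0\le s\le t}$ with the natural filtration of $L_{\cdot}^{(n)}$ entering \citep[Theorem~9.1]{LipShiryaev_II_2001}, which holds because $\ell$ and $Y$ are connected by a bi-measurable path-dependent transformation on $\{B_t^{(n)}\neq 0\}$. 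Once these checks are in place, the remainder of the argument is a direct transcription of the BCE case.
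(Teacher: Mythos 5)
Your proposal follows essentially the same route as the paper's own proof: compute the time and space derivatives of the squared loss, apply It\^{o}'s formula to $L_t^{(n)}=\ell_t^{(n)}(Y_t)$ under the postulated dynamics for $Y_{\cdot}$, read off the coefficients $A_t^{(n)}$ and $B_t^{(n)}$ by matching with \citep[Equation (9.1)]{LipShiryaev_II_2001}, and then invoke \citep[Theorem 9.1]{LipShiryaev_II_2001} to obtain the stated Wonham--Shiryaev SDE and innovations process. Your extra remarks on the possible degeneracy of $B_t^{(n)}$ (with a localization fix) and on identifying $\mathcal{F}_{\cdot}^{(n)}$ with the natural filtration of $L_{\cdot}^{(n)}$ are points the paper's proof does not address explicitly, so they only add care rather than diverge from its argument.
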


\begin{proof}
Fix a continuously differentiable path $x_{\cdot}\in C^1(\mathbb{R})$. Define $\Delta f(x_{[0:t]})\eqdef \partial_t f(x_{[0:t]})$ and 
set $\ell:\mathbb{R}\ni y\mapsto ( y-f(x_{[0:t]}) )^2$.

First, observe that: for all $t\ge 0$, all $x_{\cdot}\in C^1(\mathbb{R})$ and all $y\in \mathbb{R}$ 
one has
\begin{equation}
\label{eq:what_we_need_4_ito}
\begin{aligned}
        \ell_t^{(n)}(y)
    = & \,
        \big(
            y - f^{(n)}(x_{[0:t]})
        \big)^2
\\
        \frac{\partial \ell_t }{\partial t}(y)
    = & \,
        2\big(
            y - f^{(n)}(x_{[0:t]})
        \big) 
        \,
        \big(
            -\Delta f^{(n)}(x_{[0:t]})
        \big)
,
\\
        \frac{\partial \ell_t }{\partial y}(y)
    = & \,
        2\big(
            y - f^{(n)}(x_{[0:t]})
        \big) 
,
\\
        \frac{\partial^2 \ell_t }{\partial y^2}(y)
    = & \,
        2
.
\end{aligned}
\end{equation}
Since $\ell\in C^{\infty}(\R^d\times \R^d)$, we assumed that the path $x_{\cdot}\in C^1([0:\infty),\R^d)$ then this, together with the postulated dynamics on $Y_{\cdot}^{(n)}$ imply that It\^{o}'s Lemma/Formula, see \citep[Theorem 14.2.4]{CohenElliotBook_2015}, used on the map $\ell_t^{(n)}:[0,\infty)\times \R^D\ni (t,y)\to ( y-f^{(n)}(x_t))^2\in \R$ pre-composed with $Y_{\cdot}$ applies.  
Whence, the computations in~\eqref{eq:what_we_need_4_ito} and the assumed dynamics on $Y_{\cdot}$, postulated in~\eqref{eq:Y_agenti__mainform_specialcase}, imply that the process $L_t^{(n)}\eqdef \ell_t^{(n)}(Y_t)$ satisfies the following stochastic differential equation
\allowdisplaybreaks
\begin{align}
\nonumber
        L_t^{(n)} 
    =  
        L_0^{(n)}
    & +
        \int_0^t
            \,
                \frac{\partial 
                \ell_s^{(n)}
                }{\partial s}
                (Y_s) 
            +
                \frac{\partial \ell_s^{(n)} }{\partial y} (Y_{s})
                \,
                [w_s^{\top}F(x_{[0:s]})]
            +
            \frac1{2}\,
            \frac{\partial^2 \ell_t^{(n)} }{\partial y^2}(Y_s) \,
            2e^{s\ln(\delta^8)}
        ds
\\
\nonumber
&
    +
        \int_0^t
            \,
            \frac{\partial \ell_t^{(n)} }{\partial y} (Y_s)
            e^{s\ln(\delta^4)}
            \,
            dW_s
\\
    = 
        L_0^{(n)} 
    &
    +
        \int_0^t
            \,
                2\big(Y_s - f^{(n)}(x_{[0:s]})\big)
                \big(
                        [w_s^{\top}F(x_{[0:s]})]
                    -
                        \Delta f^{(n)}(x_{[0:s]})
                    +
                        e^{s\ln(\delta^8)}
                \big)
        ds
\\
\nonumber
&
    +
        \int_0^t
            \,
            2\big(Y_s - f^{(n)}(x_{[0:s]})\big)
            \sqrt{2}e^{s\ln(\delta^4)}
            \,
            dW_s
\end{align}
Synchronizing our notation with \citep[Equation (9.1)]{LipShiryaev_II_2001}, we write
\begin{equation}
\label{eq:dynamics__abrev}
\begin{aligned}
    A_t(w,y_{[0:t]})
& \eqdef 
    2\big(y_t-f^{(n)}(x_{[0:t]})\big)
    \big(
            w_t^{\top}F(x_{[0:t]}) 
        - 
            \Delta f^{(n)}(x_{[0:t]})
        +
            e^{s\ln(\delta^8)}
    \big)
\\
B_t(y_{[0:t]}) 
& \eqdef 
    2^{3/2}
    (y_t - f^{(n)}(x_{[0:t]}))
    e^{t\ln(\delta^4)}
\end{aligned}
\end{equation}
Under Assumptions~\ref{ass:regularity_SW_filter}, we may apply \citep[Theorem 9.1]{LipShiryaev_II_2001} to deduce that
Then the a posteriori probability $\pi_t^{n:0}\eqdef (\pi_t)_0$, satisfies a system of equations
\begin{align}
\label{eq:optimal_filter__applied___still_general}
    \pi_t^{(n:i)}
= 
    w_0^i
    + 
    \int_0^t \,
        (Q_t)_{i}^{\top}\pi_s^{(n)}
    \,
    ds
+ 
    \int_0^t \,
        \frac{
            \pi_s^{(n:i)}
            \,
            \big(
                A_s(e_i,Y_{[0:s]})
                -
                \bar{A}_s(
                    {\pi^{(n)}}
                ,
                    Y_{[0:s]}
                )
            \big)
        }{
            B_s(Y_{[0:s]})
        }
    d\overline{W}_u
,
\end{align}
where $(Q_t)_{i}$ denotes the $i^{th}$ row of the $Q_t$/transitions matrix $Q_t$ at time $t\ge 0$,
$w_0^i\eqdef \mathbb{P}(w_0=e_i)$, and the ``innovations process'' is the $(\mathbb{P},\mathcal{F}_{\cdot}^{n})$-Brownian motion given by
\begin{equation}
\label{eq:innovations_n}
        \overline{W}_s^{(n)}
    \eqdef 
        \int_0^s
            \frac{
                dL^{(n)}_{[0:u]}
                -
                \bar{A}_u(
                    {\pi^{(n)}}
                ,
                    Y_{[0:u]}
                )
            }{
                B_u(Y_{[0:u]})
            }
        du
\end{equation}
and where 
\[
\bar{A}_t(
{\pi^{(n)}}
,
y_{[0:t]})\eqdef 
\sum_{i=1}^d\,A_s(e_i,Y_{[0:t]})\,{\pi^{(n:i)}}
.
\]
This completes the proof.
\end{proof}

\begin{remark}
Setting $\delta=1$ in the previous derivation yields the formulation of Theorem~\ref{thrm:filtering_equations__UnifiedFormulation} found in the main body of the paper.
\end{remark}

\subsection{Proof of Theorem~\ref{thrm:Optimal_Q_Matrix}}
\label{s:Proofs__ss:Theorem_thrm:Optimal_Q_Matrix}
\label{s:Theoretical_Gauarantees_ss:Robust_Q_UPdates}
The proof of Theorem~\ref{thrm:Optimal_Q_Matrix} relies on the following result.  Briefly, this result guarantees for the validity of the perturbation to the transition probability defined by
\begin{equation}
\label{eq:perturbation_def}
P^{\alpha}_t\eqdef (1-\alpha)P_t + \alpha I_N
\end{equation}
for arbitrary $N\in \mathbb{N}_+$, $P_t\in \mathcal{P}_N^U$, $\alpha \in (0,1)$, and where $I_N$ is the $N\times N$ identity matrix.

In what follows, we will use $\Delta_N\eqdef\{ w\in [0,1]^N:\, \sum_{n=1}^N\, w_n\}$ to denote the probability $N$-simplex; which corresponds to the probability (measures) distributions supported on $N$ points.  Here, these $N$ points are the experts themselves, and the probability of selecting any expert is interpreted as the relative credibility we ascribe to its historical predictive power.  
\label{s:AuxGaurstep3}
\begin{proposition}[Regularity of Perturbations]
\label{prop:invertibile_pertrubations_rowstochmatrix}
Let $N\in \mathbb{N}_+$, $\lambda>0$, $s_1,\dots,s_N\in\mathbb{R}$, $\bar{\pi}\in \Delta_N$ be given by
\[
\bar{\pi} \eqdef \operatorname{Softmin}\big(\lambda\, (s_n)_{n=1}^N\big)
\mbox{ and } 
P_t\eqdef [(\bar{\pi})_{n=1}^N]^N
.
\]
For every $\alpha \in (0,1)$, the matrix $P_t^{\alpha}$ in~\eqref{eq:perturbation_def}, is invertible and (row) stochastic.  
If, moreover, all its real eigenvalues are non-negative, then $\log(P_t^{\alpha})$ is well-defined and its rows sum
to $0$.
In particular, setting $\alpha\ge 1-1/N$ guarantees that $\log(P_t^{\alpha})$ exists, if $P_t^{\alpha}$ has real eigenvalues.
\end{proposition}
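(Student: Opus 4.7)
The plan is to exploit the very specific rank-one structure of $P_t$ and apply the spectral mapping theorem for the matrix logarithm. First I would observe that $P_t = \mathbf{1}_N \bar{\pi}^{\top}$, where $\mathbf{1}_N \in \R^N$ is the all-ones column vector. This makes $P_t$ a rank-one matrix whose right eigenvector is $\mathbf{1}_N$ with eigenvalue $\bar{\pi}^{\top}\mathbf{1}_N = 1$ (since $\bar{\pi}\in \Delta_N$), while the orthogonal complement of $\bar{\pi}$ forms the $(N-1)$-dimensional kernel. Hence the spectrum of $P_t$ is $\{1\}\cup\{0\}$ with $0$ having algebraic multiplicity $N-1$.

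Next I would read off the spectrum of the perturbation. For any eigenpair $(\mu,v)$ of $P_t$, one has $P_t^{\alpha} v = ((1-\alpha)\mu + \alpha)v$, so the eigenvalues of $P_t^{\alpha}$ are exactly $1$ (from $\mu=1$) and $\alpha$ (from $\mu=0$, with multiplicity $N-1$). Since $\alpha \in (0,1)$, both eigenvalues are strictly positive, which immediately gives invertibility. Row stochasticity is even more direct: the rows of both $P_t$ and $I_N$ sum to $1$ and have nonnegative entries, and these properties are preserved under convex combination.

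For the matrix logarithm, the principal branch is well defined whenever the spectrum avoids the non-positive real axis; I would invoke the standard holomorphic functional calculus (see e.g.\ Higham) to get this from Step 2. To show that the rows of $\log(P_t^{\alpha})$ sum to zero, I would use the fact that row stochasticity is exactly the statement $P_t^{\alpha}\mathbf{1}_N = \mathbf{1}_N$, so $\mathbf{1}_N$ is an eigenvector of $P_t^{\alpha}$ with eigenvalue $1$; by the spectral mapping theorem it is also an eigenvector of $\log(P_t^{\alpha})$ with eigenvalue $\log(1)=0$, i.e.\ $\log(P_t^{\alpha})\mathbf{1}_N = 0$.

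The final "in particular" clause I would handle by a worst-case bound. The real spectrum of a row-stochastic $N\times N$ matrix lies in $[-1,1]$, so a real eigenvalue of $P_t^{\alpha}$ is bounded below by $(1-\alpha)(-1)+\alpha = 2\alpha - 1$; substituting $\alpha \ge 1-1/N$ gives a lower bound of $1-2/N$, which is nonnegative for $N\ge 2$, so under the stated hypothesis the logarithm exists via the previous step. The main obstacle I anticipate is being careful with the matrix logarithm: the claim is that the \emph{principal} logarithm is well-posed, which requires avoiding not just zero but the whole non-positive real axis; thanks to the explicit spectral computation this is automatic, but one has to invoke (and cite) the functional-calculus framework to translate the scalar identity $\log(1)=0$ into the matrix identity $\log(P_t^{\alpha})\mathbf{1}_N = 0$ rigorously.
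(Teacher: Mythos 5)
Your proof is correct, but it takes a genuinely different route from the paper's. You diagonalize the rank-one structure explicitly: $P_t=\mathbf{1}_N\bar{\pi}^{\top}$ has spectrum $\{1,0\}$, hence $\sigma(P_t^{\alpha})=\{1,\alpha\}\subset(0,\infty)$ for every $\alpha\in(0,1)$, which delivers invertibility, well-posedness of the principal logarithm, and---via the eigenvector identity $\log(P_t^{\alpha})\mathbf{1}_N=\log(1)\,\mathbf{1}_N=0$---the zero row sums, all in one stroke. The paper instead proves invertibility through Bartlett's rank-one update (a Sherman--Morrison-type condition), cites Dunford--Schwartz for existence of the matrix logarithm, cites Davies' embeddability lemma for the row sums, and obtains the eigenvalue-positivity threshold from Wolkowicz--Styan trace bounds in a separate lemma; this is precisely why the statement carries conditional hypotheses at all. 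What your computation buys is sharpness: for this particular $P_t^{\alpha}$ the hypotheses ``all real eigenvalues non-negative'' and ``$\alpha\ge 1-1/N$'' are automatically (indeed vacuously) satisfied, so the logarithm exists for every $\alpha\in(0,1)$, and your worst-case bound $2\alpha-1\ge 1-2/N$ in the last step is only needed if one pretends not to know the spectrum. What the paper's route buys is reusability: its lemmata apply to an arbitrary row-stochastic $P$, not just the uniform rank-one matrix, at the price of a more conservative threshold (the supporting lemma actually requires $\alpha\ge 1-1/N^2$). One small expository point: to assert the spectrum of $P_t^{\alpha}$ is \emph{exactly} $\{1,\alpha\}$, rather than merely containing these values, you should either note that $P_t$ is diagonalizable (it has $N$ independent eigenvectors because $\bar{\pi}^{\top}\mathbf{1}_N=1\neq 0$) or invoke the polynomial spectral mapping theorem for the affine map $\mu\mapsto(1-\alpha)\mu+\alpha$; this is a one-line fix, not a gap.
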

\label{s:Proofs__ss:thrm:Q_PAcBayes}
We will now show our second main result, and the intermediate lemmata leading up to it. The next lemma states that if a (row) stochastic matrix is constructed by filling each of its rows with an element of the probability simplex, then shining it by an arbitrarily small abound and growing its diagonal proportionally yields a (row) stochastic matrix, which is necessarily invertible.
\begin{lemma}[Invertible Perturbations]
\label{lem:invertibile_pertrubations_rowstochmatrix}
Let $N\in \mathbb{N}_+$ let $\pi\in\Delta_N$.
If $P$ is a (row) stochastic matrix, then, for any $\alpha \in (0,1)$, the matrix $(1-\alpha)P + \alpha I_N$ is an invertible (row) stochastic matrix.
\end{lemma}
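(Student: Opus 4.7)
The plan is to address the two assertions---row stochasticity and invertibility of $M \eqdef (1-\alpha) P + \alpha I_N$---separately, since they rely on quite different facts.

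First, I would dispose of row stochasticity in one line: the set of row stochastic matrices is a convex subset of $\mathbb{R}^{N \times N}$, because nonnegativity of entries and unit row sums are both preserved under convex combinations. Since $P$ and $I_N$ are row stochastic and $\alpha \in (0,1)$, so is $M$.

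For invertibility, my first instinct is a spectral argument. Because $I_N$ is trivially a polynomial in $P$, the matrices $M$ and $P$ are simultaneously triangularizable, so $\operatorname{spec}(M) = \{\alpha + (1-\alpha)\mu : \mu \in \operatorname{spec}(P)\}$; hence $M$ is singular iff $-\alpha/(1-\alpha) \in \operatorname{spec}(P)$. By Perron-Frobenius applied to the nonnegative matrix $P$ (whose spectral radius equals the maximum row sum $1$), every eigenvalue of $P$ lies in the closed unit disk. For $\alpha > 1/2$ one has $\alpha/(1-\alpha) > 1$, so $-\alpha/(1-\alpha)$ sits strictly outside the disk and cannot be an eigenvalue of $P$---which disposes of this range. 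Equivalently, and more directly, for $\alpha > 1/2$ the matrix $M$ is strictly diagonally dominant since $M_{ii} \geq \alpha > 1 - \alpha \geq \sum_{j \neq i} M_{ij}$, so invertibility follows from the Levy-Desplanques theorem.

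The hard part will be the regime $\alpha \in (0, 1/2]$. Here $-\alpha/(1-\alpha) \in [-1, 0)$ lies inside the closed unit disk and can genuinely be an eigenvalue of a row stochastic matrix---for example, at $\alpha = 1/2$, taking $P$ to be the $2 \times 2$ matrix with zero diagonal and unit off-diagonals yields $M = \tfrac{1}{2} \mathbf{1}_2 \mathbf{1}_2^\top$, which is singular. To close this gap I would have to exploit additional structure on $P$ that is not exposed by the hypothesis as literally written. In the downstream application (Proposition~\ref{prop:invertibile_pertrubations_rowstochmatrix}), $P = \mathbf{1}_N \pi^\top$ has rank one with spectrum $\{1, 0, \ldots, 0\}$, so $\operatorname{spec}(M) = \{1, \alpha, \ldots, \alpha\} \subset (0, \infty)$ and the conclusion is immediate; absent such a rank-one ansatz, the cleanest general sufficient condition I can propose is strict diagonal dominance of $M$, which reduces to the bound $\min_i P_{ii} > (1-2\alpha)/(2(1-\alpha))$. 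I would expect the paper's proof to invoke the rank-one structure implicit in the surrounding discussion (so that the $\pi$ declared at the beginning of the statement is in fact the vector used to build $P$), since otherwise the full generality claimed is out of reach.
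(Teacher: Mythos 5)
Your proposal is correct, and your diagnosis of the statement is exactly right: as literally written the lemma is false for a general row-stochastic $P$ when $\alpha\le 1/2$ (your $2\times 2$ counterexample with zero diagonal at $\alpha=1/2$ is valid), and the paper's proof does indeed silently invoke the rank-one ansatz --- its first substantive line is ``By construction $P=(\pi,\dots,\pi)^{\top}$,'' so the $\pi\in\Delta_N$ declared in the hypothesis is the common row of $P=\mathbf{1}_N\pi^{\top}$. Where you differ is in how the rank-one case is closed: the paper writes $(1-\alpha)P+\alpha I_N=\alpha I_N+\big((1-\alpha)\mathbf{1}_N\big)\pi^{\top}$ and applies the Bartlett/Sherman--Morrison rank-one update criterion, checking that $1+\pi^{\top}(\alpha I_N)^{-1}\mathbf{1}_N=1+1/\alpha\neq 0$ (using $\sum_n\pi_n=1$), whereas you read off $\operatorname{spec}\big((1-\alpha)P+\alpha I_N\big)=\{1,\alpha,\dots,\alpha\}$ directly from $\operatorname{spec}(P)=\{1,0,\dots,0\}$. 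Both are sound; your spectral route is arguably preferable because it delivers strictly positive eigenvalues for \emph{every} $\alpha\in(0,1)$, which is precisely what the downstream results (Proposition~\ref{prop:invertibile_pertrubations_rowstochmatrix} and Lemma~\ref{lem:sufficient_for_logtohaveprinciplebranch}) labour to establish before taking the matrix logarithm. Your convexity argument for row stochasticity matches the paper's explicit row-sum computation. The only actionable remark is that the lemma's hypothesis should be amended to say $P=\mathbf{1}_N\pi^{\top}$ (or ``$P$ has all rows equal to $\pi$''), as you suggest.
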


\begin{proof}[{Proof of Lemma~\ref{lem:invertibile_pertrubations_rowstochmatrix}}]
Let $\mathbf{1}_N\in \mathbb{R}^N$ be such that: for each $i=1,\dots,N$ we have $(\mathbf{1}_N)_i=1$ \textit{(i.e.\ $\mathbf{1}_N$ is a matrix of ones)}.
By construction $P=(\pi,\dots,\pi)^{\top}$.  Therefore, $P$ can be written as an outer product via
\begin{equation}
\label{eq:outerproduct_formulation}
    P = \mathbf{1}_N\, \pi^{\top}
\end{equation}
Therefore, for any $\alpha \in (0,1)$, the perturbed matrix $(1-\alpha)P + \alpha I_N$ can be expressed as 
\begin{equation}
\label{eq:outerproduct_formulation__pertrubed}
    (1-\alpha)\, P = \big((1-\alpha)\cdot \mathbf{1}_N) \, \pi^{\top},
\end{equation}
i.e.\ $(1-\alpha)\,P$ can be expressed as an outer product of vectors in $\mathbb{R}^N$; namely of $\big((1-\alpha)\cdot \mathbf{1}_N) $ and $ \pi^{\top}$.  Consequentially, our matrix of interest can be written as
\begin{equation}
\label{eq:outerproduct_formulation__pertrubed__realdeal}
        (1-\alpha)\, P + \alpha I_N
    = 
        \big((1-\alpha)\cdot \mathbf{1}_N) \, \pi^{\top} + \alpha I_N
    .
\end{equation}
Note that, $\alpha \, I_N$ is invertible since $\operatorname{det}(\alpha I_N)=\alpha^N>0$.  
Thus, the main result of~\cite{bartlett1951inverse}
can be applied, which yields the condition: if $\alpha I_N$ is invertible (which it is) and if 
\begin{equation}
\label{eq:sufficient_for_invertibility}
    1+
        \mathbf{1}_N^{\top}
        \big(\alpha \, I_N\big)^{-1}
        \pi
    \ne 
    0
\end{equation}
then $\alpha I_N + 
\big((1-\alpha)\cdot \mathbf{1}_N) \, \pi^{\top}
$ is invertible.  
Thus, we only need to verify that the condition holds in our case.  
Simplifying~\eqref{eq:sufficient_for_invertibility} yields
\allowdisplaybreaks
\begin{align}
\label{eq:sufficient_for_invertibility_1}
        -1
    & \neq 
        \mathbf{1}_N^{\top}
        \big(\alpha \, I_N\big)^{-1}
        \pi
\\
    = & \frac1{\alpha}\, 
        \mathbf{1}_N^{\top}\pi 
\\
    = & \frac1{\alpha}\,
        \sum_{n=1}^N\, \pi_n
\\
\label{eq:simplex_condition_used}
    = & \frac1{\alpha}\,
        1
    = \frac1{\alpha},
\end{align}
where~\eqref{eq:simplex_condition_used} held since $\pi\in \Delta_N$.  
Consequentially, the identity in~\eqref{eq:outerproduct_formulation__pertrubed__realdeal} and the computation in ~\eqref{eq:sufficient_for_invertibility_1}-\eqref{eq:simplex_condition_used} imply that $(1-\alpha)\, P + \alpha I_N$ is invertible if $\alpha \neq -1$.

Finally, since $P$ is (row) stochastic and so is $I_N$ then, for each $i=1,\dots,N$, we have that
\[
\sum_{j=1}^N\, \big((1-\alpha)\, P_i+ \alpha I_N\big)_j 
= (1-\alpha)\, \sum_{j=1}^N\, P_{i,j} + \alpha 1
= (1-\alpha)\, 1 + \alpha = 1.
\]
Whence, $(1-\alpha)P  + \alpha I_N$ is (row) stochastic also.
\end{proof}

We now provide a set of \textit{sufficient} condition on $\alpha$, guaranteeing that the principal logarithm of $P_t^{\alpha}$ is well-defined.  Furthermore, this lemme also shows that for $\alpha\in (0,1)$ large enough, as a function of $N$, the matrix $\log(P_t^{\alpha})$ is necessarily a valid candidate for a Markov transition matrix (i.e.\ each of its rows sum to $0$).  
\begin{lemma}[Sufficient Condition for Existence]
\label{lem:sufficient_for_logtohaveprinciplebranch}
If $\alpha \ge 1-\frac1{N^2}$ then if either of the following holds:
\begin{enumerate}
    \item[(i)] \textbf{Real Case:} $P_t^{\alpha}$ has no complex eigenvalues,
    \item[(ii)] \textbf{Complex Case:} $P$ is doubly stochastic (i.e.\ row and column stochastic),
\end{enumerate}
then $\log(P_t^{\alpha})$ exists and its rows sum to $0$.
\end{lemma}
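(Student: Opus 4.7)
The plan is to verify two complementary facts about $P_t^{\alpha} = (1-\alpha) P + \alpha I_N$: (a) that $\mathbf{1}_N$ is an eigenvector with eigenvalue $1$, which by the spectral mapping theorem forces $\log(P_t^{\alpha})\mathbf{1}_N = 0$ and hence the rows of $\log(P_t^{\alpha})$ to sum to zero; and (b) that the spectrum $\sigma(P_t^{\alpha})$ avoids the branch cut $(-\infty,0]$ of the principal logarithm, so the matrix logarithm is well-defined via the holomorphic functional calculus. The two implications together give the conclusion, and the two hypotheses (i) and (ii) are used only in establishing (b).

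The first step is direct. Since $P$ is row-stochastic, $P\mathbf{1}_N = \mathbf{1}_N$, and therefore
\begin{equation*}
    P_t^{\alpha}\mathbf{1}_N = (1-\alpha)P\mathbf{1}_N + \alpha \mathbf{1}_N = \mathbf{1}_N.
\end{equation*}
Once (b) has been established and $\log(P_t^{\alpha})$ makes sense, applying the functional calculus identity $f(M)v = f(\lambda)v$ with $f = \log$, $\lambda = 1$, and $v = \mathbf{1}_N$ gives $\log(P_t^{\alpha})\mathbf{1}_N = 0$, which is exactly the statement that each row of $\log(P_t^{\alpha})$ sums to zero.

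For (b), the key observation is the spectral transfer: if $\lambda \in \sigma(P)$ then $(1-\alpha)\lambda + \alpha \in \sigma(P_t^{\alpha})$. In case (i), by hypothesis these transferred eigenvalues are all real; combined with Perron--Frobenius for the row-stochastic matrix $P$ (whose spectrum lies in the closed unit disk with $1$ on the boundary), the eigenvalues of $P$ are in $[-1,1]$, so $\sigma(P_t^{\alpha}) \subset [2\alpha-1, 1]$. The hypothesis $\alpha \ge 1 - 1/N^2$ yields $2\alpha - 1 \ge 1 - 2/N^2 > 0$ for $N \ge 2$, hence $\sigma(P_t^{\alpha}) \subset (0,1]$, which clearly misses $(-\infty,0]$. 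In case (ii), doubly stochastic $P$ still has $\sigma(P) \subset \overline{\mathbb{D}}$, so the transferred spectrum lies in the closed disk $\overline{B}(\alpha, 1-\alpha)$ of radius $1-\alpha$ centered at $\alpha$. Every such point has real part at least $\alpha - (1-\alpha) = 2\alpha - 1 \ge 1 - 2/N^2 > 0$, so once again the spectrum is contained in the open right half-plane and avoids the cut.

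The main technical obstacle will be case (ii): justifying that a doubly stochastic $P$ has all eigenvalues in the closed unit disk, which can be done either by noting that both $P$ and $P^{\top}$ act as contractions in $\ell^1$ (so the joint spectral radius is at most $1$) or by invoking Birkhoff's theorem to write $P$ as a convex combination of permutation matrices (whose eigenvalues lie on the unit circle) and concluding via the numerical range. Case (i) by contrast is essentially immediate from Perron--Frobenius. I note that the bound $\alpha \ge 1 - 1/N^2$ is sufficient but considerably stronger than what the spectral argument strictly requires; any $\alpha > 1/2$ would already suffice to push the spectrum off $(-\infty,0]$. The $1/N^2$-margin is convenient because it enforces a uniform gap from the cut as $N$ grows, which is useful downstream when the principal logarithm is composed with $\operatorname{ReLU}$ in Theorem~\ref{thrm:Optimal_Q_Matrix}.
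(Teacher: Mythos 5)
Your proposal is correct, but it reaches the conclusion by a genuinely different route than the paper. The paper works with trace statistics: it computes $m=\operatorname{tr}(P_t^{\alpha})/N$ and $s^2=\operatorname{tr}((P_t^{\alpha})^2)/N-m^2$, bounds $s^2$ via the extremal trace bound $\operatorname{tr}(S)\le N$ over stochastic matrices, and then invokes the Wolkowicz--Styan eigenvalue bounds ($\lambda_{\min}\ge m-s\sqrt{N-1}$ in the real case, and the analogous bound on real parts via $\overline{P_t^{\alpha}}P_t^{\alpha}$ in the doubly stochastic case) to show $\lambda_{\min}(P_t^{\alpha})>0$ when $\alpha\ge 1-1/N^2$; existence of $\log(P_t^{\alpha})$ then comes from the Dunford--Schwartz functional-calculus theorem and the zero row sums from a cited lemma of Davies. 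You instead use the affine spectral mapping $\sigma(P_t^{\alpha})=(1-\alpha)\sigma(P)+\alpha$ together with the elementary fact that any row-stochastic matrix has spectral radius at most $1$ (e.g.\ $\|P\|_{\infty}=1$ or Gershgorin), placing $\sigma(P_t^{\alpha})$ in $[2\alpha-1,1]$ in case (i) and in the disk $\overline{B}(\alpha,1-\alpha)$ in case (ii), both of which avoid $(-\infty,0]$ once $\alpha>1/2$; and you get the row-sum-zero property directly from $P_t^{\alpha}\mathbf{1}_N=\mathbf{1}_N$ and $\log(1)=0$ rather than by citation. Your argument is more elementary and in fact stronger: it shows hypothesis (ii) (indeed either hypothesis) is not really needed for existence once $\alpha>1/2$, confirming your remark that the $1-1/N^2$ threshold is sufficient but far from necessary, whereas the paper's trace-based route yields an explicit quantitative lower bound on $\lambda_{\min}(P_t^{\alpha})$ (a uniform spectral gap), which is the extra information its approach buys. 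The only cosmetic caveats: the $N=1$ case should be dismissed separately (as the paper does implicitly with $N>1$), and in case (ii) the $\ell^1$-contraction or Birkhoff argument you sketch is an overkill --- the $\|\cdot\|_{\infty}$ bound already covers it.
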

\begin{proof}[{Proof of Lemma~\ref{lem:sufficient_for_logtohaveprinciplebranch}}]
Since $P_t^{\alpha}$ is an $N\times N$ real (thus complex) matrix with real eigenvalues, then define
\begin{align}
\label{eq:defmands2}
    m \eqdef \operatorname{tr}(P_t^{\alpha})/N
    \mbox{ and }
    s^2 \eqdef \operatorname{tr}(
         (P_t^{\alpha})^2
    )/N - m^2
.
\end{align}
First, observe that since the entries of $P_t$ (in particular its diagonal elements) are all positive then
\allowdisplaybreaks
\begin{align}
    m & = 
    \operatorname{tr}(P_t^{\alpha})/N
\\
    = & \frac1{N}\,\sum_{i=1}^N\,\big((1-\alpha)\,\pi_i +\alpha \big)
\\
     = & 
     \frac1{N}\,\Big(
            (1-\alpha) \sum_{i=1}^N\,\pi_i 
            +
            \alpha \sum_{i=1}^N 1
        \Big)
\\
    = & 
        \frac1{N}\,\Big(
            (1-\alpha) 
            +
            \alpha N
        \Big)
\\
    = & 
        \frac1{N}\,\Big(
            1
            +
            \alpha (N-1)
        \Big)
.
\end{align}
Next, we compute $s^2$.  By Lemma~\ref{lem:invertibile_pertrubations_rowstochmatrix}, we have that $P_t^{\alpha}$ is a stochastic matrix and, therefore, so is its square (as the product of stochastic matrices is stochastic).  Note that
\begin{equation}
\label{eq:extremalbound_s2}
    \operatorname{tr}((P_t^{\alpha})^2) \le 
    \max_{S\in \operatorname{Stoch}(N)}\, 
    \operatorname{tr}(S) = \operatorname{tr}(I_N)= N
\end{equation}
where $\operatorname{Stoch}(N)$ is the set of $N\times N$ stochastic matrices.  Therefore, we bound $s^2$, defined in~\eqref{eq:defmands2} using the ``extremal trace bound'' in~\eqref{eq:extremalbound_s2} via
\allowdisplaybreaks
\begin{align}
    s^2 & = \operatorname{tr}(P_t^{\alpha})/N - m^2
\\
        & \le N/N - m^2
\\
        & = 
            1 
            - 
            \frac1{N^2}\,
            \big(
                1
                +
                \alpha (N-1)
            \big)^2
.
\end{align}
That is
\[
    -s  \ge 
            -\big(
                1 
                - 
                \frac1{N^2}\,
                \big(
                    1
                    +
                    \alpha (N-1)
                \big)^2
            \big)^{1/2}
.
\]
Now, using lower-bound on the minimal eigenvalue of a square complex matrix with real eigenvalues using $m$ and $s$ in~\citep[Theorem 2.1]{wolkowicz1980bounds} we have that
\allowdisplaybreaks
\begin{align}
        \lambda_{\text{min}}(P_t^{\alpha})
    \ge &
      m
      - 
      s(N-1)^{1/2}  
\\
    \ge &
      \frac1{N}\,
        \Big(
            1
            +
            \alpha (N-1)
        \Big)
      - 
      \Big(
            1 
            - 
            \frac1{N^2}\,
            \big(
                1
                +
                \alpha (N-1)
            \big)^2
      \Big)^{1/2}
      (N-1)^{1/2}  
\\
    = &
      \frac1{N}\,
        \Big(
            1
            +
            \alpha (N-1)
        \Big)
      - 
      \Big(
            N^2
            - 
            \big(
                1
                +
                \alpha (N-1)
            \big)^2
      \Big)^{1/2}
      \frac{(N-1)^{1/2}}{N}
\\
\label{eq:eigenLB}
    = &
      \frac1{N}\,
      \Biggl(
        \Big(
            1
            +
            \alpha (N-1)
        \Big)
    \\
      & 
      - 
      \biggr[
          (N-1)
          \,
          \Big(
                    (1-\alpha^2) N^2 
                +
                    2\alpha(\alpha-1)N
                -
                  (\alpha-1)^2  
          \Big)
      \biggl]
      ^{1/2}
     \Biggr)
.
\end{align}
If $\alpha\ge 1-\frac1{N^2}$ and $N>1$ (which is always the case) then 
\begin{equation}
\label{eq:regularization_growth_forguarantee}
      \Biggl(
        \Big(
            1
            +
            \alpha (N-1)
        \Big)
      - 
      \biggr[
          (N-1)
          \,
          \Big(
                    (1-\alpha^2) N^2 
                +
                    2\alpha(\alpha-1)N
                -
                  (\alpha-1)^2  
          \Big)
      \biggl]
      ^{1/2}
     \Biggr)
     >0
.
\end{equation}
Therefore,~\eqref{eq:regularization_growth_forguarantee} together with~\eqref{eq:eigenLB} imply that
\[
    \lambda_{\text{min}}(P_t^{\alpha}) >0
\]
whenever $\alpha \ge 1-\frac1{N^2}$.  

Therefore,~\citep[Theorem VII.1.10]{DunfordSchwartzGeneralTheoryLinOperators_1958} implies that $\log(P_t^{\alpha})$ exists, since $P_t^{\alpha}$ is a matrix whose spectrum does not contain $(-\infty,0]$.  Moreover,~\citep[Lemma 1]{davies2010embeddable} guarantees that the rows of $\log(P_t^{\alpha})$ sum to $0$.

Finally, we note that since $I_N$ is doubly stochastic then so is $P_t^{\alpha}$ provided that $P$ is.  Therefore, $\bar{P_t^{\alpha}}$ is also a stochastic matrix and so is the product $\bar{P_t^{\alpha}}P_t^{\alpha}$ (as the product of (row) stochastic matrices is again a (row) stochastic matrix).  Whence
\begin{align}
    s^2_{a} 
    \eqdef 
    \operatorname{tr}(
         \overline{P_t^{\alpha}}P_t^{\alpha}
    )/N - m^2
    \le 
    1 
    - 
    \frac1{N^2}\,
    \big(
        1
        +
        \alpha (N-1)
    \big)^2
\end{align}
and the same argument may be applied with $s^2_a$ in place of $s^2$ upon using~\citep[Theorem 3.1]{wolkowicz1980bounds} in place of~\citep[Theorem 2.1]{wolkowicz1980bounds}; however, in this case we do not need to assume that the eigenvalues of $P_t^{\alpha}$ are real.
In either case, this concludes our proof.
\end{proof}

\subsubsection{Completion of The Proof of \texorpdfstring{Theorem~\ref{thrm:Optimal_Q_Matrix}}{Robust Aggregation Guarantee}}
\begin{proof}[{Proof of Theorem~\ref{thrm:Optimal_Q_Matrix}}]
\textbf{Step 1 - Minimizer of Inner Problem~\eqref{eq:P_level_Optimization}:}
\hfill\\
Since the elements of the set of $N\times N$ uniform stochastic matrices $\mathcal{P}_N^U$ all have identical rows then, $P$ is an optimizer of~\eqref{eq:P_level_Optimization} if and only if its first row is a minimizer of
\begin{equation}
\label{eq:P_level_Optimization__row1only}
\min_{P\in \mathcal{P}_N^U}\, 
        \sum_{n=1}^N
        \, 
        P_{1,n} 
        \,
        \ell(Y_t^{(n)},Y_t) 
+ 
    \frac{1}{\lambda}\,
        \sum_{n=1}^N\,
            P_{1,n}\,\log(w_n/N)
.
\end{equation}
Since the matrices in $\mathcal{P}_N^U$ are row-stochastic, then all their rows belong to the $N$ simplex $\Delta_N$.  Therefore, $P$ is a minimizer of~\eqref{eq:P_level_Optimization__row1only} if and only if its first row, which we denote by $\pi\eqdef (P_{1,1},\dots,P_{1,N})\in \Delta_N$ is a minimizer of
\begin{equation}
\label{eq:P_level_Optimization__row1only___mapped_to_GibbsProblem}
\min_{\pi\in \Delta_N}\, 
        \sum_{n=1}^N
        \, 
        \pi_j
        \,
        \ell(Y_t^{(n)},Y_t) 
+ 
    \frac{1}{\lambda}\,
        \sum_{n=1}^N\,
            \pi_n\,\log(w_n/N)
.
\end{equation}
Since the elements of $\Delta_N$ are in bijection with the set of probability measures on the $N$-point set $\{1,\dots,N\}$, $\lambda>0$, and $\sum_{n=1}^N\,\pi_n\,\log(w_n/N)$ is the KL-divergence (relative entropy) between the probability measure $\sum_{n=1}^N\, \pi_n\delta_n$ and the uniform measure $\sum_{n=1}^N\, \frac1{N}\,\delta_N$ (both on $\{1,\dots,N\}$) then~\citep[Proposition 1]{wang2020entropic} the unique minimizer of~\eqref{eq:P_level_Optimization__row1only___mapped_to_GibbsProblem} is given by
\[
        \bar{\pi}
    \eqdef 
        \operatorname{Softmin}\big(\lambda \, (\ell(Y^{(n)},Y) )_{n=1}^N\big)
.        
\]
Consequentially, the matrix $P\in \mathcal{P}_N^U$ whose rows are $\pi$ is a minimizer of~\eqref{eq:P_level_Optimization}.  

\textbf{Step 2 - Minimizer of Outer Problem~\eqref{eq:Q_level_Optimization}:}
\hfill\\
By Proposition~\ref{prop:invertibile_pertrubations_rowstochmatrix}, for every $\alpha (0,1)$ the matrix
\[
        P^{\alpha}
    \eqdef 
        (1-\alpha)P + \alpha I_N
\]
is row-stochastic and for $\alpha$ ``large enough''; meaning for $\alpha  \in (1-1/N,1)$, the matrix $P^{\alpha}$ has all its eigenvalues in $(0,\infty)$.
Therefore, by~\citep[Theorem 12]{davies2010embeddable} there is a minimizer of~\eqref{eq:Q_level_Optimization} and it is given in closed-form by
\[
        Q
    \eqdef
        \operatorname{ReLU}\big(\log(P^{\alpha}_t)\big)
.
\]
This concludes our proof.
\end{proof}

\begin{proof}[{Proof of Proposition~\ref{prop:invertibile_pertrubations_rowstochmatrix}}]
By Lemma~\ref{lem:invertibile_pertrubations_rowstochmatrix}, the matrix $P_t^{\alpha}$ is (row) stochastic and invertible.  Thus, it has no zero-eigenvalues.  If, moreover, the assumption holds that $P_t^{\alpha}$ has no negative eigenvalues then~\citep[Theorem VII.1.10]{DunfordSchwartzGeneralTheoryLinOperators_1958} guarantees that the (principle branch) of the matrix logarithm of $P_t^{\alpha}$ exists.  Consequentially,~\citep[Lemma 1]{davies2010embeddable} applies from which we deduce that the rows of $\log(P_t^{\alpha})$ sum to $0$.
The last claim follows directly from Lemma~\ref{lem:sufficient_for_logtohaveprinciplebranch} (i).
\end{proof}

\subsection{Proof of the Stability Guarantee\texorpdfstring{ in Proposition~\ref{prop:pertrubation_bound_KL}}{}}

The following generalizes, thus implies, Proposition~\ref{prop:pertrubation_bound_KL}.
\begin{lemma}[{Maximal KL Divergence for Perturbation in Lemma~\ref{lem:invertibile_pertrubations_rowstochmatrix}}]
\label{lem:pertrubation_bound_KL}
Let $\pi \in \Delta_N$, $\alpha \in [0,1)$, $i=1,\dots,N$, and let for each $i=1,\dots,N$ let $\pi^{\alpha,i}=(1-\alpha)\pi + \alpha \, e_i$ where $\{e_i\}_{i=1}^N$ is the standard basis of $\mathbb{R}^N$.  If $p_{\operatorname{min}}\eqdef \min_{i=1,\dots,N}\, p_i>0$ then
\[
        \max_{i=1,\dots,N}
        \,
        \operatorname{KL}(\pi|\pi^{\alpha,i})
    \le 
        2\alpha
        \,
        \Big(
            -\frac{
                \log(p_{\operatorname{min}})
            }{
                1/p_{\operatorname{min}} - 1
            }
            -
            \frac{
                \log((1-\alpha)\,p_{\operatorname{min}})
            }{
                1/((1-\alpha)\,p_{\operatorname{min}}) - 1
            }
        \Big)
.
\]
\end{lemma}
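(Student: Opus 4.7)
The plan is to compute $\operatorname{KL}(\pi|\pi^{\alpha,i})$ in closed form as a function of the single coordinate $\pi_i$, establish monotonicity in $\pi_i$ so the max over $i$ reduces to the index attaining $p_{\min}$, and then bound the resulting one-variable expression by the claimed right-hand side via elementary $\log$ inequalities.

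First I would use that $\pi^{\alpha,i}_j = (1-\alpha)\pi_j$ for $j \neq i$ and $\pi^{\alpha,i}_i = (1-\alpha)\pi_i + \alpha$, so splitting the defining sum and simplifying gives the clean expression
\begin{equation*}
\operatorname{KL}(\pi|\pi^{\alpha,i})
\;=\; -(1-\pi_i)\log(1-\alpha) + \pi_i\log\frac{\pi_i}{(1-\alpha)\pi_i + \alpha}
\;=\; -\log(1-\alpha) - \pi_i \log\!\Bigl(1+\tfrac{\alpha}{(1-\alpha)\pi_i}\Bigr).
\end{equation*}
Next, differentiating with respect to $\pi_i$ and writing $u = \alpha/((1-\alpha)\pi_i)$, so that $du/d\pi_i = -u/\pi_i$, the derivative equals $\frac{u}{1+u} - \log(1+u)$. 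Since $\log(1+u) \geq u/(1+u)$ for $u \geq 0$ (equality at $u=0$, and the difference has non-negative derivative $u/(1+u)^2$), this derivative is non-positive. Hence $\operatorname{KL}(\pi|\pi^{\alpha,i})$ is monotonically decreasing in $\pi_i$ on $(0,1]$, so $\max_i \operatorname{KL}(\pi|\pi^{\alpha,i})$ is attained at the index $i_\star$ realizing $\pi_{i_\star} = p_{\min}$.

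Setting $p = p_{\min}$, I would then rewrite the remaining expression as
\begin{equation*}
\operatorname{KL}(\pi|\pi^{\alpha,i_\star})
= -(1-p)\log(1-\alpha) + p\log p - p\log\bigl((1-\alpha)p + \alpha\bigr),
\end{equation*}
and bound it in terms of $\phi(x) \eqdef -\log(x)/(1/x - 1) = -x\log(x)/(1-x)$. The key observation is the identity $-\log x = \phi(x)\,(1/x - 1)$, which lets us convert $-\log p_{\min}$ and $-\log((1-\alpha)p_{\min})$ directly into multiples of $\phi(p_{\min})$ and $\phi((1-\alpha)p_{\min})$. By further writing $(1-\alpha)p + \alpha$ as $(1-\alpha)p \cdot\bigl(1 + \alpha/((1-\alpha)p)\bigr)$, the term $-p\log((1-\alpha)p + \alpha)$ splits into a contribution $-p\log((1-\alpha)p)$ that pairs with $p\log p$ to produce $p\log(1/(1-\alpha))$ cancelling against $-(1-p)\log(1-\alpha)$ up to a factor of $\alpha$, and a remainder that is bounded via the same $-\log x \leq (1-x)/x$ inequality.

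The hardest part will be the final bookkeeping step that assembles the two $\phi$-contributions with exactly the constant $2\alpha$: one needs a careful accounting of how $-\log(1-\alpha)$ and $-\log((1-\alpha)p + \alpha)$ split into pieces proportional to $-\log p_{\min}$ and $-\log((1-\alpha)p_{\min})$, then absorb the resulting $(1/x - 1)$ factors into $\phi(\cdot)$ so that the total linear coefficient in $\alpha$ equals $2$. Once this algebraic split is in place, each of the two resulting terms is controlled by a single invocation of the inequality $-\log x \leq (1-x)/x$, producing the claimed upper bound.
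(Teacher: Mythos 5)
Your first two steps are correct and take a genuinely different route from the paper: you compute $\operatorname{KL}(\pi|\pi^{\alpha,i})$ in closed form, namely $-\log(1-\alpha)-\pi_i\log\bigl(1+\tfrac{\alpha}{(1-\alpha)\pi_i}\bigr)$, and the monotonicity argument (via $\log(1+u)\ge u/(1+u)$) correctly reduces the maximum over $i$ to the coordinate attaining $p_{\min}$. The paper never computes the divergence exactly; it instead invokes the reverse Pinsker inequality of Binette to bound $\operatorname{KL}(\pi|\pi^{\alpha,i})$ by the total variation distance times the two factors of the form $-\log(x)/(1/x-1)$, then bounds the total variation term by $2\alpha$ and substitutes $(1-\alpha)p_{\min}$ for the minimum entry of $\pi^{\alpha,i}$.

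The genuine gap is exactly where you flag ``the hardest part'': the final bookkeeping that is supposed to assemble the right-hand side with the constant $2\alpha$ is not carried out, and it cannot be carried out as described, because the inequality you would need is false for $\alpha$ near $1$. After your reduction the claim becomes $h(\alpha,p)\eqdef-\log(1-\alpha)-p\log\bigl(1+\tfrac{\alpha}{(1-\alpha)p}\bigr)\le 2\alpha\bigl(\phi(p)+\phi((1-\alpha)p)\bigr)$ with $p=p_{\min}$ and $\phi(x)=-\log(x)/(1/x-1)$. As $\alpha\to 1$ the left side diverges like $-(1-p)\log(1-\alpha)$, while the right side stays bounded since $\phi((1-\alpha)p)\to 0$; concretely, for $N=2$, $\pi=(1/2,1/2)$, $\alpha=0.98$ one gets $h\approx 1.61$ against a right-hand side of $\approx 1.45$. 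Hence no sequence of applications of $-\log x\le (1-x)/x$ can close your bookkeeping step uniformly over $\alpha\in[0,1)$; at best your route recovers the bound in the small-$\alpha$ regime, where your exact formula in fact shows the divergence is $O(\alpha^2)$. (Your computation is sharp enough that it also exposes a problem with the bound itself, and with the paper's step of inserting the lower bound $(1-\alpha)p_{\min}$ into the increasing map $x\mapsto\log(1/x)/(1/x-1)$; but as a proof of the lemma as stated, your plan cannot be completed.)
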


\begin{proof}[{Proof of Lemma~\ref{lem:pertrubation_bound_KL}}]
By the (sharp) reverse Pinsker inequality in~\citep[Theorem 1]{binette2019note}, as formulated in~\citep[Example A]{binette2019note}, yields the bound
\allowdisplaybreaks
\begin{align}
\label{eq:ReversePinsker}
    \operatorname{KL}(\pi|\pi^{\alpha,i})
    \le 
    \operatorname{TV}(\pi|\pi^{\alpha,i})
    \, 
            \Big(
            \frac{
                \log(1/p_{\operatorname{min}})
            }{
                1/p_{\operatorname{min}} - 1
            }
            +
            \frac{
                \log(1/p_{\operatorname{min}}^{\alpha,i})
            }{
                1/p_{\operatorname{min}}^{\alpha,i} - 1
            }
        \Big)
\end{align}
where $\operatorname{TV}$ is the total variation distance between $\pi$ and $\pi^{\alpha,i}$, and $p_{\operatorname{min}}^{\alpha,i}=\min_{i=1,\dots,N}\, \pi^{\alpha,i}$.  By construction
\[
    (1-\alpha)p_{\operatorname{min}}
    \le 
    p_{\operatorname{min}}^{\alpha,i}
    \le 
    (1-\alpha) p_{\operatorname{min}} + \alpha 
.
\]
Whence, 
\begin{align}
\label{eq_baby_bound_A}
    \frac1{p_{\operatorname{min}}^{\alpha,i}}
\le 
    \frac1{
        (1-\alpha)p_{\operatorname{min}}
    }
\mbox{ and }
    \frac1{
        1/p_{\operatorname{min}}^{\alpha,i}
            -1
        }
\le 
    \frac1{
        1/((1-\alpha)p_{\operatorname{min}} + \alpha)
        -1
    }
.
\end{align}
Incorporating~\eqref{eq_baby_bound_A} into~\eqref{eq:ReversePinsker} yields
\allowdisplaybreaks
\begin{align}
\nonumber
        \operatorname{KL}(\pi|\pi^{\alpha,i})
    \le &
        \operatorname{TV}(\pi,\pi^{\alpha,i})
        \,
        \Big(
            -\frac{
                \log(p_{\operatorname{min}})
            }{
                1/p_{\operatorname{min}} - 1
            }
            -
            \frac{
                \log((1-\alpha)\,p_{\operatorname{min}})
            }{
                1/((1-\alpha)\,p_{\operatorname{min}}) - 1
            }
        \Big)
\\
\nonumber
    = & 
        \sum_{j=1}^N\,
            \big|
                \pi_j-\pi^{\alpha,i}_j
            \big|
        \,
        \Big(
            -\frac{
                \log(p_{\operatorname{min}})
            }{
                1/p_{\operatorname{min}} - 1
            }
            -
            \frac{
                \log((1-\alpha)\,p_{\operatorname{min}})
            }{
                1/((1-\alpha)\,p_{\operatorname{min}}) - 1
            }
        \Big)
\\
\nonumber
    = & 
        \sum_{j=1}^N\,
            \big|
                \alpha\pi_j+\alpha I_{j=i}
            \big|
        \,
        \Big(
            -\frac{
                \log(p_{\operatorname{min}})
            }{
                1/p_{\operatorname{min}} - 1
            }
            -
            \frac{
                \log((1-\alpha)\,p_{\operatorname{min}})
            }{
                1/((1-\alpha)\,p_{\operatorname{min}}) - 1
            }
        \Big)
\\
\nonumber
    \le & 
        \Big(
            \sum_{j=1}^N\,
                \big|
                    \alpha\pi_j
                \big|
            +
                \alpha 
        \Big)
        \,
        \Big(
            -\frac{
                \log(p_{\operatorname{min}})
            }{
                1/p_{\operatorname{min}} - 1
            }
            -
            \frac{
                \log((1-\alpha)\,p_{\operatorname{min}})
            }{
                1/((1-\alpha)\,p_{\operatorname{min}}) - 1
            }
        \Big)
\\
\nonumber
    \le & 
        \Big(
            \alpha\,
            \sum_{j=1}^N\,
                \big|
                    \pi_j
                \big|
            +
                \alpha 
        \Big)
        \,
        \Big(
            -\frac{
                \log(p_{\operatorname{min}})
            }{
                1/p_{\operatorname{min}} - 1
            }
            -
            \frac{
                \log((1-\alpha)\,p_{\operatorname{min}})
            }{
                1/((1-\alpha)\,p_{\operatorname{min}}) - 1
            }
        \Big)
\\
\label{eq:simplex}
    \le & 
        \Big(
            \alpha\,
            \sum_{j=1}^N\,
                    \pi_j
            +
                \alpha 
        \Big)
        \,
        \Big(
            -\frac{
                \log(p_{\operatorname{min}})
            }{
                1/p_{\operatorname{min}} - 1
            }
            -
            \frac{
                \log((1-\alpha)\,p_{\operatorname{min}})
            }{
                1/((1-\alpha)\,p_{\operatorname{min}}) - 1
            }
        \Big)
\\
\label{eq:simplexII}
    \le & 
        \big(
                \alpha\,
            +
              \alpha 
        \big)
        \,
        \Big(
            -\frac{
                \log(p_{\operatorname{min}})
            }{
                1/p_{\operatorname{min}} - 1
            }
            -
            \frac{
                \log((1-\alpha)\,p_{\operatorname{min}})
            }{
                1/((1-\alpha)\,p_{\operatorname{min}}) - 1
            }
        \Big)
\\
\nonumber
    = & 
        2\alpha
        \,
        \Big(
            -\frac{
                \log(p_{\operatorname{min}})
            }{
                1/p_{\operatorname{min}} - 1
            }
            -
            \frac{
                \log((1-\alpha)\,p_{\operatorname{min}})
            }{
                1/((1-\alpha)\,p_{\operatorname{min}}) - 1
            }
        \Big)
.
\end{align}
where~\eqref{eq:simplex} held since $\pi\in \Delta_N$ and therefore, $\pi_i\ge 0$ for each $i=1,\dots,N$, and~\eqref{eq:simplexII} held since $\sum_{i=1}^N\, \pi_i=1$ again due to the fact that $\pi \in \Delta_N$.
\end{proof}\clearpage
\section{Datasets and Benchmarks}\label{app:sec:datasets}

\subsection{NIFTY Dataset}\label{app:nifty-dataset}

The \textbf{N}ews-\textbf{I}nformed \textbf{F}inancial \textbf{T}rend \textbf{Y}ield (NIFTY) dataset~\cite{fin-dataset_saqur2024nifty} is a processed and curated daily news headlines dataset for the stock (US Equities) market price movement prediction task. NIFTY is comprised of two related datasets, \href{https://huggingface.co/datasets/raeidsaqur/NIFTY}{NIFTY-LM} and \href{https://huggingface.co/datasets/raeidsaqur/nifty-rl}{NIFTY-RL}. In this section we outline the composition of the two datasets, and comment on additional details.

\paragraph{ Dataset statistics }
Table~\ref{table:NIFTY-stats} and Table~\ref{table:NIFTY-date-ranges} present pertinent statistics related to the dataset.

\setlength{\tabcolsep}{2pt} 
\renewcommand{\arraystretch}{1.2}
\begin{table}[ht]
\centering
\begin{minipage}[t]{0.48\textwidth}
    \centering
    \caption{Statistics and breakdown of splits sizes}
    \label{table:NIFTY-stats}
    \vspace{0.5em}
    \begin{adjustbox}{width=\textwidth}
    \begin{tabular}{lc}
    \toprule
    Category & Statistics \\
    \midrule
    Number of data points & 2111 \\
    Number of Rise/Fall/Neutral label & 558 / 433 / 1122 \\
    Train/Test/Evaluation split & 1477 / 317 / 317 \\
    \bottomrule
    \end{tabular}
    \end{adjustbox}
\end{minipage}%
\hfill
\begin{minipage}[t]{0.48\textwidth}
    \centering
    \caption{Date Ranges of news headlines in splits}
    \label{table:NIFTY-date-ranges}
    \vspace{0.5em}
    \begin{adjustbox}{width=\textwidth}
    \begin{tabular}{lcc}
    \toprule
    Split & Num. Samples & Date range \\
    \midrule
    Train & 1477 & 2010-01-06 to 2017-06-27 \\
    Valid & 317 & 2017-06-28 to 2019-02-12 \\
    Test & 317 & 2019-02-13 to 2020-09-21 \\
    \bottomrule
    \end{tabular}
    \end{adjustbox}
\end{minipage}
\end{table}

\begin{figure}[ht]
\centering
    \begin{subfigure}{\columnwidth}
      \centering
      \includegraphics[width=.75\linewidth]{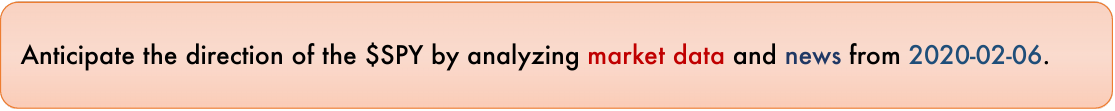}
      \caption{Instruction component of a $\pi_{LM}$ policy query $x_q$.}
      \label{fig:nifty-question}
    \end{subfigure}
\vskip 0.1in
    \begin{subfigure}{\columnwidth}
      \centering
      \includegraphics[width=0.75\linewidth]{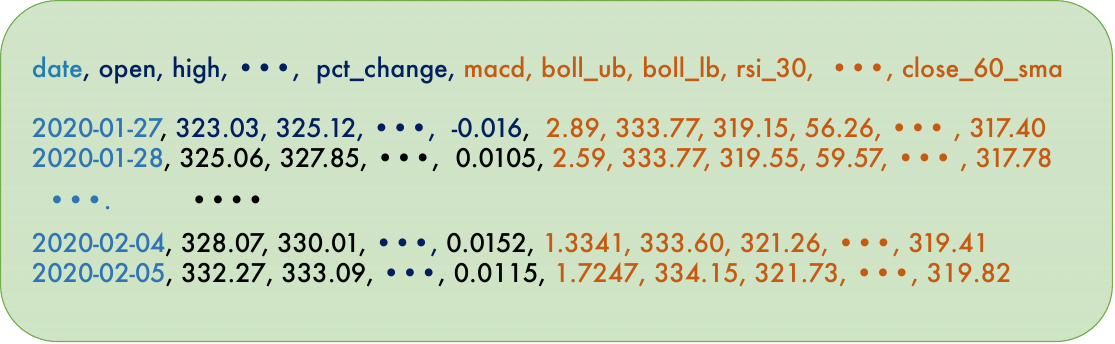}
      \caption{The market's \textbf{history} is provided as the past $t$ days of numerical statistics like the (OHLCV) price (in blue) and common technical indicators (in orange) (e.g. moving averages) data.}
      \label{fig:nifty-context}
    \end{subfigure}
\caption{Breaking down the instruction or prompt prefix, and market context components of a prompt, $x_p$.}
\label{fig:nifty-query}
\vskip -0.2in
\end{figure}

\subsubsection{NIFTY-LM: SFT Fine-tuning Dataset}

The NIFTY-LM prompt dataset was created to finetune and evaluate LLMs on predicting future stock movement given previous market data and news headlines. 
The dataset was assembled by aggregating information from three distinct sources from January 6, 2010, to September 21, 2020. The compilation includes headlines from The \textbf{Wall Street Journal} and \textbf{Reuters News}, as well as market data of the \$SPY index from \textbf{Yahoo Finance}. The NIFTY-LM dataset consists of:

\setitemize{label=\textbullet, leftmargin=*}
\begin{itemize}
    \item \textbf{Meta data}: Dates and data ID.
    \item \textbf{Prompt ($x_p$)}: LLM question ($x_{question}$), market data from previous days ($x_{context}$), and news headlines ($x_{news}$).
    \item \textbf{Response}: Qualitative movement label ($x_r$) $\in{} \{Rise, Fall, Neutral\}$, and percentage change of the closing price of the \$SPY index.
\end{itemize}

To generate LLM questions, ($\boldsymbol{x_{question}}$), the authors used the self-instruct \cite{wang2023selfinstruct} framework and OpenAI GPT4 to create 20 synthetic variations of the instruction below:

\begin{quote}
Create 20 variations of the instruction below. \newline
Examine the given market information and news headlines data on DATE to forecast whether the \$SPY index will rise, fall, or remain unchanged. If you think the movement will be less than 0.5\%, then return 'Neutral'. Respond with Rise, Fall, or Neutral and your reasoning in a new paragraph.
\end{quote}

Where \texttt{DATE} would be substituted later, during the training phase with a corresponding date.

\paragraph{Context} The key `context' ($\boldsymbol{x_{context}}$) was constructed to have newline delimited market metrics over the past T ($\approx$ 10) days (N.B. Not all market data for the past days for were available and therefore prompts might have less than 10 days of market metrics.). 

Table~\ref{tab:dataset_columns} show the details of financial context provided in each day's sample.
\begin{table}[h]
\centering
\caption{Summary of the dataset columns with their respective descriptions.}
\label{tab:dataset_columns}
\begin{adjustbox}{width=\columnwidth,center}
\begin{tabular}{@{}ll@{}}
\toprule
\textbf{Column Name}                  & \textbf{Description}                                                     \\ \midrule
Date                                  & Date of the trading session                                              \\
Opening Price                         & Stock's opening market price                                             \\
Daily High                            & Highest trading price of the day                                         \\
Daily Low                             & Lowest trading price of the day                                          \\
Closing Price                         & Stock's closing market price                                             \\
Adjusted Closing Price                & Closing price adjusted for splits and dividends                         \\
Volume                                & Total shares traded during the day                                       \\
Percentage Change                     & Day-over-day percentage change in closing price                         \\
MACD                                  & Momentum indicator showing the relationship between two moving averages \\
Bollinger Upper Band                  & Upper boundary of the Bollinger Bands, set at two standard deviations above the average \\
Bollinger Lower Band                  & Lower boundary, set at two standard deviations below the average         \\
30-Day RSI                            & Momentum oscillator measuring speed and change of price movements       \\
30-Day CCI                            & Indicator identifying cyclical trends over 30 days                      \\
30-Day DX                             & Indicates the strength of price trends over 30 days                     \\
30-Day SMA                            & Average closing price over the past 30 days                             \\
60-Day SMA                            & Average closing price over the past 60 days                             \\ \bottomrule
\end{tabular}
\end{adjustbox}
\end{table}

\paragraph{News Headlines} $\boldsymbol{(x_{news})}$: Final list of filtered headlines from the aggregation pipeline. The non-finance related headlines were filtered out by performing a similarity search with SBERT model, "all-MiniLM-L6-v2" \cite{reimers2019sentencebert}.  Each headline was compared to a set of artificially generated financial headlines generated by GPT-4, with the prompt \textit{"Generate 20 financial news headlines"}. Headlines with a similarity score below 0.2, were excluded from the dataset. 
To respect the prompting `context length' of LLMs, in instances where the prompt exceeded a length of 3000 words, a further refinement process was employed. This process involved the elimination of words with a tf-idf \cite{tfidf} score below 0.2 and truncating the prompt to a maximum of 3000 words.

It is also important to note that the dataset does not encompass all calendar dates within the specified time range. This limitation emanates from the trading calendar days, and absence of relevant financial news headlines for certain dates. 

\paragraph{Label} $\boldsymbol{(x_{r})}$: The label is determined by the percentage change in closing prices from one day to the next, as defined in equation \ref{eq:PCT}. This percentage change is categorized into three labels: \{Rise, Fall, Neutral\}, based on the thresholds specified in equation \ref{eq:label}.

\begin{equation}
    PCT_{\text{change}} = \left( \frac{\text{Closing Price}_{t} - \text{Closing Price}_{t-1}}{\text{Closing Price}_{t-1}} \right) \times 100\%
\label{eq:PCT}
\end{equation}

\begin{equation}
    x_r = 
    \begin{cases}
        \text{Fall} & \text{if } PCT_{\text{change}} < -0.5\% \\
        \text{Neutral} & \text{if } -0.5\% \leq PCT_{\text{change}} \leq 0.5\% \\
        \text{Rise} & \text{if } PCT_{\text{change}} > 0.5\%
    \end{cases}
\label{eq:label}
\end{equation}

\subsection{NIFTY-RL: Preferences Dataset}
The preference dataset is a variation of the fine-tuning dataset and it is designed for alignment training of LLMs using reward model. In NIFTY-RL, labels are omitted and replaced with chosen and rejected results. The chosen result is a label corresponding to a rise, a fall or neutral movement in the stock market and is equivalent to the response in NIFTY-LM. The rejected result is a random label not equal to the chosen label.

\begin{itemize}
    \item \textbf{Metadata}: Includes dates and data identifiers.
    \item \textbf{Prompt ($x_p$)}: Includes an LLM instruction ($x_{question}$), preceding market data ($x_{context}$), and relevant news headlines ($x_{news}$).
    \item \textbf{Chosen Result}: A qualitative movement label ($x_r$) from $\{Rise, Fall, Neutral\}$ indicating the predicted market trend.
    \item \textbf{Rejected Result}: A label ($\overline{x}_r$) randomly selected from $\{Rise, Fall, Neutral, Surrender\} \setminus \{x_r\}$, representing an incorrect market prediction.
    
\end{itemize}

\subsection{FLARE Benchmark Datasets}\label{app:flare_datasets}
\paragraph{Stock Movement Prediction Datasets and Tasks: Flare-SM tasks}
\label{app:flare_datasets_sm}
\textbf{FLARE} proposed by \cite{finma-flare-fit_xie2023pixiu}, extends to include one financial prediction task -- the \textbf{CIKM} dataset~\cite{fin-dataset_cikm_wu2018hybrid} as an evaluation task among (four) other general financial NLP tasks. Under the hood, this benchmark is a fork of the `\textit{lm-eval}` harness~\cite{eval-harness} with addendums. 
Other stock price movement prediction from social dataset include what is referred to as~\emph{ACL18} (or, `acl18') in this paper is essentially the \textbf{StockNet}~\cite{fin-dataset_acl18_stocknet_xu2018stock} dataset which comprises of stock tweets of 88 stock tickers from 9 financial market industries from Twitter over two years (from 2014-2015) aligned with their corresponding historical price data. \textbf{BigData22}~\citep{fin-dataset_bigdata22_soun2022accurate} is another more recent tweets dataset comprising of tweets about 50 stock tickers during the period 2019-07-05 to 2020-06-30.

\begin{table}[h]
\caption{Summary of Flare stock price movement datasets. The `Stocks' column indicates the total number of different stock tickers referenced. The `Tweets' and `Days' columns represent the number of tweets and days respectively in each dataset.}
\label{table:flare_sm_datasets}
\small
\centering
\setlength{\tabcolsep}{3pt} 
\renewcommand{\arraystretch}{1.2}
\begin{tabular}{l|c|c|c|c|c}
    \hline 
    Data                         & Stocks   & Tweets  & Days & Start Date & End Date \\
    \hline 
    \href{https://github.com/yumoxu/stocknet-dataset}{ACL18}
     & 87       & 106,271 & 696 & 2014-01-02 & 2015-12-30     \\
    \href{https://github.com/stocktweet/stock-tweet}{BigData22} & 50       & 272,762 & 362 & 2019-07-05 & 2020-06-30 \\
    \href{https://github.com/wuhuizhe/CHRNN}{CIKM18}    & 38       & 955,788 & 352 & 2017-01-03 & 2017-12-28    \\
    \hline
\end{tabular}
\end{table}
\clearpage
\section{Additional Background Material}
In an effort to keep our paper as self-contained as possible, this section contains additional background material used in our derivations and in the formulations of our technical results.  
\subsection{Matrix Logarithms}
\label{app:s:MatLog}
The (principal) logarithm of an $N\times N$ matrix $A$ whose spectrum does not contain $(-\infty,0]$ in $\mathbb{C}$ is defined by
\[
\log(A)\eqdef \frac1{2\pi i} \, \int_{\gamma}\, \log(z)\, (zI_N - A)^{-1}\,dz
\]
where $\log(z)$ is the principal logarithm of $z$ (in the complex plane) and $\gamma$ is a closed curve in $\mathbb{C}\setminus (-\infty,0]$ containing the eigenspectrum of $A$. 
\subsection{The Shirayev-Wonham Filter}
\label{app:s:ShirayevWonhamFilter_Background}
To keep our paper as self-contained as possible, we included some brief background on \textit{stochastic filtering}.  Namely, this appendix contains background material on the Shirayev-Wonham (stochastic) filter, studied in~\cite[Chapter 9]{LipShiryaev_I_2001}.

Consider a complete probability space $(\Omega, \mathcal{F}, P)$ equipped with a non-decreasing sequence of right-continuous sub-$\sigma$-algebras $\mathcal{F}_t, 0 \leq t \leq T$. Let $\theta = (\theta_t, \mathcal{F}_t)$, $0 \leq t \leq T$, denote a real right-continuous Markov process taking values in the countable set $E = \{\alpha, \beta, \gamma, \ldots\}$. Additionally, let $W = (W_t, \mathcal{F}_t)$, $0 \leq t \leq T$, be a standard Wiener process independent of $\theta$, and let $\xi_0$ be a $\mathcal{F}_0$-measurable random variable independent of $\theta$. We assume the existence of nonanticipative functionals $A_t(\epsilon, x)$ and $B_t(x)$ that define
\begin{align}
    d\xi_t = A_t(\theta_t, \xi) dt + B_t(\xi)dW_t
\end{align}
and satisfy the following conditions.
\begin{gather}
    A^2_t(\epsilon_t, x) \leq L_1 \int_0^t (1 + x_s^2)dK(s) + L_2(1 + \epsilon^2_t + x^2_t), 
    \label{id:LS1}
\\
    0 < C \leq B^2_t(x) \leq L_1 \int_0^t (1 + x_s^2)dK(s) + L_2(1 + x^2_t),  
    \label{id:LS2}
\\
    |A_t(\epsilon_t, x) - A_t(\epsilon_t, y)|^2 + |B_t(x) - B_t(y)|^2
    \leq L_1 \int_0^t (x_s - y_s)^2 dK(s) + L_2(x_t - y_t)^2, 
    \label{id:LS3}
\end{gather}
where $C, L_1, L_2$ are certain constants, $K(s)$ is a non-decreasing right continuous function, $0 \leq K(s) \leq 1, x \in C_T, y \in C_T, \epsilon_t \in E, 0 \leq t \leq T$.

Along with \Cref{id:LS1,id:LS2,id:LS3} it will also be assumed that
\begin{align}
    M\xi^2_0 < \infty,
    \label{id:LS4}
\end{align}
and
\begin{align}
    M \int_0^T \theta^2_t dt < \infty. 
    \label{id:LS5}
\end{align}

Define 
\begin{align*}
    p_\beta(t) &\eqdef P(\theta_t = \beta), \\
    p_{\beta\alpha}(t, s) &\eqdef P(\theta_t = \beta|\theta_s = \alpha), \quad 0 \leq s < t \leq T, \quad \beta, \alpha \in E,
\end{align*}
and assume there exist a function $\lambda_{\alpha\beta}(t), 0 \leq t \leq T, \alpha, \beta \in E$, that is 
\begin{gather}
    \text{continuous over \(t\),  (uniformly over \(\alpha, \beta\))}
    \label{id:LS6}
    \\
    |\lambda_{\alpha\beta}(t)| \leq K 
    \label{id:LS7}
    \\
    |p_{\beta\alpha}(t + \Delta, t) - \delta(\beta, \alpha) - \lambda_{\alpha\beta}(t) \cdot \Delta| \leq o(\Delta),
    \label{id:LS8}
\end{gather}
where \(\delta(\beta, \alpha)\) is a Kronecker's symbol and the value \(o(\Delta)/\Delta \rightarrow 0 \) as \(\Delta \rightarrow 0\) (uniformly over \(\alpha, \beta\)).  

Let the \Cref{id:LS1,id:LS2,id:LS3,id:LS4,id:LS5,id:LS6,id:LS7,id:LS8} be fulfilled. Then the a posteriori probability $\pi_\beta(t)$, $\beta \in \mathcal{E}$, satisfies a system of equations
\begin{align*}
\pi_\beta(t) = p_\beta(0) + \int_0^t \mathcal{L}^*\pi_\beta(u)du + \int_0^t \pi_\beta(u)\frac{A_u(\beta, \xi) - \bar{A}_u(\xi)}{B_u(\xi)} d\overline{W}_u,
\end{align*}
where
\begin{align*}
\mathcal{L}^*\pi_\beta(u) = \sum_{\gamma \in \mathcal{E}} \lambda_{\gamma\beta}(u)\pi_\gamma(u) 
\quad\mbox{and}\quad
\bar{A}_u(\xi) = \sum_{\gamma \in \mathcal{E}} A_u(\gamma, \xi)\pi_\gamma(u),
\end{align*}
and $ \overline{W} = (\overline{W}_t, \mathcal{F}_t) $ is a Wiener process with
\begin{align*}
    \overline{W}_t = \int_0^t \frac{d\xi_u - \bar{A}_u(\xi)}{B_u(\xi)}
    \,
    du
    .
\end{align*}

\clearpage

\section{Additional Discussions}\label{app:sec:discussions}
We have updated the anonymous git \href{https://github.com/raeidsaqur/moe-f}{repo} with all the experiment results and added script to easily replicate the main results of the paper in Table~\ref{tab:moe-llms-nifty-results}. 

\subsection{F1-Measure in FMM Task}
For evaluation of model performances on the financial market movement (FMM) task using ternary class labels, we use the \href{https://scikit-learn.org/1.3/modules/model_evaluation.html}{scikit-learn} library methods. For multi-class, $n$, labels (with $n > 2$), the choice of \textit{averaging} is important. The tabulated results were evaluated with default averaging set to ``\textit{weighted}'' -- where metrics for each label were calculated, then average weighted by their corresponding support (the number of true instances for each label). This alters the global averaging `macro' (equal weight) for label imbalance. Imbalanced label support can result in an F-score that is lower or not in between precision and recall.

\begin{figure}[htbp]
    \centering
    \begin{subfigure}[b]{\textwidth}
        \centering
        \includegraphics[width=\textwidth]{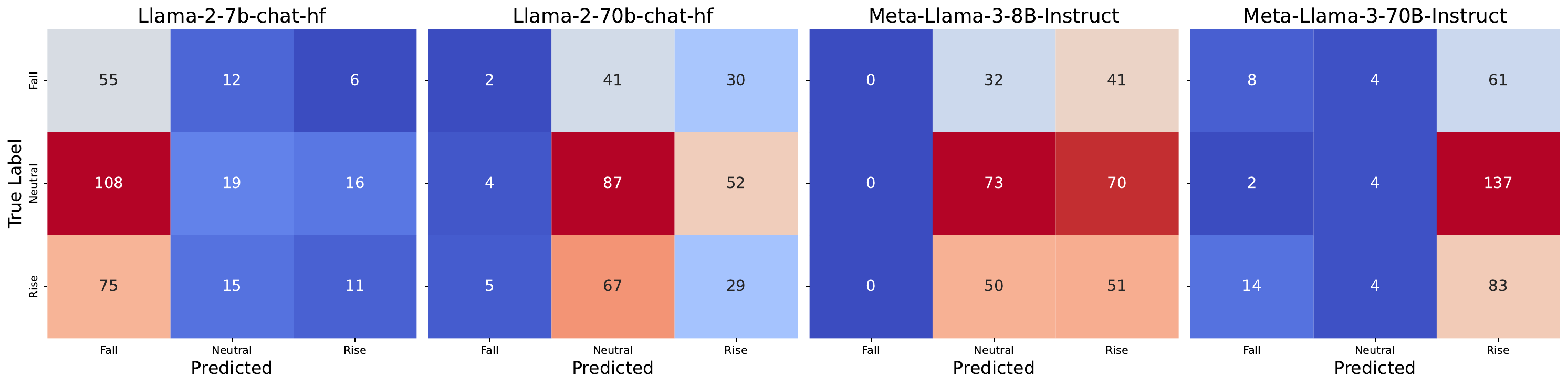}
        \caption{Llama-class models.}
        \label{fig:llama-confusion-matrices}
    \end{subfigure}
    \vspace{1em}
    \begin{subfigure}[b]{0.78\textwidth}
        \centering
        \includegraphics[width=\textwidth]{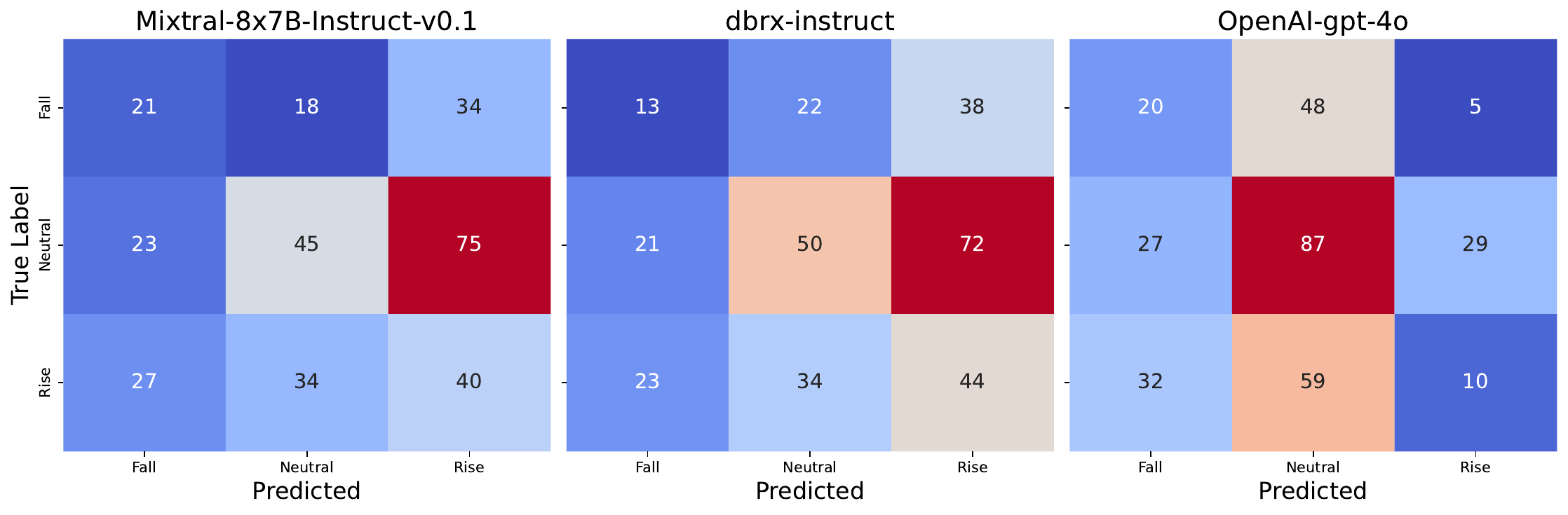}
        \caption{Mixture-of-experts class models and the state-of-the-art GPT-4 model.}
        \label{fig:moe-gpt-confusion-matrices}
    \end{subfigure}
    \caption{Confusion matrices for Table~\ref{tab:moe-llms-nifty-results}. The first row highlights the Llama-class models, and the second row focuses on mixture-of-experts and GPT-4 models.}
    \label{fig:confusion-matrices_table2}
\end{figure}



\subsection{Time-Series Forecasting Experiments: Additional Details}

This section provides additional discussion in support of the long-horizon time-series forecasting (LTSF) experiment covered in \S\ref{ssec:experiments-tsf}. 

\subsubsection{Forecasting granularity in Time-Series Forcasting: IMS vs. DMS}

In LTSF works, the decoding granularity is dichotomized in the following two categories:

\paragraph{I) Iterated/Incremental Multi-step (IMS)}: This is auto-regressive language-modeling or generative style prediction decoding where each step in prediction horizon $H$ is iteratively predicted: $\hat{x}_{t+1}$, and is used for the prediction for the subsequent time-step. The common limitations of such forecasting granularity is `error accumulation' over time as the decoder builds on the error from previous steps while iteratively making subsequent predictions. Additionally, the run-time complexity is to the order of the length of the horizon $H$.


\paragraph{II) Direct Multi-step (DMS)}: 
As the name implies, in this decoding or forecasting approach, the entire forecasting horizon $H$ is predicted at one go: $\hat{x}_{{t+1}:{t+H}}$. While computationally more attractive (much faster than IMS), this approach may not incorporate seasonality/periodicity in the time-series. Modern TSF specialist models, especially the transformer-based TSF architectures, tend to follow this scheme to avoid the quadratic cost (to the length of input) associated with attention.

\subsubsection{Channel Independent Strategy}
Recent advancements in Long-Term Series Forecasting (LTSF) have increasingly embraced a \textbf{Channel Independent (CI) approach} for handling multivariate time series data \cite{tsf_han2024capacity}. The CI strategy simplifies forecasting by isolating each (channel or feature as) univariate time series within the dataset, allowing the model to focus on predicting individual channels independently. Unlike traditional methods that leverage the entire multivariate historical data to make forecasts, the CI approach seeks a shared function 
\(\smash{\cramped{f : x^{(i)}_{t-L+1:t} \in \mathbb{R}^{L} \rightarrow \hat{x}^{(i)}_{t+1:t+H} \in \mathbb{R}^{H}}}\)
for each univariate series, providing a streamlined model for each channel and reducing the need to account for inter-channel dependencies.

\subsubsection{Our Setup}
For our experiments, the historical observation window (\textit{aka.} \textit{look-back window} or \textit{lag period}), $L$, is kept constant at 720 time-steps to be consistent and comparable with the literature. We follow the channel-independent strategy similar to the three expert models used for filtering - giving us $C$ number of distinct features or channels of an agent's observation. The (MSE) loss is then measured as the discrepancy between the predicted values $\bar{x}^{(i)}_{t+1:t+H}$ and the ground truth $y^{(i)}_{t+1:t+H}$ as
\begin{equation}\label{eqn:tsf_mse_loss}
    \mathcal{L} = \frac{1}{C} \sum_{i=1}^{C} \big\| y^{(i)}_{t+1:t+H} - \bar{x}^{(i)}_{t+1:t+H} \big\|_2^2
.
\end{equation}


\clearpage \newpage
\bibliographystyle{plain}
\bibliography{refs/main_moe-f_iclr25}   

\end{document}